
\newif\ifshort
\shortfalse


\documentclass[a4paper]{article}

\usepackage[english]{babel}
\usepackage[utf8]{inputenc}
\usepackage[bookmarks]{hyperref}
\usepackage{amsmath}
\usepackage{amsthm}
\usepackage{amssymb}
\usepackage{enumerate}
\usepackage{tikz}
\ifshort\usepackage{wrapfig}\fi
\usepackage[labelfont=bf]{caption}
\usepackage{bbm}
\sloppy\lineskip=0pt


\DeclareMathOperator*{\argmin}{arg\,min}
\DeclareMathOperator*{\argmax}{arg\,max}
\def\MDL{\mathrm{MDL}}          
\def\eps{\varepsilon}           
\def\epstr{\epsilon}            
\def\cinfty{\omega}             
\def\SetN{\mathbb{N}}           
\def\F{\mathcal{F}}             
\def\Foo{\mathcal{F}_\cinfty}   
\def\E{\mathbb{E}}              

\newenvironment{claims}
{\newcounter{ClaimCounter}}
{}
\newcommand{\newclaim}[1]{
\refstepcounter{ClaimCounter}
\emph{Claim \theClaimCounter: {#1}}
}


\usepackage{aliascnt}

\newcommand{\mynewtheorem}[2]{
	\newaliascnt{#1}{dummy}
	\newtheorem{#1}[#1]{#2}
	\aliascntresetthe{#1}
	\expandafter\def\csname #1autorefname\endcsname{#2}
}

\newcounter{dummyBackup}
\newcommand{\thmcountershouldbe}[1]{
	\setcounter{dummyBackup}{\thedummy}
	\setcounter{dummy}{#1}
}
\newcommand{\thmcounterbacktonormal}{
	\setcounter{dummy}{\thedummyBackup}
}

\theoremstyle{plain}
\mynewtheorem{theorem}{Theorem}
\mynewtheorem{lemma}{Lemma}
\mynewtheorem{corollary}{Corollary}
\theoremstyle{definition}
\mynewtheorem{definition}{Definition}
\mynewtheorem{remark}{Remark}

\AtBeginDocument{
	
}


\title{Indefinitely Oscillating Martingales}
\author{Jan Leike \and Marcus Hutter}
\date{\today}

\def\keywords{martingales, infinite oscillations, bounds, convergence rates,
  minimum description length, mind changes.}

\hypersetup{
	pdftitle={Indefinitely Oscillating Martingales},
	pdfauthor={Jan Leike and Marcus Hutter},
	pdfkeywords={\keywords},
	pdflang={en-US}
}

\begin{document}

\maketitle

\begin{abstract}
We construct a class of nonnegative martingale processes that oscillate indefinitely with high probability.
For these processes,
we state a uniform rate of the number of oscillations and
show that this rate is asymptotically close to the theoretical upper bound.
These bounds on probability and expectation of the number of upcrossings
are compared to classical bounds from the martingale literature.
We discuss two applications.
First, our results imply that the limit of the minimum description length operator may not exist.
Second, we give bounds on how often one can change one's belief in a given hypothesis when observing a stream of data.%
\ifshort%
\footnote{Omitted proofs can be found in the extended technical report~\cite{LH:14martoscx}.}
\else
\footnote{
This is the extended technical report.
The conference version can be found at \cite{LH:14martosc}.
}
\fi
\paragraph{Keywords.} \keywords
\end{abstract}

\section{Introduction}\label{sec:introduction}

Martingale processes model fair gambles
where knowledge of the past or choice of betting strategy have no impact on future winnings.
But their application is not restricted to gambles and stock markets.
Here we exploit the connection between nonnegative martingales and
probabilistic data streams, i.e., probability measures on infinite strings.
For two probability measures $P$ and $Q$ on infinite strings,
the quotient $Q/P$ is a nonnegative $P$-martingale.
Conversely, every nonnegative $P$-martingale is a multiple of $Q/P$
$P$-almost everywhere
for some probability measure $Q$.

One of the famous results of martingale theory is \hyperref[thm:upcrossing-inequality]{Doob's Upcrossing Inequality}~\cite{Doob:53}.
The inequality states that in expectation,
every nonnegative martingale has only finitely many oscillations
(called \emph{upcrossings} in the martingale literature).
Moreover, the bound on the expected number of oscillations is inversely proportional to their magnitude.
Closely related is \hyperref[thm:Dubins-inequality]{Dubins' Inequality}~\cite{Dubins:62}
which asserts that the probability of having many oscillations decreases exponentially with their number.
These bounds are given with respect to oscillations of fixed magnitude.

In \autoref{sec:indefinitely-oscillating-martingales}
we construct a class of nonnegative martingale processes
that have infinitely many oscillations of (by Doob necessarily) decreasing magnitude.
These martingales satisfy uniform lower bounds
on the probability and the expectation of the number of upcrossings.
We prove corresponding upper bounds in \autoref{sec:upper-bounds}
showing that these lower bounds are asymptotically tight.
Moreover, the construction of the martingales is agnostic regarding the underlying probability measure,
assuming only mild restrictions on it.
We compare these results to the statements of \hyperref[thm:Dubins-inequality]{Dubins' Inequality} and
\hyperref[thm:upcrossing-inequality]{Doob's Upcrossing Inequality} and
demonstrate that our process makes those inequalities asymptotically tight.
If we drop the uniformity requirement,
asymptotics arbitrarily close to Doob and Dubins' bounds are achievable.
We discuss two direct applications of these bounds.

The Minimum Description Length (MDL) principle~\cite{Rissanen:78} and
the closely related Minimal Message Length (MML) principle~\cite{Wallace:68}
recommend to select among a class of models the one that
has the shortest code length for the data plus code length for the model.
There are many variations, so the following statements are generic:
for a variety of problem classes
MDL's predictions have been shown to converge asymptotically (predictive convergence).
For continuous independently identically distributed data
the MDL estimator usually converges to the true distribution~\cite{Gruenwald:07book,Wallace:05}
(inductive consistency).
For arbitrary (non-i.i.d.) countable classes,
the MDL estimator's predictions converge to those of the true distribution
for single-step predictions~\cite{Hutter:05mdl2px}
and $\infty$-step predictions~\cite{Hutter:09mdltvp}.
Inductive consistency implies predictive convergence,
but not the other way around.
In \autoref{sec:mdl-application} we show that indeed,
the MDL estimator for countable classes is \emph{inductively inconsistent}.
This can be a major obstacle for using MDL for prediction,
since the model used for prediction has to be changed over and over again,
incurring the corresponding computational cost.

Another application of martingales is in the theory of mind changes~\cite{Luo:05}.
How likely is it that your belief in some hypothesis changes by at least $\alpha > 0$
several times while observing some evidence?
Davis recently showed~\cite{Davis:13} using elementary mathematics
that this probability decreases exponentially.
In \autoref{sec:bounds-on-mind-changes} we rephrase this problem in our setting:
the stochastic process
\[
P( \,\text{hypothesis} \mid \text{evidence up to time $t$}\, )
\]
is a martingale bounded between $0$ and $1$.
The upper bound on the probability of many changes can thus be
derived from \hyperref[thm:Dubins-inequality]{Dubins' Inequality}.
This yields a simpler alternative proof for Davis' result.
However, because we consider nonnegative but unbounded martingales,
we get a weaker bound than Davis.

\ifshort
Omitted proofs can be found in the extended technical report~\cite{LH:14martoscx}.
\fi

\section{Strings, Measures, and Martingales}
\label{sec:preliminaries}

We presuppose basic measure and probability theory~\cite[Chp.1]{Durrett:10}.
Let $\Sigma$ be a finite set, called \emph{alphabet}.
We assume $\Sigma$ contains at least two distinct elements.
For every $u \in \Sigma^*$,
the \emph{cylinder set}
\[
\Gamma_u := \{ uv \mid v \in \Sigma^\cinfty \}
\]
is the set of all infinite strings of which $u$ is a prefix.
Furthermore, fix the $\sigma$-algebras
\[
\F_t := \sigma \left(\{ \Gamma_u \mid u \in \Sigma^t \} \right)
\qquad\text{and}\qquad
\Foo := \sigma \Big( \bigcup_{t=1}^\infty \F_t \Big).
\]
$(\F_t)_{t \in \SetN}$ is a \emph{filtration}:
since $\Gamma_u = \bigcup_{a \in \Sigma} \Gamma_{ua}$,
it follows that $\F_t \subseteq \F_{t+1}$ for every $t \in \SetN$,
and all $\F_t \subseteq \Foo$ by the definition of $\Foo$.
An \emph{event} is a measurable set $E \in \Foo$.
The event $E^c := \Sigma^\cinfty \setminus E$ denotes the complement of $E$.
\ifshort\else
See also the list of notation
in Appendix \ref{app:notation}.
\fi

\begin{definition}[Stochastic Process]
\label{def:stochastic-process}
$(X_t)_{t \in \SetN}$ is called \emph{($\mathbb{R}$-valued) stochastic process} iff
each $X_t$ is an $\mathbb{R}$-valued random variable.
\end{definition}

\begin{definition}[Martingale]
\label{def:martingale}
Let $P$ be a probability measure over $(\Sigma^\cinfty, \Foo)$.
An $\mathbb{R}$-valued stochastic process $(X_t)_{t \in \SetN}$ is called a
\emph{$P$-supermartingale ($P$-sub\-martin\-gale)}
iff
\begin{enumerate}[(a)]
\item each $X_t$ is $\F_t$-measurable, and
\item $\E[X_t \mid \F_s] \leq X_s$ ($\E[X_t \mid \F_s] \geq X_s$) almost surely
	for all $s, t \in \SetN$ with $s < t$.
\end{enumerate}
A process that is both $P$-supermartingale and $P$-submartingale is called \emph{$P$-martingale}.
\end{definition}
We call a supermartingale (submartingale) process $(X_t)_{t \in \SetN}$ \emph{nonnegative} iff
$X_t \geq 0$ for all $t \in \SetN$.

A \emph{stopping time} is an $(\SetN \cup \{ \cinfty \})$-valued random variable $T$
such that $\{ v \in \Sigma^\cinfty \mid T(v) = t \} \in \F_t$ for all $t \in \SetN$.
Given a supermartingale $(X_t)_{t \in \SetN}$,
the \emph{stopped process} $(X_{\min\{t, T\}})_{t \in \SetN}$
is a supermartingale~\cite[Thm.\ 5.2.6]{Durrett:10}.
If $(X_t)_{t \in \SetN}$ is bounded,
the limit of the stopped process, $X_T$, exists almost surely
even if $T = \cinfty$ (Martingale Convergence Theorem~\cite[Thm.\ 5.2.8]{Durrett:10}).
We use the following variant on Doob's Optional Stopping Theorem for supermartingales.

\begin{theorem}[{Optional Stopping Theorem~\cite[Thm.\ 5.7.6]{Durrett:10}}]
\label{thm:optional-stopping}
Let $(X_t)_{t \in \SetN}$ be a nonnegative supermartingale
and let $T$ be a stopping time.
The random variable $X_T$ is almost surely well defined and
$\E[X_T] \leq \E[X_0]$.
\end{theorem}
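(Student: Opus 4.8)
The plan is to reduce everything to the stopped process and then combine the Martingale Convergence Theorem with Fatou's lemma. Put $Z_t := X_{\min\{t,T\}}$. By the stopped-process fact quoted just above the theorem, $(Z_t)_{t\in\SetN}$ is again a supermartingale, and it is nonnegative because $(X_t)_{t\in\SetN}$ is.

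First I would show that $X_T$ is almost surely well defined, i.e.\ that $Z_t$ converges almost surely. Observe that $(-Z_t)_{t\in\SetN}$ is a submartingale with $\sup_t \E[(-Z_t)^+] = 0 < \infty$, since $Z_t \geq 0$. Hence the Martingale Convergence Theorem~\cite[Thm.\ 5.2.8]{Durrett:10} applies and $Z_t \to Z_\cinfty$ almost surely for some $[0,\cinfty)$-valued, $\Foo$-measurable random variable $Z_\cinfty$. On the event $\{T < \cinfty\}$ we have $Z_t = X_T$ for every $t \geq T$, so $Z_\cinfty = X_T$ there; on $\{T = \cinfty\}$ we simply take $Z_\cinfty$ as the definition of $X_T$. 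Either way $X_T$ exists almost surely.

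It then remains to control the expectation. The supermartingale property of $(Z_t)_{t\in\SetN}$ gives $\E[Z_t] \leq \E[Z_0] = \E[X_0]$ for every $t$, using $Z_0 = X_{\min\{0,T\}} = X_0$. Since each $Z_t$ is nonnegative, Fatou's lemma yields
\[
\E[X_T] \;=\; \E\big[\textstyle\lim_{t} Z_t\big] \;\leq\; \liminf_{t}\E[Z_t] \;\leq\; \E[X_0],
\]
which is exactly the claim.

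The only delicate point is the convergence step: the Martingale Convergence Theorem is phrased above for \emph{bounded} processes, whereas $(X_t)_{t\in\SetN}$ need not be bounded. This is harmless, because the theorem really only needs $\sup_t \E[X_t^+] < \infty$, which holds trivially for $(-Z_t)_{t\in\SetN}$; alternatively one could run the argument for the truncated process $Z_t \wedge M$ (a bounded nonnegative supermartingale, by conditional Jensen applied to the nondecreasing concave map $x \mapsto x \wedge M$) and let $M \to \cinfty$. Everything else — identifying $Z_\cinfty$ with $X_T$ on $\{T < \cinfty\}$ and the single application of Fatou — is routine bookkeeping.
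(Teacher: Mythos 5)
Your proof is correct and is essentially the canonical argument: pass to the stopped process $Z_t = X_{\min\{t,T\}}$, invoke the Martingale Convergence Theorem (the general form needing only $\sup_t \E[(-Z_t)^+] < \infty$, trivially satisfied here) to make sense of $X_T$ on $\{T=\omega\}$, and close with Fatou against $\E[Z_t] \le \E[Z_0] = \E[X_0]$. The paper does not prove this statement but cites it to Durrett~\cite[Thm.\ 5.7.6]{Durrett:10}, and your argument is the same one given there; your closing remark correctly identifies that the bounded-process phrasing used informally just above the theorem statement in the paper is not actually what is needed here.
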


For two probability measures $P$ and $Q$ on $(\Sigma^\cinfty, \Foo)$,
the measure $Q$ is called
\emph{absolutely continuous with respect to $P$ on cylinder sets}
iff $Q(\Gamma_u) = 0$ for all $u \in \Sigma^*$ with $P(\Gamma_u) = 0$.
We exploit the following two theorems that
state the connection between
probability measures on infinite strings and martingales.
For two probability measures $P$ and $Q$
the quotient $Q/P$ is a nonnegative $P$-martingale
if $Q$ is absolutely continuous with respect to $P$ on cylinder sets.
Conversely, for every nonnegative $P$-martingale
there is a probability measure $Q$ on $(\Sigma^\cinfty, \Foo)$ such that
the martingale is $P$-almost surely a multiple of $Q/P$ and
$Q$ is absolutely continuous with respect to $P$ on cylinder sets.

\begin{theorem}[{Measures $\rightarrow$ Martingales~\cite[II§7 Ex.\ 3]{Doob:53}}]
\label{thm:measure-martingale}
Let $Q$ and $P$ be two probability measures on $(\Sigma^\cinfty, \Foo)$ such that
$Q$ is absolutely continuous with respect to $P$ on cylinder sets.
Then the stochastic process $(X_t)_{t \in \SetN}$,
\ifshort $X_t(v) := Q(\Gamma_{v_{1:t}}) / P(\Gamma_{v_{1:t}})$
\else \[ X_t(v) := \frac{Q(\Gamma_{v_{1:t}})}{P(\Gamma_{v_{1:t}})} \]
\fi
is a nonnegative $P$-martingale
with $\E[X_t] = 1$.
\end{theorem}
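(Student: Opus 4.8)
The plan is to exploit that, for each fixed $t \in \SetN$, the $\sigma$-algebra $\F_t$ is generated by the finite partition $\{\Gamma_u \mid u \in \Sigma^t\}$ of $\Sigma^\cinfty$. Hence every $\F_t$-measurable function is constant on each atom $\Gamma_u$, and $X_t$ admits the explicit representation $X_t = \sum_{u \in \Sigma^t} \frac{Q(\Gamma_u)}{P(\Gamma_u)}\, \mathbbm{1}_{\Gamma_u}$, with the convention that a summand over a cylinder with $P(\Gamma_u) = 0$ is dropped (so that $X_t$ is defined arbitrarily there). This is where absolute continuity on cylinder sets first enters: it guarantees $Q(\Gamma_u) = 0$ on every such $P$-null cylinder, so the ambiguity in the definition of $X_t$ is irrelevant for all $P$-almost-sure statements and $P$-integrals below. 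From this representation, clause~(a) of \autoref{def:martingale} and nonnegativity are immediate, since $X_t$ is a nonnegative $\F_t$-measurable simple function.

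Next I would compute $\E[X_t]$ directly: integrating the representation term by term, $\E[X_t] = \sum_{u \in \Sigma^t,\, P(\Gamma_u) > 0} \frac{Q(\Gamma_u)}{P(\Gamma_u)} P(\Gamma_u) = \sum_{u \in \Sigma^t,\, P(\Gamma_u) > 0} Q(\Gamma_u)$. Because $\{\Gamma_u \mid u \in \Sigma^t\}$ partitions $\Sigma^\cinfty$ we have $\sum_{u \in \Sigma^t} Q(\Gamma_u) = Q(\Sigma^\cinfty) = 1$, and the dropped terms (those with $P(\Gamma_u) = 0$) contribute $0$ by absolute continuity; hence $\E[X_t] = 1$.

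It remains to verify the martingale identity $\E[X_t \mid \F_s] = X_s$ for $s < t$. Since $X_s$ is $\F_s$-measurable and $\F_s$ is generated by the finite partition $\{\Gamma_w \mid w \in \Sigma^s\}$, by the defining property of conditional expectation it suffices to check $\int_{\Gamma_w} X_t \, dP = \int_{\Gamma_w} X_s \, dP$ for every $w \in \Sigma^s$ (a general element of $\F_s$ being a finite disjoint union of such $\Gamma_w$). The right-hand side equals $Q(\Gamma_w)$ in all cases: when $P(\Gamma_w) > 0$ directly from the constant value of $X_s$ on $\Gamma_w$, and when $P(\Gamma_w) = 0$ because both the integral and $Q(\Gamma_w)$ vanish. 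For the left-hand side, write $\Gamma_w = \bigsqcup_{u \in \Sigma^t,\, w \preceq u} \Gamma_u$ as a finite disjoint union over the length-$t$ extensions of $w$; integrating term by term as before gives $\int_{\Gamma_w} X_t \, dP = \sum_{u \in \Sigma^t,\, w \preceq u,\, P(\Gamma_u) > 0} Q(\Gamma_u) = Q(\Gamma_w)$, once more discarding $P$-null cylinders via absolute continuity. So both sides agree, and the identity holds $P$-almost surely.

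There is no deep obstacle here; the only points that need care are handling the value of $X_t$ on $P$-null cylinders — dealt with once and for all by the simple-function representation together with absolute continuity on cylinder sets — and reducing the conditional-expectation identity to integrals over the generating cylinders, which is legitimate precisely because $\F_s$ is generated by a finite partition and both candidate functions are already $\F_s$-measurable.
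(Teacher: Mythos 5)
Your proof is correct and takes essentially the same approach as the paper: it uses that $\F_t$ is generated by the finite partition $\{\Gamma_u \mid u \in \Sigma^t\}$, treats $X_t$ as the corresponding simple function, invokes absolute continuity on cylinder sets to dispose of $P$-null atoms, and telescopes cylinder masses to get the martingale identity. The only cosmetic difference is that the paper verifies the one-step identity $\E[X_{t+1} \mid \F_t] = X_t$ via the finite-partition formula for conditional expectation, whereas you check the general identity $\E[X_t \mid \F_s] = X_s$ directly through the defining integral equality over generating cylinders; both amount to the same computation.
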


\begin{theorem}[{Martingales $\rightarrow$ Measures\ifshort~\cite{LH:14martoscx}\fi}]
\label{thm:martingale-measure}
Let $P$ be a probability measure on $(\Sigma^\cinfty, \Foo)$ and
let $(X_t)_{t \in \SetN}$ be a nonnegative $P$-martingale
with $\E[X_t] = 1$.
There is a probability measure $Q$ on $(\Sigma^\cinfty, \Foo)$
that is absolutely continuous with respect to $P$ on cylinder sets and
for all $v \in \Sigma^\cinfty$ and
all $t \in \SetN$ with $P(\Gamma_{v_{1:t}}) > 0$,
\ifshort $X_t(v) = Q(\Gamma_{v_{1:t}}) / P(\Gamma_{v_{1:t}})$.
\else \[ X_t(v) = \frac{Q(\Gamma_{v_{1:t}})}{P(\Gamma_{v_{1:t}})}. \]
\fi
\end{theorem}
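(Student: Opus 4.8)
The plan is to read off $Q$ from the cylinder probabilities, where it is forced by the desired identity, and then argue that these values are consistent enough to extend to a measure on $\Foo$. First I would observe that each $X_t$, being $\F_t$-measurable and $\F_t$ being the finite $\sigma$-algebra whose atoms are the cylinders $\Gamma_u$ with $u \in \Sigma^t$, is constant on every such cylinder; denote this common value by $x_u := X_t(v)$ for any (equivalently, all) $v \in \Gamma_u$. I then define
\[
Q(\Gamma_u) \;:=\; x_u \, P(\Gamma_u) \qquad (u \in \Sigma^*),
\]
with the convention that the right-hand side is read as $0$ when $P(\Gamma_u) = 0$. Note $Q(\Gamma_u) \ge 0$ since $X_t \ge 0$, and $Q(\Sigma^\cinfty) = Q(\Gamma_{\epstr}) = x_{\epstr} = \E[X_0] = 1$ because $\F_0 = \{\emptyset, \Sigma^\cinfty\}$ forces $X_0$ to be the constant $\E[X_0]$ (equivalently, $Q(\Sigma^\cinfty) = \sum_{a\in\Sigma} x_a P(\Gamma_a) = \E[X_1] = 1$). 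The required identity $X_t(v) = Q(\Gamma_{v_{1:t}})/P(\Gamma_{v_{1:t}})$ then holds by definition whenever $P(\Gamma_{v_{1:t}}) > 0$, and absolute continuity of $Q$ with respect to $P$ on cylinder sets is immediate.

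The real work is to extend this set function, so far defined only on cylinders, to a genuine (countably additive) probability measure on $\Foo$. By Kolmogorov's extension theorem it suffices to verify that the finite-dimensional marginals are consistent, i.e.\ $Q(\Gamma_u) = \sum_{a\in\Sigma} Q(\Gamma_{ua})$ for every $u \in \Sigma^*$. For $u \in \Sigma^t$ with $P(\Gamma_u) > 0$ this is precisely the martingale identity $\E[X_{t+1}\mid\F_t] = X_t$ restricted to the atom $\Gamma_u$: evaluating the conditional expectation on that atom gives $x_u = \frac{1}{P(\Gamma_u)}\int_{\Gamma_u} X_{t+1}\,dP = \frac{1}{P(\Gamma_u)}\sum_{a\in\Sigma} x_{ua}\,P(\Gamma_{ua})$, which rearranges to $Q(\Gamma_u) = \sum_a Q(\Gamma_{ua})$; for $u$ with $P(\Gamma_u) = 0$ we have $P(\Gamma_{ua}) \le P(\Gamma_u) = 0$ for all $a$, so both sides vanish. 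If one prefers to avoid Kolmogorov's theorem, the same extension follows from Carathéodory's theorem applied to the algebra of finite disjoint unions of cylinders: $Q$ extends additively to that algebra, and since $\Sigma^\cinfty$ is compact and each cylinder is clopen, every countable partition of a cylinder into cylinders is in fact finite, so the premeasure hypothesis (countable additivity on the algebra) reduces to the finite additivity just established.

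The one subtlety worth flagging is that the martingale property only pins down $X_{t+1}$ up to a $P$-null set, so on a cylinder $\Gamma_u$ with $P(\Gamma_u) = 0$ the numbers $x_{ua}$ are uncontrolled — but this is exactly why we set $Q(\Gamma_u) = 0$ there, which keeps consistency automatic and matches the theorem only asserting the representation where $P(\Gamma_{v_{1:t}}) > 0$. With that handled, the remaining steps are routine; the entire content of the proof is the dictionary ``martingale property $\Longleftrightarrow$ consistency of the induced cylinder measures,'' together with an off-the-shelf extension theorem.
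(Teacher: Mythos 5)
Your proof is correct and follows essentially the same route as the paper's: define $Q$ on cylinders by $Q(\Gamma_u) := X_{|u|}\,P(\Gamma_u)$, verify the consistency condition $\sum_{a\in\Sigma} Q(\Gamma_{ua}) = Q(\Gamma_u)$ from the martingale property, and extend via a Carath\'eodory/Daniell--Kolmogorov-type argument (which the paper isolates as a separate lemma, proved using the semiring of cylinders and compactness of $\Sigma^\omega$). Your explicit discussion of $P$-null cylinders and the almost-sure nature of the martingale identity is a useful clarification that the paper handles implicitly by phrasing the consistency check as an integral over $\Gamma_u$, which is insensitive to null sets.
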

\ifshort\else
For completeness,
we provide proofs for
\autoref{thm:measure-martingale} and \autoref{thm:martingale-measure} in
Appendix \ref{app:measures-martingales}.
\fi

\thmcountershouldbe{16} 
\begin{remark}[Absolute continuity and absolute continuity on cylinder sets]
\label{rem:absolute-continuity}
A measure $Q$ is called \emph{absolutely continuous with respect to $P$} iff
$Q(A) = 0$ implies $P(A) = 0$ \emph{for all measurable sets $A \in \F_\omega$}.
Absolute continuity trivially implies absolute continuity on cylinder sets.
However, the converse is not true:
absolute continuity on cylinder sets is a strictly weaker condition than
absolute continuity.

Let $P$ be a Bernoulli($2/3$) and $Q$ be a Bernoulli($1/3$) process.
Formally, we fix $\Sigma = \{ 0, 1 \}$ and
define for all $u \in \Sigma^*$,
\begin{align*}
    P(\Gamma_u)
&:= \left( \tfrac{2}{3} \right)^{\mathrm{ones}(u)}
    \left( \tfrac{1}{3} \right)^{\mathrm{zeros}(u)}, \\
    Q(\Gamma_u)
&:= \left( \tfrac{1}{3} \right)^{\mathrm{ones}(u)}
    \left( \tfrac{2}{3} \right)^{\mathrm{zeros}(u)},
\end{align*}
where $\mathrm{ones}(u)$ denotes the number of ones in $u$ and
$\mathrm{zeros}(u)$ denotes the number of zeros in $u$.
Both measures $P$ and $Q$ are nonzero on all cylinder sets:
$Q(\Gamma_u) \geq 3^{-|u|} > 0$ and $P(\Gamma_u) \geq 3^{-|u|} > 0$
for every $u \in \Sigma^*$.
Therefore
$Q$ is absolutely continuous with respect to $P$ \emph{on cylinder sets}.
However, $Q$ is \emph{not} absolutely continuous with respect to $P$:
define
\[
A := \left\{ v \in \Sigma^\omega \mid
        \limsup_{t \to \infty} \tfrac{1}{t} \mathrm{ones}(v_{1:t})
          \leq \tfrac{1}{2}
     \right\}.
\]
The set $A$ is $\F_\omega$-measurable
since $A = \bigcap_{n = 1}^\infty \bigcup_{u \in U_n} \Gamma_u$ with
$U_n := \{ u \in \Sigma^* \mid |u| \geq n \text{ and } \mathrm{ones}(u) \leq |u| / 2 \}$,
the set of all finite strings of length at least $n$
that have at least as many zeros as ones.
We have that $P(A) = 0$ and $Q(A) = 1$,
hence $Q$ is not absolutely continuous with respect to $P$.

While \autoref{thm:measure-martingale} trivially also holds
if $Q$ is absolutely continuous with respect to $P$,
\autoref{thm:martingale-measure} does not imply that
$Q$ is absolutely continuous with respect to $P$.
Consider the process $X_0(v) := 1$,
\[
X_{t+1}(v) :=
\begin{cases}
2 X_t, &\text{if } v_{t+1} = 0 \text{ and} \\
\tfrac{1}{2} X_t, &\text{if } v_{t+1} = 1.
\end{cases}
\]
The process $(X_t)_{t \in \SetN}$ is a nonnegative $P$-martingale since
every $X_t$ is $\F_t$-measurable and for $u = v_{1:t}$ we have
\begin{align*}
   \E[X_{t+1} \mid \F_t ](v)
&= P(\Gamma_{u0} \mid \Gamma_u) 2 X_t(v)
   + P(\Gamma_{u1} \mid \Gamma_u) \tfrac{1}{2} X_t(v) \\
&= \tfrac{1}{3} 2 X_t(v) + \tfrac{2}{3} \cdot \tfrac{1}{2} X_t(v) = X_t(v).
\end{align*}
Moreover,
\begin{align*}
   Q(\Gamma_u)
&= \left( \tfrac{1}{3} \right)^{\mathrm{ones}(u)}
   \left( \tfrac{2}{3} \right)^{\mathrm{zeros}(u)} \\
&= \left( \tfrac{2}{3} \right)^{\mathrm{ones}(u)}
   \left( \tfrac{1}{3} \right)^{\mathrm{zeros}(u)}
   2^{-\mathrm{ones}(u)} 2^{\mathrm{zeros}(u)}
=  P(\Gamma_u) X_t(v).
\end{align*}
Hence
$X_t(v) = Q(\Gamma_{v_{1:t}}) / P(\Gamma_{v_{1:t}})$ $P$-almost surely.
The measure $Q$ is uniquely defined by its values on the cylinder sets,
and as shown above, $Q$ is not absolutely continuous with respect to $P$.
\qed
\end{remark}
\thmcounterbacktonormal

\section{Martingale Upcrossings}
\label{sec:martingale-upcrossings}

Fix $c \in \mathbb{R}$ and $\eps > 0$, and
let $(X_t)_{t \in \SetN}$ be a martingale
over the probability space $(\Sigma^\cinfty, \Foo, P)$.
Let $t_1 < t_2$.
We say the process $(X_t)_{t \in \SetN}$ does an \emph{$\eps$-upcrossing} between $t_1$ and $t_2$ iff
$X_{t_1} \leq c - \eps$ and $X_{t_2} \geq c + \eps$.
Similarly, we say $(X_t)_{t \in \SetN}$ does an \emph{$\eps$-downcrossing} between $t_1$ and $t_2$ iff
$X_{t_1} \geq c + \eps$ and $X_{t_2} \leq c - \eps$.
Except for the first upcrossing, consecutive upcrossings always involve intermediate downcrossings.
Formally, we define the stopping times
\begin{align*}
T_0(v)      &:= 0, \\
T_{2k+1}(v) &:= \inf \{ t > T_{2k}(v)   \mid X_t(v) \leq c - \eps \}, \text{ and} \\
T_{2k+2}(v) &:= \inf \{ t > T_{2k+1}(v) \mid X_t(v) \geq c + \eps \}.
\end{align*}
The $T_{2k}(v)$ denote the indexes of upcrossings.
We count the number of upcrossings
with the random variable $U_t^X(c - \eps, c + \eps)$,
where
\[
U_t^X(c - \eps, c + \eps)(v) := \sup \{ k \geq 0 \mid T_{2k}(v) \leq t \}
\]
and $U^X(c - \eps, c + \eps) := \sup_{t \in \SetN} U_t^X(c - \eps, c + \eps)$
denotes the total number of upcrossings.
We omit the superscript $X$
if the martingale $(X_t)_{t \in \SetN}$ is clear from context.

The following notation is used in the proofs.
Given a monotone decreasing function $f: \SetN \to [0, 1)$ and $m,k \in \SetN$,
we define the event $E_{m,k}^{X,f}$
that there are at least $k$-many $f(m)$-upcrossings:
\[
E_{m,k}^{X,f} := \left\{ v \in \Sigma^\cinfty \mid U^X(1 - f(m), 1 + f(m))(v) \geq k \right\}.
\]
For all $m,k \in \SetN$ we have
$E_{m,k}^{X,f} \supseteq E_{m,k+1}^{X,f}$ and
$E_{m,k}^{X,f} \subseteq E_{m+1,k}^{X,f}$.
Again, we omit $X$ and $f$ in the superscript if they are clear from context.

\section{Indefinitely Oscillating Martingales}
\label{sec:indefinitely-oscillating-martingales}

\newcommand{\firstfigure}{
\centering
\begin{tikzpicture}[scale=\ifshort 0.55\else 1.0\fi]
\draw[->] (-0.1,0) -- (8.3,0) node[right] {$t$};
\draw[->] (0,-0.1) -- (0,4) node[above] {$X_t$};

\draw[color=gray,dashed] (8.2,2) -- (-0.1,2) node[left,color=black] {$1$};

\draw[color=orange] (0,3.2) -- (1,3.2) -- (1,2.6) -- (2,2.6) -- (2,2.4)
  -- (3.5, 2.4) -- (3.5, 2.3) -- (5, 2.3) -- (5, 2.24) -- (8.2, 2.24)
  node[above] {$1 + f(M_t)$};
\draw[color=orange] (0,0.8) -- (1,0.8) -- (1,1.4) -- (2,1.4) -- (2,1.6)
  -- (3.5, 1.6) -- (3.5, 1.7) -- (5, 1.7) -- (5, 1.76) -- (8.2, 1.76)
  node[below] {$1 - f(M_t)$};

\draw (0,2) -- (0.5,0.8) -- (1,3.2) -- (1.5, 1.4) -- (2, 2.6) -- (2.5, 3.6)
  -- (3, 1.6) -- (3.5, 2.4) -- (4, 1.7) -- (4.5, 1.1) -- (5, 2.3)
  -- (5.5, 1.76) -- (6, 1.28) -- (6.5, 0.32) -- (7, 0.64) -- (7.5, 0.9)
  -- (8, 0.8) -- (8.2, 0.48);
\draw[fill]
  (0,2) circle (0.05)
  (0.5,0.8) circle (0.05)
  (1,3.2) circle (0.05)
  (1.5,1.4) circle (0.05)
  (2, 2.6) circle (0.05)
  (2.5, 3.6) circle (0.05)
  (3, 1.6) circle (0.05)
  (3.5, 2.4) circle (0.05)
  (4, 1.7) circle (0.05)
  (4.5, 1.1) circle (0.05)
  (5, 2.3) circle (0.05)
  (5.5, 1.76) circle (0.05)
  (6, 1.28) circle (0.05)
  (6.5, 0.32) circle (0.05)
  (7, 0.64) circle (0.05)
  (7.5, 0.9) circle (0.05)
  (8, 0.8) circle (0.05);
\end{tikzpicture}
\caption{
An example evaluation of the martingale defined in the proof of
\autoref{thm:lower-bound}.
}
\label{fig:oscillating-martingale}
}

In this section we construct a class of martingales that has a high probability of doing
an infinite number of upcrossings.
The magnitude of the upcrossings decreases at a rate of a given summable function $f$
(a function $f$ is called \emph{summable} iff
it has finite $L_1$-norm, i.e., $\sum_{i=1}^\infty f(i) < \infty$),
and the value of the martingale $X_t$ oscillates back and forth between $1 - f(M_t)$ and $1 + f(M_t)$,
where $M_t$ denotes the number of upcrossings so far.
The process has a monotone decreasing chance of escaping the oscillation.
We need the following condition on the probability measure $P$.

\thmcountershouldbe{17} 
\begin{definition}[Perpetual Entropy]
\label{def:perpetual-entropy}
A probability measure $P$ has \emph{perpetual entropy}
iff there is an $\eps > 0$ such that
for every $u \in \Sigma^*$ and $v \in \Sigma^\omega$ with $P(\Gamma_u) > 0$
there is an $a \in \Sigma$ and a $t \in \SetN$ with
$1 - \eps > P(\Gamma_{uv_{1:t}a} \mid \Gamma_{uv_{1:t}}) > \eps$.
\end{definition}
\thmcounterbacktonormal

\ifshort
\begin{wrapfigure}{r}{0.5\textwidth}
\vspace{-6mm}
\firstfigure
\vspace{-6mm}
\end{wrapfigure}
\fi

This condition states that after seeing some string $u \in \Sigma^*$,
there is always some future time point where
there are two symbols
that both have conditional probability greater than $\eps$.
In other words,
observing data distributed according to $P$,
we almost surely never run out of symbols with significant entropy.
This is stronger than
demanding that the observed string is nonconstant with high probability,
because we get a single lower bound $\eps$ for \emph{all} observed strings $u$.

\begin{theorem}[An indefinitely oscillating martingale]
\label{thm:lower-bound}
Let $0 < \delta < 1/2$ and
let $f:\SetN \to [0, 1)$ be any monotone decreasing function such that
$\sum_{i=1}^\infty f(i) \leq \delta / 2$.
For every probability measure $P$ with perpetual entropy
there is a nonnegative martingale $(X_t)_{t \in \SetN}$
with $\E[X_t] = 1$ and
\[
     P [ \forall m.\; U(1 - f(m), 1 + f(m)) \geq m ]
\geq 1 - \delta.
\]
\end{theorem}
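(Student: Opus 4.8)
The plan is to build $(X_t)_{t\in\SetN}$ with $X_0=1$ --- so that $\E[X_t]=1$ holds automatically for a martingale --- in such a way that, on an event of probability at least $1-\delta$, the process visits, in increasing time order, values that are $\le 1-f(1)$, then $\ge 1+f(1)$, then $\le 1-f(2)$, then $\ge 1+f(2)$, and so on. The point of this target pattern is an elementary observation: since $f$ is monotone decreasing, for $j\le m$ we have $1-f(j)\le 1-f(m)$ and $1+f(j)\ge 1+f(m)$, so each of the first $m$ passages --- from a value $\le 1-f(j)$ up to a later value $\ge 1+f(j)$ --- is in particular a passage from below $1-f(m)$ to above $1+f(m)$; hence $U(1-f(m),1+f(m))\ge m$, and this holds for every $m$ at once on the good event. (We may assume $f>0$ everywhere; if $f$ is eventually $0$ the finitely many degenerate bands only ask for a value $\le 1$ then $\ge 1$ infinitely often, which we arrange by holding $X$ equal to $1$ on a terminal time interval --- a minor modification we suppress.)

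The basic building block is a \emph{gambler's-ruin move}: a martingale that performs tiny steps and is absorbed when it first hits either a success value $b$ or a failure barrier on the far side of its starting value $a$. The steps come from Perpetual Entropy (\autoref{def:perpetual-entropy}): after any history of positive $P$-probability, along the path actually realized and after finitely many further symbols there is a position carrying a symbol $\sigma$ of conditional probability in $(\eps,1-\eps)$; there the martingale places a fair bet on ``$\sigma$ occurs'', with up- and down-increments a fixed small multiple of $(1-\eps)/\eps$ and $\eps/(1-\eps)$, so that all increments stay bounded above and below by positive constants. Perpetual Entropy supplies infinitely many such positions along every path, so the walk cannot stall; the lower bound on the increments forces it to hit one of its two barriers almost surely; and \autoref{thm:optional-stopping}, applied to the stopped (bounded, nonnegative) process, gives the classical gambler's-ruin identity. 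Hence the failure probability of a move from $a$ to $b$ can be brought arbitrarily close to $|a-b|$ divided by the distance from the failure barrier to $b$, with the overshoot of either barrier made negligibly small by taking the step unit small; over- or under-shooting a target value only helps the observation above, and keeping every failure barrier strictly positive keeps the whole process nonnegative.

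Now split time into phases: phase $m$ first runs a down-move from the current value ($1$ when $m=1$, otherwise $1+f(m-1)$) to the target $1-f(m)$ using a very high failure barrier, then an up-move from $1-f(m)$ to the target $1+f(m)$ using a failure barrier at $f(m)^2$. Choosing the high barriers large enough makes each down-move fail with probability at most $f(m)^2$, while the up-move fails with probability at most $2f(m)/(1+f(m)-f(m)^2)$. Since $f(m)\le\delta/2<\tfrac12(3-\sqrt5)$, a short calculation gives $f(m)^2+2f(m)/(1+f(m)-f(m)^2)\le 2f(m)$ --- strictly, with room to swallow the overshoot corrections --- so a union bound over all moves yields $P[\text{some move fails}]\le\sum_{m\ge1}2f(m)=2\sum_i f(i)\le\delta$. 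On the complementary event every move succeeds, the target pattern is realized, and the observation of the first paragraph finishes the proof; it remains only to note that concatenating the moves (holding $X$ constant between bet positions and after any failure) does produce a well-defined nonnegative $P$-martingale with $\E[X_t]=1$.

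The delicate point, and the main obstacle, is that this budget is essentially forced. Doob's maximal inequality (equivalently Ville's inequality) says a nonnegative martingale started at $1-f(m)$ reaches $1+f(m)$ with probability at most $(1-f(m))/(1+f(m))$, so every up-move must fail with probability at least $2f(m)/(1+f(m))$; the up-moves alone therefore nearly exhaust the whole budget $\delta\ge 2\sum_i f(i)$. All that is left is the second-order gap between $2f(m)/(1+f(m))$ and $2f(m)$, of order $f(m)^2$ per phase, and it has to cover simultaneously the down-moves and the loss from placing the up-move's failure barrier at $f(m)^2$ rather than at $0$ --- which cannot be avoided, since a barrier at $0$ would let the gambler's-ruin walk converge strictly inside its interval and ruin the failure-probability estimate. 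Making all these $\Theta(f(m)^2)$ quantities fit is the real content; everything else is routine.
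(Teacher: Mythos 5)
Your proposal is correct, and it proves the theorem, but it does so by a genuinely different construction than the paper's. The paper builds a process that at every step \emph{jumps}: in case (i), with probability $1-p_u\geq 1/2$ it lands exactly at $1-f(M_t)$, so the downcrossing completes almost surely and ``costs'' nothing; cases (ii) and (iii) are designed so that every landing hits exactly $1+f(M_t)$, $1-f(M_t)$, or $0$, with no overshoot anywhere. The entire budget $2f(m)$ per phase is then spent on the up-move alone via optional stopping, giving $P(E_{m,m}\mid E_{m,m-1})\geq 1-2f(m)$, and the product telescopes to $1-\sum 2f(i)$. You instead run a gambler's-ruin walk with small but bounded-below increments between explicit barriers: this forces you to pay a failure-probability fee on \emph{both} halves of each phase (a fee of $f(m)^2$ on the down-move, since you must place a finite upper barrier, and $2f(m)/(1+f(m)-f(m)^2)$ on the up-move, since you must keep the lower barrier strictly above $0$ to remain able to hit it with non-vanishing steps), and to absorb overshoot errors of order $\Delta_m$ in second-order slack. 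Your budget accounting is right: for $x=f(m)\leq \delta/2<1/4$, the slack $2x-x^2-2x/(1+x-x^2)=x^2(1-3x+x^2)/(1+x-x^2)\in\Theta(x^2)$ covers both the extra $\Theta(f(m)^3)$ from shifting the lower barrier from $0$ to $f(m)^2$ and the $O(\Delta_m)$ overshoot once $\Delta_m\lesssim f(m)^2$ is chosen per phase. The trade-off: the paper's adaptive-jump construction is tighter and produces a process whose fine structure (Claims 2--8, exact landing, free downcrossing) is reused for its exact-tightness theorems in the appendix; your construction is more modular and immediately recognizable as a union of standard ruin moves, at the price of having to juggle $\Theta(f(m)^2)$ corrections against an $\Theta(f(m)^2)$ surplus. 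You would do well to make explicit that the ``bet positions'' are chosen adaptively (the next entropy position along the realized path is an $\F_t$-measurable stopping rule since the conditional probabilities are functions of the current prefix), and that the version of optional stopping you need is equality for a bounded stopped martingale, which is a slight strengthening of \autoref{thm:optional-stopping}.
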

\begin{proof}
By grouping symbols from $\Sigma$ into two groups,
we can without loss of generality assume that $\Sigma = \{ 0, 1 \}$.
Since $P(\Gamma_{u0} \mid \Gamma_u) + P(\Gamma_{u1} \mid \Gamma_u) = 1$,
we can define a function $a:\Sigma^* \to \Sigma$ that assigns
to every string $u \in \Sigma^*$ a symbol $a_u := a(u)$ such that
$p_u := P(\Gamma_{ua_u} \mid \Gamma_u) \leq \frac{1}{2}$.
In \autoref{claim:perpetual-entropy} we show that without loss of generality,
we can group such that $p_u > \eps$ infinitely often for some $\eps > 0$.

In the following we define the stochastic process $(X_t)_{t \in \SetN}$.
This process depends on the random variables $M_t$ and $\gamma_t$,
which are defined below.
Let $v \in \Sigma^\omega$ and $t \in \SetN$ be given
and define $u := v_{1:t}$.
For $t = 0$, we set $X_0(v) := 1$,
and if $p_u = 0$, we set $X_{t+1} = X_t$.
Otherwise we distinguish the following three cases.
\begin{enumerate}[(i)]
\item For $X_t(v) \geq 1$:
\[
X_{t+1}(v) :=
\begin{cases}
1 - f(M_t(v))
  &\text{if } v_{t+1} \neq a_u, \\
X_t(v) + \frac{1 - p_u}{p_u} (X_t(v) - (1 - f(M_t(v))))
  &\text{if } v_{t+1} = a_u.
\end{cases}
\]
\item For $1 > X_t(v) \geq \gamma_t(v)$:
\[
X_{t+1}(v) :=
\begin{cases}
X_t(v) - \gamma_t(v)
  &\text{if } v_{t+1} \neq a_u, \\
1 + f(M_t(v))
  &\text{if } v_{t+1} = a_u.
\end{cases}
\]
\item For $X_t(v) < \gamma_t(v)$ and $X_t(v) < 1$: \\
let $d_t(v) := \min \{ \frac{p_u}{1 - p_u} X_t(v),
        \tfrac{1 - p_u}{p_u} \gamma_t(v) - 2f(M_t(v))
     \}$;
\[
X_{t+1}(v) :=
\begin{cases}
X_t(v) + d_t(v) 
  &\text{if } v_{t+1} \neq a_u, \\
X_t(v) - \tfrac{1 - p_u}{p_u} d_t(v)
  &\text{if } v_{t+1} = a_u.
\end{cases}
\]
\end{enumerate}
The random variables $M_t$ and $\gamma_t$ are defined as
\begin{align*}
    \gamma_t(v)
&:= \tfrac{p_u}{1 - p_u} \Big( 1 + f(M_t(v)) - X_t(v) \Big) \\
    M_t(v)
&:= 1 + \argmax_{m \in \SetN}
      \left\{ \forall k \leq m.\; U_t^X(1 - f(k), 1 + f(k)) \geq k \right\},
\end{align*}
i.e.,
$M_t$ is $1$ plus
the number of upcrossings completed  up to time $t$.

\ifshort\else
\begin{figure}[t]
\firstfigure
\end{figure}
\fi

We give an intuition for the behavior of the process $(X_t)_{t \in \SetN}$.
For all $m$, the following repeats.
First $X_t$ increases while reading $a_u$'s
until it reads one symbol that is not $a_u$ and then jumps down to $1 - f(m)$.
Subsequently, $X_t$ decreases while not reading $a_u$'s
until it falls below $\gamma_t$ or reads an $a_u$ and then jumps up to $1 + f(m)$.
If it falls below $1$ and $\gamma_t$, then at every step,
it can either jump up to $1 - f(m)$ or jump down to $0$,
whichever one is closest
(the distance to the closest of the two is given by $d_t$).
See \autoref{fig:oscillating-martingale} for a visualization.

For notational convenience,
in the following we omit writing the argument $v$ to the random variables
$X_t$, $\gamma_t$, $M_t$, and $d_t$.

\begin{claims}
\newclaim{$(X_t)_{t \in \SetN}$ is a martingale.}
\label{claim:is-martingale}
Each $X_{t+1}$ is $\F_{t+1}$-measurable,
since it uses only the first $t + 1$ symbols of $v$.
Writing out cases (i), (ii), and (iii), we get
\begin{align*}
   \E[X_{t+1} \mid \F_t]
&\stackrel{(i)}{=} (1 - f(M_t)) (1 - p_u)
   + \big(X_t + \tfrac{1 - p_u}{p_u}(X_t - (1 - f(M_t))) \big) p_u
 = X_t, \\
   \E[X_{t+1} \mid \F_t]
&\stackrel{(ii)}{=} \big(X_t - \tfrac{p_u}{1 - p_u}((1 + f(M_t)) - X_t) \big) (1 - p_u)
   + (1 + f(M_t)) p_u
 = X_t, \\
   \E[X_{t+1} \mid \F_t]
&\stackrel{(iii)}{=} (X_t + d_t) (1 - p_u)
   + (X_t - \tfrac{1 - p_u}{p_u} d_t) p_u
 = X_t.
\end{align*}

\newclaim{If $X_t \geq 1 - f(M_t)$ then $X_t > \gamma_t$.}
\label{claim:gamma-is-small}
In this case
\[
     \gamma_t
=    \tfrac{p_u}{1 - p_u} (1 + f(M_t) - X_t)
\leq 2 \tfrac{p_u}{1 - p_u} f(M_t),
\]
and thus with $p_u \leq \frac{1}{2}$ and
$f(M_t) \leq \sum_{k=1}^\infty f(k) \leq \frac{\delta}{2} < \frac{1}{4} < \frac{1}{3}$,
\[
      X_t - \gamma_t
\geq 1 - f(M_t) - 2 \tfrac{p_u}{1 - p_u} f(M_t)
=    1 - \tfrac{1 + p_u}{1 - p_u} f(M_t)
\geq 1 - 3 f(M_t)
 >    0.
\]

\newclaim{If $p_u > 0$, $X_t < \gamma_t$, and $X_t < 1$ then $d_t \geq 0$.}
\label{claim:d-nonnegative}
We have $\frac{p_u}{1 - p_u} X_t \geq 0$ since $p_u > 0$ and $X_t \geq 0$.
Moreover, $\frac{1 - p_u}{p_u} \gamma_t - 2f(M_t) = 1 - f(M_t) - X_t > 0$
by the contrapositive of \autoref{claim:gamma-is-small}.

\newclaim{The following holds for cases (i), (ii), and (iii).
\label{claim:bounds-on-X_t+1}
\begin{enumerate}[(a)]
\item In case (i):
	$X_{t+1} \geq X_t$ or $X_{t+1} = 1 - f(M_t)$.
\item In case (ii):
	$X_{t+1} \leq X_t$ or $X_{t+1} = 1 + f(M_t)$.
\item In case (iii):
	$X_t < 1 - f(M_t)$ and $X_{t+1} \leq 1 - f(M_t)$.
\end{enumerate}}
If $p_u = 0$ then $X_{t+1} = X_t$, so (a) and (b) hold trivially.
Otherwise, for (a) we have $\frac{1 - p_u}{p_u} > 0$ and
$X_t \geq 1 - f(M_t)$.
For (b) we have $\gamma_t > 0$ since $X_t < 1 + f(M_t)$.
For (c), $X_t < 1 - f(M_t)$ follows from
the contrapositive of \autoref{claim:gamma-is-small}.
If $p_u > 0$ then by \autoref{claim:d-nonnegative}
we have $d_t \geq 0$ and hence
$X_{t+1} \leq X_t + d_t \leq X_t + (1 + f(M_t) - X_t) - 2f(M_t) = 1 - f(M_t)$.

\newclaim{$X_t \geq 0$ and $\E[X_t] = 1$.}
\label{claim:nonnegative-and-expectation}
The latter follows from
\ifshort $X_0 = 1$.
\else
\[
    \E[X_t]
~=~ \E[\E[X_t \mid \F_{t-1}]]
~=~ \E[X_{t-1}]
~=~ \ldots
~=~ \E[X_0]
~=~ 1.
\]
\fi
Regarding the former,
we use $0 \leq f(M_t) < 1$ to conclude
\begin{itemize}
\setlength{\itemindent}{1.5em}
\item[(i$\neq$)] $1 - f(M_t) \geq 0$,
\item[(i$=$)] $\frac{1 - p_u}{p_u} (X_t - (1 - f(M_t))) \geq 0$ for $X_t \geq 1$,
\item[(ii$\neq$)] $X_t - \gamma_t \geq 0$ for $X_t \geq \gamma_t$,
\item[(ii$=$)] $1 + f(M_t) \geq 0$,
\item[(iii$\neq$)] $X_t + d_t \geq 0$ since $d_t \geq 0$
	by \autoref{claim:d-nonnegative}, and
\item[(iii$=$)] $X_t - \tfrac{1 - p_u}{p_u} d_t \geq 0$ since $d_t \leq \tfrac{p_u}{1 - p_u} X_t$.
\end{itemize}

\newclaim{$X_t \leq 1 - f(M_t)$ or $X_t \geq 1 + f(M_t)$ for all $t \geq T_1$.}
\label{claim:X_t-jumps}
We use induction on $t$:
the induction start holds with $X_{T_1} \leq 1 - f(M_t)$ and
the induction step follows from \autoref{claim:bounds-on-X_t+1}.

\newclaim{$P( \{ v \in \Sigma^\omega \mid p_{v_{1:t}} > \eps \text{ for infinitely many } t \} ) = 1$
for some $\eps > 0$.}
\label{claim:perpetual-entropy}
By assumption $P$ has perpetual entropy;
let $\eps$ be as in \autoref{def:perpetual-entropy}.
\[
   A
:= \{ v \in \Sigma^\omega \mid P(\Gamma_{v_{1:t}}) > 0 \text{ for all } t \}
 \]
Its complement $A^c = \bigcup_{u \in \Sigma^*: P(\Gamma_u) = 0} \Gamma_u$
is the countable union of null sets and therefore $P(A) = 1$.
Let $v \in A$ be some outcome,
let $t \in \SetN$ be the current time step,
and define $u := v_{1:t}$.
Because $P$ has perpetual entropy and $P(\Gamma_u) > 0$ since $v \in A$,
there exists $u' \in \Sigma^*$, $a \in \Sigma$, and $v' \in \Sigma^\omega$
such that $v = uu'av'$ and
$1 - \eps > P(\Gamma_{uu'a} \mid \Gamma_{uu'}) > \eps$.
If $P(\Gamma_{uu'a} \mid \Gamma_{uu'}) \leq 1/2$ we can select $a_{uu'} := a$;
if $P(\Gamma_{uu'a} \mid \Gamma_{uu'}) > 1/2$
then,
with abuse of notation,
for the symbol group $b := \Sigma \setminus \{ a \}$
we have $\eps < P(\Gamma_{uu'b} \mid \Gamma_{uu'}) \leq 1/2$ and hence we
can select $a_{uu'} := b$.
In either case $p_{uu'} > \eps$ for a suitable grouping of symbols.

\newclaim{$(X_t)_{t \in \SetN}$ converges almost surely to a random variable
$X_\omega \in \{ 0, 1 \}$.}
\label{claim:X-does-not-converge}
According to the Martingale Convergence Theorem~\cite[Thm.\ 5.2.8]{Durrett:10},
the process $(X_t)_{t \in \SetN}$ converges almost surely
to a random variable $X_\omega$.
Assume that $X_\omega$ attains some value $x_\omega$ other than $0$ and $1$.
Pick an $\eps' > 0$ such that
$|x_\omega| > 2\eps'$ and $|1 - x_\omega| > 2\eps'$.
Since $X_t \to x_\omega$
we have $|x_\omega - X_t| < \eps'$ for all but finitely many $t$, and hence
there is a $t_0 \in \SetN$ such that
$|X_t| > \eps'$ and $|1 - X_t| > \eps'$ for all $t \geq t_0$.
Recall that $\eps > 0$ is fixed and depends only on $P$.
Below we show for cases (i), (ii), and (iii) that
$|X_{t+1} - X_t| > \min\{\eps \cdot \eps', \eps', \frac{1}{8} \}$
if $p_u > \eps$.
By \autoref{claim:perpetual-entropy}
we almost surely have infinitely many $t \geq t_0$ with $p_u > \eps$,
which is a contradiction to the fact that
$(X_t)_{t \in \SetN}$ converges almost surely.
\begin{enumerate}[(i)]
\item Assume $X_t \geq 1$, then $X_t > 1 + \eps'$ by assumption.
	Either $X_{t+1} = 1 - f(M_t) \leq 1 < X_t - \eps'$ or
	$X_{t+1} = X_t + \frac{1 - p_u}{p_u} (X_t - 1 + f(M_t))
	> X_t + (X_t - 1 + f(M_t)) \geq X_t + (X_t - 1) > X_t + \eps'$
	because $p_u \leq \frac{1}{2}$ implies $\frac{1 - p_u}{p_u} \geq 1$.

\item Assume $\gamma_t \leq X_t < 1$, then $\eps' < X_t < 1 - \eps'$.
	Either $X_{t+1} = 1 + f(M_t) \geq 1 > X_t + \eps'$ or
	$X_{t+1} = X_t - \gamma_t$ and thus
	$X_t - X_{t+1} = \gamma_t = \frac{p_u}{1 - p_u}(1 + f(M_t) - X_t)
	> \eps (1 + f(M_t) - X_t) \geq \eps (1 - X_t) > \eps \eps'$.

\item Assume $X_t < \gamma_t$ and $X_t < 1$,
	then since $0 \leq X_t$ by \autoref{claim:nonnegative-and-expectation},
	$\eps' < X_t < \gamma_t$ and $X_t < 1 - \eps'$.
	Either $d_t = \frac{p_u}{1 - p_u} X_t > \eps \eps'$ and we are done,
	or $d_t = \frac{1 - p_u}{p_u} \gamma_t - 2f(M_t)$.
	If $X_t \geq \frac{5}{8}$ then
	$d_t > \frac{1 - p_u}{p_u} X_t - 2f(M_t) > X_t - \frac{1}{2} \geq \frac{1}{8}$,
	since $f(M_t) \leq \frac{\delta}{2} < \frac{1}{4}$.
	If $X_t < \frac{5}{8}$ then
	$d_t = 1 - f(M_t) - X_t > \frac{3}{4} - X_t > \frac{1}{8}$.
	Hence either $X_{t+1} - X_t = d_t > \min\{ \eps \eps', \frac{1}{8} \}$
	or $X_t - X_{t+1} = \frac{1 - p_u}{p_u} d_t > d_t
	> \min\{ \eps \eps', \frac{1}{8} \}$.
\end{enumerate}

\newclaim{For all $m \in \SetN$,
if $E_{m,m-1} \neq \emptyset$ then
$P(E_{m,m} \mid E_{m,m-1}) \geq 1 - 2f(m)$.}
\label{claim:single-upcrossing-bound}
Let $v \in E_{m,m-1}$
and let $t_0 \in \SetN$ be a time step such that
exactly $m - 1$ upcrossings have been completed up to time $t_0$, i.e.,
$M_{t_0}(v) = m$.
The subsequent downcrossing is completed eventually with probability $1$:
we are in case (i) and
in every step there is a chance of $1 - p_u \geq \frac{1}{2}$ of completing the downcrossing.
Therefore we assume without loss of generality that the downcrossing has been completed,
i.e., that $t_0$ is such that $X_{t_0}(v) = 1 - f(m)$.
We will bound the probability $p := P(E_{m,m} \mid E_{m,m-1})$
that $X_t$ rises above $1 + f(m)$ after $t_0$ to complete the $m$-th upcrossing.

Define the stopping time $T: \Sigma^\cinfty \to \SetN \cup \{ \cinfty \}$,
\[
T(v) := \inf \{ t \geq t_0 \mid X_t(v) \geq 1 + f(m) \;\lor\; X_t(v) = 0 \},
\]
and define the stochastic process
$Y_t = 1 + f(m) - X_{\min\{t_0 + t, T\}}$.
Because $(X_{\min\{t_0 + t, T\}})_{t \in \SetN}$ is martingale,
$(Y_t)_{t \in \SetN}$ is martingale.
By definition, $X_t$ always stops at $1 + f(m)$ before exceeding it,
thus $X_T \leq 1 + f(m)$,
and hence $(Y_t)_{t \in \SetN}$ is nonnegative.
The \hyperref[thm:optional-stopping]{Optional Stopping Theorem} yields
$\E[Y_{T - t_0} \mid \F_{t_0}] \leq \E[Y_0 \mid \F_{t_0}]$ and thus
$\E[X_T \mid \F_{t_0}] \geq \E[X_{t_0} \mid \F_{t_0}] = 1 - f(m)$.
We show that $X_T \in \{ 0, 1 + f(m) \}$ almost surely.
If $T$ is finite then this holds by definition of $T$.
If $T = \omega$ then the random variable $X_T$ is defined as
the limit $\lim_{t \to \infty} X_t$.
By \autoref{claim:X-does-not-converge} the limit $X_T \in \{ 0, 1 \}$
and according to \autoref{claim:X_t-jumps} we have
$X_t \leq 1 - f(M_t)$ for all $t \in \SetN$,
so $X_t$ cannot converge to $1$.
We conclude that
\[
1 - f(m) \leq \E[X_T \mid \F_{t_0}] = (1 + f(m)) \cdot p + 0 \cdot (1 - p),
\]
hence
$
     P(E_{m,m} \mid E_{m,m-1})
=    p
\geq 1 - f(m) (1 + p)
\geq 1 - 2f(m)
$.

\newclaim{$E_{m+1,m} = E_{m,m}$ and $E_{m+1,m+1} \subseteq E_{m,m}$.}
\label{claim:E-increasing}
By definition of $M_t$,
the $i$-th upcrossings of the process $(X_t)_{t \in \SetN}$
is between $1 - f(i)$ and $1 + f(i)$.
The function $f$ is monotone decreasing,
and by \autoref{claim:X_t-jumps} the process $(X_t)_{t \in \SetN}$
does not assume values between $1 - f(i)$ and $1 + f(i)$.
Therefore the first $m$ $f(m+1)$-upcrossings are also $f(m)$-upcrossings,
i.e., $E_{m+1,m} \subseteq E_{m,m}$.
By definition of $E_{m,k}$ we have
$E_{m+1,m} \supseteq E_{m,m}$ and
$E_{m+1,m+1} \subseteq E_{m+1,m}$.

\newclaim{$P(E_{m,m}) \geq 1 - \sum_{i=1}^m 2f(i)$.}
\label{claim:bound-on-E}
For $P(E_{0,0}) = 1$ this holds trivially.
Using \autoref{claim:single-upcrossing-bound} and \autoref{claim:E-increasing}
we conclude inductively
\begin{align*}
      P(E_{m,m})
&=    P(E_{m,m} \cap E_{m,m-1})
 =    P(E_{m,m} \mid E_{m,m-1}) P(E_{m,m-1}) \\
&=    P(E_{m,m} \mid E_{m,m-1}) P(E_{m-1,m-1}) \\
&\geq (1 - 2f(m)) \left( 1 - \sum_{i=1}^{m-1} 2f(i) \right)
 \geq 1 - \sum_{i=1}^m 2f(i).
\end{align*}
\end{claims}

From \autoref{claim:E-increasing} follows
$\bigcap_{i=1}^m E_{i,i} = E_{m,m}$ and therefore
$P(\bigcap_{i=1}^\infty E_{i,i})
= \lim_{m \to \infty} P(E_{m,m})
\geq 1 - \sum_{i=1}^\infty 2f(i)
\geq 1 - \delta$.
\end{proof}

\autoref{thm:lower-bound} gives a \emph{uniform} lower bound on the probability
for many upcrossings:
it states the probability of the event that \emph{for all $m \in \SetN$},
$U(1 - f(m), 1 + f(m)) \geq m$ holds.
This is a lot stronger than the nonuniform bound
$P[U(1 - f(m), 1 + f(m)) \geq m] \geq 1 - \delta$ for all $m \in \SetN$:
the quantifier is inside the probability statement.

As an immediate consequence of \autoref{thm:lower-bound},
we get the following uniform lower bound on the \emph{expected} number of upcrossings.

\begin{corollary}[Expected Upcrossings]
\label{cor:lower-bound}
\ifshort
Under the same conditions as in Theorem~\ref{thm:lower-bound},
\else
Let $0 < \delta < 1/2$ and
let $f:\SetN \to [0, 1)$ be any monotone decreasing function such that
$\sum_{i=1}^\infty f(i) \leq \delta / 2$.
For every probability measure $P$ with perpetual entropy
there is a nonnegative martingale $(X_t)_{t \in \SetN}$
with $\E[X_t] = 1$ and
\fi
for all $m \in \SetN$,
\[
\E[U(1 - f(m), 1 + f(m))] \geq m (1 - \delta).
\]
\end{corollary}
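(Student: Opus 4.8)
The plan is to derive the expectation bound directly from \autoref{thm:lower-bound} using only the nonnegativity of the upcrossing count and an elementary Markov-type estimate; no new construction is needed, since the martingale $(X_t)_{t\in\SetN}$ and the measure $P$ are exactly those supplied by \autoref{thm:lower-bound}.

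First I would fix $m \in \SetN$ and observe that the event
\[
\{\, v \mid \forall m'.\; U(1-f(m'),1+f(m'))(v) \geq m' \,\}
\]
whose probability \autoref{thm:lower-bound} bounds below by $1-\delta$ is, in particular, contained in the single event $\{\, v \mid U(1-f(m),1+f(m))(v) \geq m \,\}$ (just instantiate the universally quantified $m'$ at $m'=m$). Monotonicity of probability therefore gives
\[
P\big[\, U(1-f(m),1+f(m)) \geq m \,\big] \;\geq\; 1-\delta .
\]

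Next I would use that $U(1-f(m),1+f(m))$ is a nonnegative random variable (it takes values in $\SetN \cup \{\cinfty\}$), so that
\[
\E[U(1-f(m),1+f(m))]
\;\geq\; \E\big[\, U(1-f(m),1+f(m)) \cdot \mathbbm{1}_{\{U(1-f(m),1+f(m)) \geq m\}} \,\big]
\;\geq\; m \cdot P\big[\, U(1-f(m),1+f(m)) \geq m \,\big].
\]
Combining the two displays yields $\E[U(1-f(m),1+f(m))] \geq m(1-\delta)$, which is the claim; since $m$ was arbitrary this holds for all $m\in\SetN$.

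There is essentially no hard step here: the only subtlety worth stating explicitly is the one already emphasised in the discussion after \autoref{thm:lower-bound}, namely that the quantifier over $m'$ lies \emph{inside} the probability, so that a single run of the process simultaneously witnesses $\geq m'$ upcrossings of magnitude $f(m')$ for every $m'$ at once; without that uniformity one could not pass from the probabilistic statement to the per-$m$ expectation bound in one line. Everything else is the layer-cake (Markov) inequality for a nonnegative integer-valued random variable.
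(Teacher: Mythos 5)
Your proof is correct and is exactly the argument the paper intends: the paper's one-line proof reads ``From Theorem~\ref{thm:lower-bound} and Markov's inequality,'' and you have simply unpacked that line by instantiating the universal quantifier at $m' = m$ and then applying the Markov lower bound $\E[U] \geq m\,P[U \geq m]$ for the nonnegative random variable $U = U(1-f(m),1+f(m))$. No difference in approach; your extra remark about the quantifier sitting inside the probability is a fair and accurate observation, though strictly speaking only the instantiation at a single $m$ is needed for this corollary.
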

\begin{proof}
From \autoref{thm:lower-bound} and  Markov's inequality.
\end{proof}

\ifshort
By choosing the slowly decreasing but summable function $f$ by setting
$f^{-1}(\eps) := 2\delta(\frac{1}{\eps (\ln \eps)^2} - \frac{e^2}{4})$,
\else
By choosing a specific slowly decreasing but summable function $f$,
\fi
we get the following concrete results.

\begin{corollary}[Concrete lower bound]
\label{cor:lower-bound-concrete}
Let $0 < \delta < 1 / 2$.
For every probability measure $P$ with perpetual entropy
there is a nonnegative martingale $(X_t)_{t \in \SetN}$
with $\E[X_t] = 1$ such that
\begin{gather*}
P \left[ \forall \eps > 0.\; U(1 - \eps, 1 + \eps)
  \in \Omega \left( \tfrac{\delta}{\eps \left( \ln \frac{1}{\eps} \right)^2} \right) \right]
\geq 1 - \delta
\text{ and } \\
     \E[U(1 - \eps, 1 + \eps)]
\in \Omega \Big( \tfrac{1}{\eps \left( \ln \frac{1}{\eps} \right)^2} \Big).
\end{gather*}
Moreover, for all $\eps < 0.015$ we get
$\E[U(1 - \eps, 1 + \eps)] > \tfrac{\delta (1 - \delta)}{\eps \left( \ln \frac{1}{\eps} \right)^2}$ and
\[
P \left[ \forall \eps < 0.015.\; U(1 - \eps, 1 + \eps)
  > \tfrac{\delta}{\eps \left( \ln \frac{1}{\eps} \right)^2} \right]
\geq 1 - \delta.
\]
\end{corollary}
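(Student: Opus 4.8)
The plan is to instantiate \autoref{thm:lower-bound} and \autoref{cor:lower-bound} with one carefully chosen summable function $f$ and then rewrite their conclusions — which are uniform in the upcrossing index $m$ — as bounds that are uniform in the magnitude $\eps$. First I would introduce the strictly decreasing function $h:(0,e^{-2}]\to[0,\infty)$, $h(\eps):=2\delta\bigl(\tfrac{1}{\eps(\ln\frac{1}{\eps})^2}-\tfrac{e^2}{4}\bigr)$; this is a bijection because $\eps\mapsto\tfrac{1}{\eps(\ln\frac{1}{\eps})^2}$ is strictly decreasing on $(0,e^{-2}]$ with minimum value $\tfrac{e^2}{4}$ attained at $\eps=e^{-2}$, so $h(e^{-2})=0$ and $h(\eps)\to\infty$ as $\eps\to0$. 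Let $f:\SetN\to(0,e^{-2}]\subseteq[0,1)$ be the restriction of $h^{-1}$ to $\SetN$; then $f$ is monotone decreasing, as \autoref{thm:lower-bound} demands.

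Second, I would verify summability. Because $f$ is decreasing, $\sum_{i=1}^\infty f(i)\le\int_0^\infty f(x)\,dx$, and the substitution $x=h(\eps)$ followed by integration by parts turns this into $\int_0^{e^{-2}}h(\eps)\,d\eps$: the two boundary terms vanish since $\eps\,h(\eps)\to0$ as $\eps\to0$ and $h(e^{-2})=0$. Using the antiderivative $-1/\ln\eps$ of $\eps\mapsto1/(\eps(\ln\eps)^2)$ this integral evaluates to $2\delta\bigl(\tfrac12-\tfrac{e^2}{4}e^{-2}\bigr)=2\delta(\tfrac12-\tfrac14)=\tfrac\delta2$, so $\sum_{i=1}^\infty f(i)\le\delta/2$ and the hypotheses of \autoref{thm:lower-bound} and \autoref{cor:lower-bound} hold. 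They supply a nonnegative martingale $(X_t)_{t\in\SetN}$ with $\E[X_t]=1$ for which $G:=\{\forall m.\,U(1-f(m),1+f(m))\ge m\}$ has $P(G)\ge1-\delta$ and $\E[U(1-f(m),1+f(m))]\ge m(1-\delta)$ for all $m$.

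Third comes the change of variable, which is the only step with real content. Fix $\eps\in(0,e^{-2})$ and put $m_\eps:=\lfloor h(\eps)\rfloor\in\SetN$. Since $m_\eps\le h(\eps)$ and $h^{-1}$ is decreasing, $f(m_\eps)=h^{-1}(m_\eps)\ge\eps$, hence $[1-\eps,1+\eps]\subseteq[1-f(m_\eps),1+f(m_\eps)]$ and so $U(1-\eps,1+\eps)\ge U(1-f(m_\eps),1+f(m_\eps))$ pointwise (a narrower band admits at least as many upcrossings). Thus on $G$ we have $U(1-\eps,1+\eps)\ge m_\eps\ge h(\eps)-1=2\delta\bigl(\tfrac{1}{\eps(\ln\frac{1}{\eps})^2}-\tfrac{e^2}{4}\bigr)-1$, and taking expectations via \autoref{cor:lower-bound} gives $\E[U(1-\eps,1+\eps)]\ge(1-\delta)m_\eps\ge(1-\delta)(h(\eps)-1)$. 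The two $\Omega$-statements are now immediate because $h(\eps)-1\sim\tfrac{2\delta}{\eps(\ln(1/\eps))^2}$ as $\eps\to0$, and for the explicit inequalities I would restrict to $\eps<0.015$, where $\tfrac{1}{\eps(\ln\frac{1}{\eps})^2}$ is large enough that the additive constants $\tfrac{e^2}{4}$ and the unit floor error are swamped, yielding $U(1-\eps,1+\eps)>\tfrac{\delta}{\eps(\ln(1/\eps))^2}$ on $G$ and $\E[U(1-\eps,1+\eps)]>\tfrac{\delta(1-\delta)}{\eps(\ln(1/\eps))^2}$.

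I expect essentially no conceptual difficulty: turning the ``$\forall m$ inside $P[\cdot]$'' guarantee into a ``$\forall\eps$ inside $P[\cdot]$'' guarantee is painless because $G$ is one event that witnesses every $m$ at once, and $m_\eps=\lfloor h(\eps)\rfloor$ is a deterministic function of $\eps$. The two places that need care are (a) the definite integral $\int_0^\infty f=\delta/2$, which relies on spotting the antiderivative $-1/\ln\eps$ and on both boundary terms of the integration by parts actually vanishing, and (b) the fussy but elementary numerical estimate that fixes the cutoff $0.015$ — essentially locating where $\tfrac{1}{\eps(\ln(1/\eps))^2}$ overtakes $\tfrac{e^2}{2}$, so that $h(\eps)$ remains above $\tfrac{\delta}{\eps(\ln(1/\eps))^2}$ on the relevant range.
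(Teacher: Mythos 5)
Your proposal is correct and follows essentially the same route as the paper: you invert the same target rate $\eps\mapsto 2\delta\bigl(\tfrac{1}{\eps(\ln(1/\eps))^2}-\tfrac{e^2}{4}\bigr)$ to build $f$, verify $\sum_i f(i)\le\delta/2$ by the same change-of-variable integral, and then feed $f$ into \autoref{thm:lower-bound} and \autoref{cor:lower-bound}. The one point where you are more careful than the paper is in taking $m_\eps=\lfloor h(\eps)\rfloor$ and tracking the resulting $-1$ discretization loss rather than substituting the non-integer value $g(\eps)$ directly into ``$U\ge m$''; this is a sound refinement of the same argument, not a different method.
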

\ifshort\else
\begin{proof}
Define
\[
  g:(0,e^{-2}] \to [0, \infty),
  \quad\quad
  \eps \mapsto 2\delta \left( \frac{1}{\eps (\ln \eps)^2} - \frac{e^2}{4} \right).
\]
We have $g(e^{-2}) = 0$, $\lim_{\eps \to 0} g(\eps) = \infty$, and
\[
  \frac{dg}{d\eps}(\eps)
~=~ 2\delta \left(
    \frac{-1}{\eps^2 (\ln \eps)^2}
    + \frac{-2}{\eps^2 (\ln \eps)^3}
  \right) ~=~
  -\frac{2\delta(2+\ln\eps)}{\eps^2(\ln \eps)^3}
  ~<~ 0 \text{ on } (0, e^{-2}).
\]
Therefore the function $g$ is strictly monotone decreasing and hence invertible.
Choose $f := g^{-1}$.
Using the substitution $t = g(\eps)$, $dt = \frac{dg}{d\eps}(\eps)d\eps$,
\begin{align*}
      \sum_{t=1}^\infty f(t)
&\leq \int_0^\infty f(t) dt
 =    \int_{g^{-1}(0)}^{g^{-1}(\infty)} f(g(\eps)) \frac{dg}{d\eps}(\eps) d\eps \\
&=    2\delta \left(
        \int_{e^{-2}}^0 \frac{-1}{\eps (\ln \eps)^2} d\eps
        + \int_{e^{-2}}^0 \frac{-2}{\eps (\ln \eps)^3} d\eps
      \right) \\
&=    2\delta \left(
        \left[ \tfrac{1}{\ln \eps} \right]_{e^{-2}}^0
        + \left[ \tfrac{1}{(\ln \eps)^2} \right]_{e^{-2}}^0
      \right)
 =    2\delta \left( \tfrac{1}{2} - \tfrac{1}{4} \right)
 =    \tfrac{\delta}{2}.
\end{align*}
Now we apply \autoref{thm:lower-bound} and \autoref{cor:lower-bound} to $m := g(\eps)$ and get
\begin{align*}
      P \left[ U(1 - \eps, 1 + \eps)
         \geq 2\delta \left( \tfrac{1}{\eps (\ln \eps)^2} - \tfrac{e^2}{4} \right) \right]
&\geq 1 - \delta, \text{ and} \\
      \E[U(1 - \eps, 1 + \eps)]
&\geq 2 \delta (1 - \delta)
        \left( \tfrac{1}{\eps (\ln \eps)^2} - \tfrac{e^2}{4} \right).
\end{align*}
For $\eps < 0.015$, we have $\frac{1}{\eps \left( \ln \eps \right)^2} > \frac{e^2}{2}$, hence
$g(\eps) > \frac{\delta}{\eps (\ln \eps)^2}$.
\end{proof}\fi

The concrete bounds given in \autoref{cor:lower-bound-concrete}
are \emph{not} the asymptotically optimal ones:
there are summable functions that decrease even more slowly.
For example,
we could multiply \ifshort $f^{-1}$ \else the function $g$ \fi with the factor
$\sqrt{\ln(1/\eps)}$ (which still is not optimal).

\section{Martingale Upper Bounds}
\label{sec:upper-bounds}

In this section we state upper bounds on the probability and expectations of many upcrossings
(\hyperref[thm:Dubins-inequality]{Dubins' Inequality} and
\hyperref[thm:upcrossing-inequality]{Doob's Upcrossing Inequality}).
We use the construction from the previous section
to show that these bounds are asymptotically tight.
Moreover, with the following theorem we show that
the uniform lower bound on the probability of many upcrossings guaranteed in \autoref{thm:lower-bound}
is also asymptotically tight.

Every function $f$ is either summable or not.
If $f$ is summable, then we can scale it with a constant factor such that
its sum is smaller than $\frac{\delta}{2}$,
and then apply the construction of \autoref{thm:lower-bound}.
If $f$ is not summable,
the following theorem implies that
there is no \emph{uniform} lower bound on
the probability of having at least $m$-many $f(m)$-upcrossings.

\begin{theorem}[Upper bound on upcrossing rate]
\label{thm:upper-bound}
Let $f: \SetN \to [0, 1)$ be a monotone decreasing function such that
$\sum_{t=1}^\infty f(t) = \infty$.
For every probability measure $P$ and
for every nonnegative $P$-martingale $(X_t)_{t \in \SetN}$
with $\E[X_t] = 1$,
\[
P[ \forall m.\; U(1 - f(m), 1 + f(m)) \geq m] = 0.
\]
\end{theorem}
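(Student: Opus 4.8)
The plan is to prove $P(A)=0$ for $A:=\bigcap_{m\in\SetN}E_{m,m}$, where $E_{m,k}=\{U(1-f(m),1+f(m))\ge k\}$ is as in \autoref{sec:martingale-upcrossings}; since $\F_0$ is trivial we may assume $X_0=1$. The mechanism is a chained application of the \hyperref[thm:optional-stopping]{Optional Stopping Theorem}, dual to the telescoping in the proof of \autoref{thm:lower-bound}: I will extract from $A$ an infinite sequence of ``upcrossing steps'' performed by $(X_t)_{t\in\SetN}$ whose conditional success probabilities shrink geometrically in the number of steps, so that non-summability of $f$ forces $P(A)=0$.

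Concretely, track the level counter $M_t$ from the proof of \autoref{thm:lower-bound} (one plus the length of the longest initial run of levels $k$ with $U_t(1-f(k),1+f(k))\ge k$), and define stopping times $\rho_0:=0$ and, inductively, $\sigma_k:=\inf\{t\ge\rho_{k-1}:X_t\le 1-f(m_k)\}$ and $\rho_k:=\inf\{t\ge\sigma_k:X_t\ge 1+f(m_k)\}$, where $m_k:=M_{\rho_{k-1}}$ is the first not-yet-completed level at time $\rho_{k-1}$. On $A$ one has $M_t\to\infty$; moreover $X_{\rho_{k-1}}>1-f(m_k)$, so the next $f(m_k)$-upcrossing must begin strictly after $\rho_{k-1}$, and a short argument shows $\sigma_k,\rho_k<\infty$ for all $k$, whence $A\subseteq\bigcap_k\{\rho_k<\infty\}$. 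For the one-step bound, apply the \hyperref[thm:optional-stopping]{Optional Stopping Theorem} to the nonnegative martingale $(X_{\min\{\sigma_k+s,\,\theta_k\}})_s$ with $\theta_k:=\inf\{t\ge\sigma_k:X_t\ge 1+f(m_k)\}$: since $1+f(m_k)$ is attained before being exceeded, $(1+f(m_k))\,P(\theta_k<\infty\mid\F_{\sigma_k})\le \E[X_{\theta_k}\mid\F_{\sigma_k}]\le X_{\sigma_k}\le 1-f(m_k)$, so $P(\rho_k<\infty\mid\F_{\sigma_k})\le \frac{1-f(m_k)}{1+f(m_k)}\le 1-f(m_k)$ on $\{\sigma_k<\infty\}$. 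Telescoping over $k$ gives $P(\bigcap_{k\le n}\{\rho_k<\infty\})\le \E\big[\prod_{k=1}^n(1-f(m_k))\big]$, hence $P(A)\le \E\big[\prod_{k=1}^\infty(1-f(m_k))\big]$.

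It remains to show $\sum_k f(m_k)=\infty$ on $A$, which makes $\prod_k(1-f(m_k))=0$ there and finishes the proof; this is the main obstacle. The $m_k$ are strictly increasing with $m_k\ge k$, but that alone is insufficient: completing a wide oscillation automatically completes the required upcrossing counts at every finer scale, so $M$ can jump over many levels at once and the $m_k$ could be as sparse as $1,2,4,8,\dots$, for which $\sum_k f(m_k)$ may converge. The resolution is to account for \emph{every} oscillation of $(X_t)_{t\in\SetN}$ around $1$, not one per level: on $A$ one has $U(1-\delta,1+\delta)\ge\#\{m:f(m)\ge\delta\}$ for $0<\delta\le f(1)$, hence $\int_0^{f(1)}U(1-\delta,1+\delta)\,d\delta\ge\sum_{t=1}^\infty f(t)=\infty$, so $A$ forces $(X_t)_{t\in\SetN}$ to perform oscillations around $1$ whose amplitudes $w_1,w_2,\dots$ (listed in time order) satisfy $\sum_k w_k=\infty$. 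Redefining the chain so that its $k$-th step is the genuine downcrossing-then-upcrossing realizing the $k$-th such oscillation — the delicate points being that each step must occur strictly after the previous $\rho_{k-1}$ and that the step width must be $\F_{\rho_{k-1}}$-measurable — then yields $P(A)\le\E\big[\prod_k(1-w_k)\big]=0$, since $\prod_k(1-w_k)=0$ wherever $\sum_k w_k=\infty$.
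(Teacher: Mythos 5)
Your first two paragraphs are essentially a multiplicative (chained optional-stopping) rephrasing of the paper's mechanism, and your diagnosis of the obstacle is correct and substantive: a single wide oscillation certifies the required counts at many levels at once, so a chain that charges one factor per \emph{level} can be too sparse. The problem is that your third paragraph, which is where the proof actually has to happen, is a plan rather than an argument, and the step you leave open is precisely the hard one. The amplitude $w_k$ of the $k$-th oscillation is determined only after that oscillation is over — indeed, even the depth of the descent is not known at the moment the ascent begins, since you cannot tell that the process has bottomed out without looking into the future — so $w_k$ is not measurable at the time the optional-stopping bet must be placed, and the factor $(1-w_k)$ cannot be extracted by the argument of your second paragraph. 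The natural adaptive substitute (when at $1-a$, bet on reaching $1+a$) collects only the factor $\frac{1-a}{1+a}$ even when the realized oscillation turns out much wider. Moreover, the concluding identity $\E\big[\prod_k(1-w_k)\big]=0$ does not follow as stated: $\sum_k w_k=\infty$ is guaranteed only on $A$, not almost surely, so the expectation of the product need not vanish; you would need a uniform quantitative bound such as $P\big(A\cap\{\textstyle\sum_{k\le n}w_k\ge S\}\big)\le e^{-S}$, which again presupposes predictable targets. Your layer-cake observation $\int_0^{f(1)}U(1-\delta,1+\delta)\,d\delta\ge\sum_t f(t)$ on $A$ is correct and genuinely illuminating, but it only shows that infinite ``work'' must be done; converting that into a probability bound is the missing content.

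For comparison, the paper's proof of \autoref{thm:upper-bound} avoids products over oscillations entirely. It assumes for contradiction that $P\big(\bigcap_{i\le m}E_{i,i}\big)\ge c>0$ for all $m$ and bounds the probability of each \emph{disjoint} increment $\bigcap_{i\le m}E_{i,i}\setminus E_{m+1,m+1}$ from below by $c\,f(m+1)$, using a single application of the \hyperref[thm:optional-stopping]{Optional Stopping Theorem} at the one level $f(m+1)$ to show that the $(m+1)$-st required $f(m+1)$-upcrossing fails with conditional probability at least $\frac{2f(m+1)}{1+f(m+1)}\ge f(m+1)$; summing these disjoint probabilities contradicts $\sum_m f(m)=\infty$. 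Each level is charged additively exactly once, so the sparsity phenomenon you worry about never enters the bookkeeping (although the paper's one-step estimate does implicitly assume that the $(m+1)$-st $f(m+1)$-upcrossing is attempted only after the level-$\le m$ requirements are certified, which brushes against the same subtlety you identified). If you want to complete your write-up, the most economical repair is to abandon the per-oscillation product and adopt this additive accounting over disjoint events.
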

\begin{proof}
Define the events $D_m := \bigcup_{i=1}^m E_{i,i}^c = 
\{ \forall i \leq m.\; U(1 - f(i), 1 + f(i)) \geq i \}$.
Then $D_m \subseteq D_{m+1}$.
Assume there is a constant $c > 0$ such that
$c \leq P(D_m^c) = P(\bigcap_{i=1}^m E_{i,i})$ for all $m$.
Let $m \in \SetN$, $v \in D_m^c$, and pick $t_0 \in \SetN$
such that the process $X_0(v), \ldots, X_{t_0}(v)$
has completed $i$-many $f(i)$-upcrossings for all $i \leq m$ and
$X_{t_0}(v) \leq 1 - f(m+1)$.
If $X_t(v) \geq 1 + f(m+1)$ for some $t \geq t_0$,
the $(m + 1)$-st upcrossing for $f(m + 1)$ is completed
and thus $v \in E_{m+1, m+1}$.
Define the stopping time $T: \Sigma^\cinfty \to (\SetN \cup \{ \cinfty \})$,
\[
T(v) := \inf \{ t \geq t_0 \mid X_t(v) \geq 1 + f(m+1) \}.
\]
According to the \hyperref[thm:optional-stopping]{Optional Stopping Theorem}
applied to the process $(X_t)_{t \geq t_0}$,
the random variable $X_T$ is almost surely well-defined and
$\E[X_T \mid \F_{t_0}] \leq \E[X_{t_0} \mid \F_{t_0}] = X_{t_0}$.
This yields
$1 - f(m+1) \geq X_{t_0} \geq \E[X_T \mid \F_{t_0} ]$ and
by taking the expectation $\E[ \;\cdot \mid X_{t_0} \leq 1 - f(m+1)]$ on both sides,
\begin{align*}
      1 - f(m+1)
&\geq \E[ X_T \mid X_{t_0} \leq 1 - f(m+1) ] \\
&\geq (1 + f(m+1)) P[X_T \geq 1 + f(m+1) \mid X_{t_0} \leq 1 - f(m+1) ]
\end{align*}
by Markov's inequality.
Therefore
\begin{align*}
      P( E_{m+1,m+1} \mid D_m^c )
=    \;&P[ X_T \geq 1 + f(m+1) \mid X_{t_0} \leq 1 - f(m+1)] \\
     \;&\cdot P[ X_{t_0} \leq 1 - f(m+1) \mid D_m^c] \\
\leq \;&P[X_T \geq 1 + f(m+1) \mid X_{t_0} \leq 1 - f(m+1)] \\
\leq \;&\tfrac{1 - f(m+1)}{1 + f(m+1)} \leq 1 - f(m+1).
\end{align*}
Together with $c \leq P(D_m^c)$ we get
\begin{align*}
     P \left( D_{m+1} \setminus D_m \right)
&=    P \left( E_{m+1,m+1}^c \cap D_m^c \right) \\
&=    P \left( E_{m+1,m+1}^c \mid D_m^c \right) P \left( D_m^c \right)
\geq f(m+1) c.
\end{align*}
This is a contradiction because $\sum_{i=1}^\infty f(i) = \infty$:
\[
      1
\geq  P(D_{m+1})
 =    P \left( \biguplus_{i=1}^m (D_{i+1} \setminus D_i) \right) \\
 =    \sum_{i=1}^m P(D_{i+1} \setminus D_i)
 \geq \sum_{i=1}^m f(i+1) c
 \to  \infty.
\]
Therefore the assumption $P(D_m^c) \geq c$ for all $m$ is false,
and hence we get
$
  P[ \forall m.\; U(1 - f(m), 1 + f(m)) \geq m]
= P(\bigcap_{i=1}^\infty E_{i,i})
= \lim_{m \to \infty} P(D_m^c)
= 0
$.
\end{proof}

\ifshort
By choosing the decreasing non-summable function $f$ by setting
$f^{-1}(\eps) := \frac{-a}{\eps (\ln \eps)} - b$
\else
By choosing a specific decreasing non-summable function $f$
\fi
for \autoref{thm:upper-bound},
we get that
$U(1-\eps, 1 + \eps) \notin \Omega(\frac{1}{\eps \log(1/\eps)})$
$P$-almost surely.

\begin{corollary}[Concrete upper bound]
\label{cor:upper-bound-concrete}
Let $P$ be a probability measure and
let $(X_t)_{t \in \SetN}$ be a nonnegative martingale
with $\E[X_t] = 1$.
Then for all $a, b > 0$,
\[
P \left[ \forall \eps > 0.\; U(1 - \eps, 1 + \eps) \geq \tfrac{a}{\eps \log(1/\eps)} - b \right] = 0.
\]
\end{corollary}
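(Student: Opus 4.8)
The plan is to obtain this from \autoref{thm:upper-bound} applied to a carefully chosen monotone decreasing non-summable function. Since changing the base of $\log$ only rescales the constant $a$, I may take $\log=\ln$ throughout. The key reduction is that it suffices to exhibit a monotone decreasing $f:\SetN\to[0,1)$ with $\sum_{m=1}^\infty f(m)=\infty$ and
\[
\frac{a}{f(m)\ln(1/f(m))}-b \;\geq\; m \qquad\text{for all }m\in\SetN .
\]
Indeed, granting this, any outcome $v$ with $U(1-\eps,1+\eps)(v)\geq \frac{a}{\eps\ln(1/\eps)}-b$ for all $\eps>0$ satisfies, upon specializing $\eps:=f(m)\in(0,1)$, the inequality $U(1-f(m),1+f(m))(v)\geq m$; thus the event in the corollary is contained in $\{\forall m.\;U(1-f(m),1+f(m))\geq m\}$, which is $P$-null by \autoref{thm:upper-bound} because $f$ is not summable, and hence the event in the corollary is $P$-null as well.

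To produce such an $f$ I would set $f(m):=a/\bigl(4(m+K)\ln(m+K)\bigr)$ for a constant $K=K(a,b)$ chosen sufficiently large, and then verify three points, each of which holds once $K$ is large enough. Monotonicity: $x\mapsto x\ln x$ is increasing on $[1,\infty)$, so $f$ is decreasing, and $f(m)<1$ for every $m$. Non-summability: $\sum_{m\geq1}f(m)=\tfrac{a}{4}\sum_{n\geq K+1}\tfrac{1}{n\ln n}=\infty$ by the integral test. The domination inequality: since $1/f(m)=4(m+K)\ln(m+K)/a$,
\[
f(m)\ln\frac{1}{f(m)} \;=\; \frac{a}{4(m+K)\ln(m+K)}\Bigl(\ln(m+K)+\ln\ln(m+K)+\ln\tfrac{4}{a}\Bigr) \;\leq\; \frac{a}{2(m+K)} ,
\]
where the last step uses $\ln\ln x+\ln(4/a)\leq\ln x$ for all $x\geq K$, which is true uniformly once $K$ is large. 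Hence $a/(f(m)\ln(1/f(m)))\geq 2(m+K)\geq m+b$ as soon as $2K\geq b$, which is exactly the required inequality for all $m$.

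I do not foresee a genuine obstacle; once the reduction is in place the rest is a short estimate. The only mildly delicate point is to make $\frac{a}{f(m)\ln(1/f(m))}-b\geq m$ hold \emph{uniformly in $m$} --- small $m$ included --- while keeping $f$ monotone; the shift by $K$, together with the slack between $2(m+K)$ and $m+b$, absorbs this. An alternative that mirrors the proof of \autoref{cor:lower-bound-concrete} is to set $g(\eps):=\frac{-a}{\eps\ln\eps}-b$, take $f:=g^{-1}$, and check $\int_0 f(t)\,dt=\infty$ via the substitution $u=\ln\eps$; but this first requires noticing that it is enough to prove the statement for some $a'\leq a$ and $b'\geq b$ (which only shrinks the event), choosing them so that $a'<b'/e$, and restricting $g$ to the subinterval of $(0,1/e)$ where $\eps\mapsto-\eps\ln\eps$ is increasing and $g$ is positive --- a case distinction the direct construction above avoids.
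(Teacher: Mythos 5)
Your proof is correct and rests on the same pivot as the paper's: reduce to Theorem~\ref{thm:upper-bound} by producing a monotone decreasing, non-summable $f:\SetN\to[0,1)$ for which the corollary's threshold at $\eps=f(m)$ is at least $m$. The execution differs. The paper sets $g(\eps):=\frac{a}{\eps\ln(1/\eps)}-b$ on a suitable subinterval $(0,c]$, verifies $g$ is strictly decreasing there, takes $f:=g^{-1}$ (so the threshold at $\eps=f(m)$ equals $m$ exactly), and proves $\sum_t f(t)=\infty$ by a change of variables in $\int f$; this requires a little care about the domain on which $g$ is positive and invertible, which you correctly identified as the delicate point of the $g^{-1}$ route. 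You instead write down an explicit $f(m):=a/\bigl(4(m+K)\ln(m+K)\bigr)$, for which monotonicity, values in $[0,1)$, and non-summability are immediate, and you only need the one-sided estimate $\frac{a}{f(m)\ln(1/f(m))}-b\geq m$, which your elementary bound $\ln\ln x + \ln(4/a)\leq\ln x$ for $x\geq K$ delivers uniformly once $K$ is large. What this buys is the elimination of the invertibility and domain book-keeping; the small cost is that the event inclusion (corollary's event $\subseteq$ theorem's event) must be stated explicitly, which you do, whereas the paper leaves it implicit in ``apply Theorem~\ref{thm:upper-bound} to $m:=g(\eps)$.'' The change of base of the logarithm being absorbed into $a$ is also handled correctly.
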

\ifshort\else
\begin{proof}
We proceed analogously to the proof of \autoref{cor:lower-bound}.
Define
\[
  g: (0, c] \to [g(c), \infty), ~~~~~~ \eps \mapsto \frac{a}{\eps \ln \frac{1}{\eps}} - b
\]
with $c < 1$ and $g(c) \geq 1$.
We have $\lim_{\eps \to 0} g(\eps) \to \infty$ and
\[
  \frac{dg}{d\eps}(\eps)
= \frac{-a}{\eps^2 \ln \frac{1}{\eps}}
  + \frac{-a}{\eps^2 (\ln \frac{1}{\eps})^2}
< 0 \text{ on } (0, c].
\]
Therefore the function $g$ is strictly monotone decreasing and hence invertible.
Choose $f := g^{-1}$.
Using the substitution $t = g(\eps)$,
$dt = \frac{dg}{d\eps}(\eps) d\eps$,
\begin{align*}
      \sum_{t=1}^\infty f(t)
&\geq \int_{g(c)}^\infty f(t) dt
 =    \int_c^{g^{-1}(\infty)} f(g(\eps)) \frac{dg}{d\eps}(\eps) d\eps \\
&=    \int_c^0 \frac{-a}{\eps \ln \frac{1}{\eps}} d\eps
        + \int_c^0 \frac{-a}{\eps (\ln \frac{1}{\eps})^2} d\eps
 =    \int_{-\ln c}^{-\ln 0} \frac{a}{u} du
        + \int_c^0 \frac{-a}{\eps (\ln \frac{1}{\eps})^2} d\eps \\
&=    \left[ a \ln u \right]_{-\ln c}^{+\infty}
        + \left[ \tfrac{a}{\ln \frac{1}{\eps}} \right]_c^0
 =    \infty - a \ln (-\ln c) + 0 - \tfrac{a}{\ln \frac{1}{c}}
 =    \infty.
\end{align*}
Now we apply \autoref{thm:upper-bound}
to $m := g(\eps)$.
\end{proof}\fi

\begin{theorem}[{Dubins' Inequality~\cite[Thm.\ 13.1]{Dubins:62}}]
\label{thm:Dubins-inequality}
For every nonnegative $P$-martingale $(X_t)_{t \in \SetN}$
and for every $c > 0$ and every $\eps > 0$,
\[
     P[ U(c - \eps, c + \eps) \geq k ]
\leq \left( \tfrac{c - \eps}{c + \eps} \right)^k
       \E \left[ \min \left\{\tfrac{X_0}{c - \eps}, 1 \right\} \right].
\]
\end{theorem}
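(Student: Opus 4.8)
The plan is to apply \autoref{thm:optional-stopping} not to $(X_t)$ itself but to an auxiliary nonnegative supermartingale $(Y_t)_{t\in\SetN}$ that starts at $Y_0=\min\{X_0/(c-\eps),1\}$ and multiplies its value by a factor of at least $\rho:=(c+\eps)/(c-\eps)>1$ each time $(X_t)$ completes an upcrossing of $(c-\eps,c+\eps)$. I assume $c>\eps$ and $k\ge 1$ (otherwise an upcrossing forces a nonnegative martingale to hit $0$ and stay there, and the bound is degenerate or vacuous). Writing $\phi(x):=\min\{x/(c-\eps),1\}$ for the concave, nondecreasing clipping function and reusing the stopping times $T_0<T_1<T_2<\dots$ from \autoref{sec:martingale-upcrossings}, I would define
\[
Y_t :=
\begin{cases}
\phi(X_t), & t < T_1,\\
\rho^{j}\, X_t/(c-\eps), & T_{2j+1}\le t < T_{2j+2},\ j\ge 0,\\
\rho^{j}\, \phi(X_t), & T_{2j}\le t < T_{2j+1},\ j\ge 1.
\end{cases}
\]
During a ``descent'' phase (waiting for $X$ to come down to $c-\eps$) the process sits at its banked value $\rho^{j}$; during an ``ascent'' phase (waiting for $X$ to climb to $c+\eps$) it rides $X$ multiplicatively, so that the banked factor jumps to $\rho^{j+1}$ exactly when $X$ reaches $c+\eps$. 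In particular $Y_{T_{2k}}=\rho^{k}$ on $\{T_{2k}<\infty\}=\{U(c-\eps,c+\eps)\ge k\}$, and $Y_t\le\rho^{t}\big(X_t/(c-\eps)+1\big)$ gives integrability for each fixed $t$ since $\E[X_t]=1$.

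The one nontrivial step, and the one I expect to be the main obstacle, is verifying that $(Y_t)$ is a nonnegative supermartingale: the definition of $Y$ must be pinned down so that it is genuinely \emph{continuous across every phase boundary} — an off-by-one in the indices $T_j$, or clipping at the wrong level, breaks the inequality. Nonnegativity is immediate from $X_t\ge0$. For $\E[Y_{t+1}\mid\F_t]\le Y_t$ I would argue by cases on the phase at time $t$. In a descent phase one has $X_t>c-\eps$, hence $Y_t=\rho^{j}$; whether $X_{t+1}$ stays above $c-\eps$ or drops below it (switching to the ascent phase, in which $\rho^{j}X/(c-\eps)$ coincides with $\rho^{j}\phi(X)$ because then $X\le c-\eps$), one always gets $Y_{t+1}=\rho^{j}\phi(X_{t+1})$, so $\E[Y_{t+1}\mid\F_t]=\rho^{j}\E[\phi(X_{t+1})\mid\F_t]\le\rho^{j}\phi(X_t)=Y_t$ by Jensen's inequality and the martingale property of $X$. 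In an ascent phase one has $Y_{t+1}\le\rho^{j}X_{t+1}/(c-\eps)$ in both subcases (the switch to the next descent phase happens precisely when $X_{t+1}\ge c+\eps$, where $\rho^{j+1}\le\rho^{j}X_{t+1}/(c-\eps)$), so $\E[Y_{t+1}\mid\F_t]\le\rho^{j}X_t/(c-\eps)=Y_t$. The initial segment $t<T_1$ behaves exactly like a descent phase. Each of these is a one-liner once the bookkeeping is correct.

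Finally, apply the Optional Stopping Theorem (\autoref{thm:optional-stopping}) to the nonnegative supermartingale $(Y_t)$ and the stopping time $T_{2k}$: the random variable $Y_{T_{2k}}$ is almost surely well defined and $\E[Y_{T_{2k}}]\le\E[Y_0]=\E[\min\{X_0/(c-\eps),1\}]$. Since $Y\ge0$ everywhere and $Y_{T_{2k}}=\rho^{k}$ on $\{T_{2k}<\infty\}=\{U(c-\eps,c+\eps)\ge k\}$, we obtain $\rho^{k}\,P[U(c-\eps,c+\eps)\ge k]\le\E[\min\{X_0/(c-\eps),1\}]$; dividing by $\rho^{k}=\big((c+\eps)/(c-\eps)\big)^{k}$ yields the claim. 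Everything after the supermartingale verification is routine, so that verification — and getting the piecewise definition of $Y$ exactly right — is where the real work lies.
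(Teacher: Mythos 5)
The paper cites this result directly from Dubins~\cite{Dubins:62} and gives no proof of it, so there is no internal argument against which to compare yours. Your proof is correct, and it is essentially the standard betting-strategy derivation of Dubins' inequality: $(Y_t)$ is the wealth of a gambler who invests $\min\{X_0/(c-\eps),1\}$, rides $X$ multiplicatively during each attempted upcrossing, and banks a $\rho$-fold gain the moment $X$ reaches $c+\eps$. You verify the one-step supermartingale inequality correctly in each phase --- Jensen applied to the concave clip $\phi$ in the descent phases, and the observation that $\rho^{j+1}\le\rho^{j}X_{t+1}/(c-\eps)$ on $\{X_{t+1}\ge c+\eps\}$ at the ascent-to-descent hand-off --- and the application of the Optional Stopping Theorem (\autoref{thm:optional-stopping}) at $T_{2k}$, together with $Y_{T_{2k}}=\rho^k$ on $\{T_{2k}<\infty\}=\{U(c-\eps,c+\eps)\ge k\}$ and nonnegativity of $Y$, gives exactly the claimed bound. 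Two small remarks. First, you invoke $\E[X_t]=1$ when arguing integrability of $Y_t$, but the theorem statement only assumes $(X_t)$ is a nonnegative martingale, not that $\E[X_0]=1$; this is harmless, since any martingale has $\E[X_t]=\E[X_0]<\infty$, and indeed your growth estimate is not needed at all once the one-step supermartingale inequality is in place, because nonnegativity then yields $\E[Y_t]\le\E[Y_0]\le1$ directly. Second, your proviso $c>\eps$ is the correct reading of the hypotheses: for $c\le\eps$ the lower bar $c-\eps\le0$ makes the upcrossing event degenerate for a nonnegative process and the stated bound loses its meaning.
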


Dubins' Inequality immediately yields
the following bound on the probability of the number of upcrossings.
\[
     P[U(1 - f(m), 1 + f(m)) \geq k]
\leq \left( \tfrac{1 - f(m)}{1 + f(m)} \right)^k.
\]
The construction from \autoref{thm:lower-bound} shows that
this bound is asymptotically tight for $m = k \to \infty$ and $\delta \to 0$:
define the monotone decreasing function $f: \SetN \to [0, 1)$,
\begin{align}
\begin{aligned}
f(t) :=
\begin{cases}
\frac{\delta}{2m}, &\text{if } t \leq m, \text{ and} \\
0, &\text{otherwise}.
\end{cases}
\end{aligned}
\label{eq:finite-f}
\end{align}
Then the martingale from \autoref{thm:lower-bound} yields the lower bound
\[
     P[U(1 - \tfrac{\delta}{2k}, 1 + \tfrac{\delta}{2k}) \geq k]
\geq 1 - \delta,
\]
while \hyperref[thm:Dubins-inequality]{Dubins' Inequality} gives the upper bound
\[
     P[U(1 - \tfrac{\delta}{2k}, 1 + \tfrac{\delta}{2k}) \geq k]
\leq \left( \frac{1 - \frac{\delta}{2k}}{1 + \frac{\delta}{2k}} \right)^k
=    \left( 1 - \frac{2\delta}{2k + \delta} \right)^k
\xrightarrow{k \to \infty} \exp(-\delta).
\]
As $\delta$ approaches $0$,
the value of $\exp(-\delta)$ approaches $1 - \delta$
(but exceeds it since $\exp$ is convex).
For $\delta = 0.2$ and $m = k = 3$,
the difference between the two bounds is already lower than $0.021$.

The following theorem
places an upper bound on the rate of \emph{expected} upcrossings.
\ifshort\else
In Appendix \ref{app:tightness}
we discuss different versions of this inequality
and prove this inequality tight.
\fi

\begin{theorem}[{Doob's Upcrossing Inequality~\cite{Xu12}}]
\label{thm:upcrossing-inequality}
Let $(X_t)_{t \in \SetN}$ be a submartingale.
For every $c \in \mathbb{R}$ and $\eps > 0$,
\[
     \E[U_t(c - \eps, c + \eps)]
\leq \tfrac{1}{2\eps} \E[\max\{ c - \eps - X_t, 0 \}].
\]
\end{theorem}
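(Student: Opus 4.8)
The plan is to use the classical martingale-transform (``betting strategy'') proof of Doob's upcrossing inequality: exhibit a predictable process that bets only while an upcrossing is being attempted, and estimate the resulting transform.

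Fix the horizon $t \in \SetN$ and write $a := c - \eps$, $b := c + \eps$, $U := U_t(a, b)$. Using the stopping times $T_0 < T_1 < T_2 < \dots$ from \autoref{sec:martingale-upcrossings}, define the $\{0,1\}$-valued process $H_s := \sum_{k \geq 1} \mathbbm{1}\{ T_{2k-1} < s \leq T_{2k} \}$ for $1 \leq s \leq t$; informally $H_s = 1$ exactly when, at the start of step $s$, the process is in the middle of an attempted upcrossing (it has already dropped to $\leq a$ but not yet reached $\geq b$). The first thing to check is that $H$ is \emph{predictable}, i.e.\ $H_s$ is $\F_{s-1}$-measurable: indeed $\{ T_{2k-1} < s \leq T_{2k}\} = \{T_{2k-1} \leq s-1\} \setminus \{T_{2k} \leq s-1\} \in \F_{s-1}$ since the $T_j$ are stopping times, and for each $s$ the intervals $(T_{2k-1}, T_{2k}]$ are disjoint so at most one summand is nonzero.

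Next, consider the transform $Y_s := (H \cdot X)_s := \sum_{r=1}^s H_r (X_r - X_{r-1})$ with $Y_0 := 0$. Since $H$ is predictable, bounded and nonnegative and $X$ is a submartingale, $\E[Y_s - Y_{s-1} \mid \F_{s-1}] = H_s\,\E[X_s - X_{s-1} \mid \F_{s-1}] \geq 0$, so $(Y_s)_s$ is again a submartingale; applying the same observation to the complementary (also predictable, nonnegative, bounded) process $1 - H$ and using $Y_t + ((1-H)\cdot X)_t = X_t - X_0$ gives $\E[Y_t] \leq \E[X_t] - \E[X_0]$, which is $\leq 0$ in the martingale case. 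The heart of the matter is then the pathwise estimate
\[
Y_t \;\geq\; 2\eps\,U \;-\; \max\{ a - X_t,\, 0 \}.
\]
This is obtained by grouping the increments defining $Y_t$ according to upcrossing attempts: each of the $U$ completed upcrossings contributes $X_{T_{2k}} - X_{T_{2k-1}} \geq b - a = 2\eps$ (because $X_{T_{2k}} \geq b$ and $X_{T_{2k-1}} \leq a$), while a possibly unfinished attempt in progress at time $t$ contributes $X_t - X_{T_{2U+1}} \geq X_t - a \geq -\max\{a - X_t, 0\}$, and if no attempt is in progress this leftover term is just $0$. Taking expectations and combining with $\E[Y_t] \leq 0$ yields $2\eps\,\E[U] \leq \E[\max\{a - X_t, 0\}] = \E[\max\{c - \eps - X_t, 0\}]$, which is the claim.

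I expect the main obstacle to be exactly this last pathwise bookkeeping: being precise about which increments belong to which attempt, about the single right-endpoint attempt that may be incomplete, and about truncation when some $T_j = \cinfty$ (in which case the corresponding indicators vanish for $s \leq t$). The only other delicate point is the sign of $\E[Y_t]$: it is $0$ for martingales, and for submartingales it is cleanest to first replace $X$ by $\max\{X, a\}$, which has the same upcrossings and is still a submartingale, before running the transform argument. An equivalent route, closer to the style used elsewhere in the paper, is to apply an optional-stopping argument directly to the stopped values $X_{T_j \wedge t}$, summing the ``up'' legs $X_{T_{2k}\wedge t} - X_{T_{2k-1}\wedge t}$ against the telescoping identity for $X_t - X_0$.
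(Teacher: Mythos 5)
Your plan is the standard betting-on-up-legs martingale-transform proof and, after telescoping, coincides with the process $D_t = \sum_k (X_{\min\{t,T_{2k}\}} - X_{\min\{t,T_{2k-1}\}})$ that the paper uses in \autoref{lem:tightness-criterion}, whose proof the authors say follows Xu's proof of this theorem. The predictability of $H$, the pathwise bound $Y_t \geq 2\eps\,U_t - \max\{c - \eps - X_t, 0\}$, and the bookkeeping around the possibly unfinished final attempt are all correct.

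The genuine gap is at the very place you flagged, the sign of $\E[Y_t]$. Your argument gives $\E[Y_t] \leq \E[X_t] - \E[X_0]$, which is $\leq 0$ for martingales and supermartingales, but for a submartingale $\E[X_t] - \E[X_0]$ can be strictly positive and nothing in the argument forces $\E[Y_t] \leq 0$. Your proposed fix --- replacing $X$ by $X' := \max\{X, a\}$ --- does not rescue \emph{this} right-hand side: since $X' \geq a$ pointwise, $\E[\max\{a - X'_t, 0\}] = 0$, and carrying the substitution through honestly yields Durrett's inequality \eqref{eq:upcrossing-inequality-Durrett}, not Xu's. In fact, taken literally for submartingales the inequality of \autoref{thm:upcrossing-inequality} is false: with $a = 1$, $b = 3$, put $X_0 = 2$, let $X_1 \in \{1,3\}$ with probability $1/2$ each, and set $X_t = 3$ for all $t \geq 2$; this is a submartingale with $\E[U_2(1,3)] = 1/2$, yet $X_2 \geq a$ almost surely so the right-hand side is zero. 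The right hypothesis is \emph{supermartingale} (or plain martingale, the only case the paper actually invokes). Under that hypothesis $Y_t = (H\cdot X)_t$ is itself a supermartingale started at $0$, so $\E[Y_t] \leq 0$ outright; the complementary $(1-H)$-transform and the $\max\{X,a\}$ detour are then unnecessary, and the rest of your argument closes the proof.
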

%
Asymptotically, Doob's Upcrossing Inequality states that with $\eps \to 0$,
\[
\E[U(1 - \eps, 1 + \eps)]
\in O \left(\tfrac{1}{\eps} \right).
\]
Again, we can use the construction of \autoref{thm:lower-bound}
to show that these asymptotics are tight:
Let $f$ be as in \eqref{eq:finite-f}.
Then for $\delta = \frac{1}{2}$,
\autoref{cor:lower-bound} yields a martingale fulfilling the lower bound
\[
     \E[U(1 - \tfrac{1}{4m}, 1 + \tfrac{1}{4m})]
\geq \frac{m}{2}
\]
and \hyperref[thm:upcrossing-inequality]{Doob's Upcrossing Inequality} gives the upper bound
\[
     \E[U(1 - \tfrac{1}{4m}, 1 + \tfrac{1}{4m})]
\leq 2m,
\]
which differs by a factor of $4$.
\ifshort\else
In \autoref{thm:tightness-Doob} we show that
\hyperref[thm:upcrossing-inequality]{Doob's Upcrossing Inequality}
can also be made exactly tight.
\fi

The lower bound for the expected number of upcrossings given in \autoref{cor:lower-bound}
is a little looser than
the upper bound given in \hyperref[thm:upcrossing-inequality]{Doob's Upcrossing Inequality}.
Closing this gap remains an open problem.
We know by \autoref{thm:upper-bound} that
given a non-summable function $f$,
the uniform probability for many $f(m)$-upcrossings goes to $0$.
However, this does not necessarily imply that expectation also tends to $0$;
low probability might be compensated for by high value.
So for expectation there might be a lower bound larger than \autoref{cor:lower-bound},
an upper bound smaller than \hyperref[thm:upcrossing-inequality]{Doob's Upcrossing Inequality},
or both.

If we drop the requirement that the rate of upcrossings be uniform,
\hyperref[thm:upcrossing-inequality]{Doob's Upcrossing Inequality}
is the best upper bound we can give\ifshort~\cite{LH:14martoscx}.
\else:
using the little-$o$ notation,
assume there is a smaller upper bound $g(m) \in o(m)$
such that for every martingale process $(X_t)_{t \in \SetN}$,
\begin{equation}\label{eq:nonuniform-bound}
\E \left[U(1 - \tfrac{1}{m}, 1 + \tfrac{1}{m}) \right] \in o(g(m)).
\end{equation}

In the following we sketch how to construct a martingale that violates this bound.
Define $f(m) := g(m) / m$,
then $f(m) \to 0$ as $m \to \infty$,
so there is an infinite sequence $(m_i)_{i \in \SetN}$ such that
$\sum_{i=0}^\infty f(m_i) \leq 1$.
We define the martingale process $(X_t)_{t \in \SetN}$
such that it picks an $i \in \SetN$ with probability $f(m_i)$,
and then becomes a martingale
that makes \hyperref[thm:upcrossing-inequality]{Doob's Upcrossing Inequality} tight
for upcrossings between $1 - 1/m_i$ and $1 + 1/m_i$%
\ifshort.
\else:
for every $i$,
we apply the construction of \autoref{thm:tightness-Doob}.
\fi
This would give the following lower bound on the expected number of upcrossings
for each $i$:
\[
\forall i\;\;
     \E \left[ U(1 - \tfrac{1}{m_i}), 1 + \tfrac{1}{m_i}) \right]
\geq m_i f(m_i)
=    g(m_i).
\]
Since there are infinitely many $m_i$,
we get a contradiction to \eqref{eq:nonuniform-bound}.
Using a similar argument, we can show that nonuniformly,
Dubins' bound is also the best we can get.
\fi

\section{Application to the {MDL} Principle}
\label{sec:mdl-application}

Let $\mathcal{M}$ be a countable set of probability measures on $(\Sigma^\cinfty, \Foo)$,
called \emph{environment class}.
Let $K: \mathcal{M} \to [0, 1]$ be a function
such that $\sum_{Q \in \mathcal{M}} 2^{-K(Q)} \leq 1$,
called \emph{complexity function on $\mathcal{M}$}.
Following notation in \cite{Hutter:09mdltvp}, we define for $u \in \Sigma^*$ the
\emph{minimal description length} model as
\[
\MDL^u := \argmin_{Q \in \mathcal{M}} \big\{ \!-\log Q(\Gamma_u) + K(Q) \big\}.
\]
That is, $-\log Q(\Gamma_u)$ is the (arithmetic) code length of $u$ given model $Q$,
and $K(Q)$ is a complexity penalty for $Q$, also called \emph{regularizer}.
Given data $u \in \Sigma^*$,
$\MDL^u$ is the measure $Q \in \mathcal{M}$
that minimizes the total code length of data and model.

The following corollary of \autoref{thm:lower-bound} states that in some cases
the limit $\lim_{t \to \infty} \MDL^{v_{1:t}}$ does not exist with high probability.

\begin{corollary}[MDL may not converge]
\label{cor:mdl-inductively-inconsistent}
Let $P$ be a probability measure on the measurable space $(\Sigma^\cinfty, \Foo)$ with perpetual entropy.
For any $0 < \delta < 1/2$,
there is a set of probability measures $\mathcal{M}$ containing $P$,
a complexity function $K: \mathcal{M} \to [0,1]$, and
a measurable set $Z \in \Foo$ with $P(Z) \geq 1 - \delta$
such that for all $v \in Z$,
the limit $\lim_{t \to \infty} \MDL^{v_{1:t}}$ does not exist.
\end{corollary}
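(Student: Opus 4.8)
The plan is to realize the indefinitely oscillating martingale of \autoref{thm:lower-bound} as a likelihood ratio $Q/P$ and then let MDL choose among the two models $P$ and $Q$ only. With equal complexities, MDL prefers whichever of $P,Q$ assigns the larger probability to the data, i.e.\ it is governed by whether the ratio $Q/P$ exceeds $1$; an oscillating ratio therefore makes MDL flip between the two models forever, so no limit exists.

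Concretely, I would first fix a strictly positive, monotone decreasing, summable $f:\SetN\to(0,1)$ with $\sum_{i=1}^\infty f(i)\le\delta/2$, say $f(i):=\tfrac{\delta}{2}2^{-i}$. Applying \autoref{thm:lower-bound} to this $f$ yields a nonnegative $P$-martingale $(X_t)_{t\in\SetN}$ with $\E[X_t]=1$ and $P(Z_0)\ge1-\delta$, where $Z_0:=\{v\mid\forall m.\;U(1-f(m),1+f(m))(v)\ge m\}$. By \autoref{thm:martingale-measure} there is a probability measure $Q$ on $(\Sigma^\cinfty,\Foo)$, absolutely continuous with respect to $P$ on cylinder sets, with $X_t(v)=Q(\Gamma_{v_{1:t}})/P(\Gamma_{v_{1:t}})$ whenever $P(\Gamma_{v_{1:t}})>0$. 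Since $X_t$ is not identically $1$ (it oscillates on the positive-measure set $Z_0$), $Q\ne P$. I then set $\mathcal{M}:=\{P,Q\}$ and $K(P):=K(Q):=1$, which is a legitimate complexity function since $2^{-K(P)}+2^{-K(Q)}=1$, with $P\in\mathcal{M}$ as required.

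Now let $A:=\{v\mid P(\Gamma_{v_{1:t}})>0\text{ for all }t\}$; its complement is a countable union of $P$-null cylinder sets, so $P(A)=1$ and hence $Z:=Z_0\cap A$ satisfies $P(Z)=P(Z_0)\ge1-\delta$. Fix $v\in Z$. Since $\mathcal{M}$ is finite the $\argmin$ is attained, and since $K(P)=K(Q)$ the MDL criterion reduces to comparing $Q(\Gamma_{v_{1:t}})$ with $P(\Gamma_{v_{1:t}})$: one has $\MDL^{v_{1:t}}=Q$ when $X_t(v)>1$ and $\MDL^{v_{1:t}}=P$ when $X_t(v)<1$. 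Because $v\in Z_0$, the trajectory $(X_t(v))_{t\in\SetN}$ crosses $1$ infinitely often (it does at least $m$ many $f(m)$-upcrossings for every $m$, dropping to $1-f(m)<1$ and rising to $1+f(m)>1$ each time), so $X_t(v)<1$ for infinitely many $t$ and $X_t(v)>1$ for infinitely many $t$. Hence $\MDL^{v_{1:t}}$ takes the two distinct values $P$ and $Q$ each for infinitely many $t$, so $\lim_{t\to\infty}\MDL^{v_{1:t}}$ does not exist, as claimed.

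All the real work sits inside \autoref{thm:lower-bound}; the remaining steps are bookkeeping, and I expect the only point requiring genuine care to be checking that the dichotomy ``$X_t(v)>1$ versus $X_t(v)<1$'' really determines the chosen model at the times that matter. For that one needs $-\log Q(\Gamma_{v_{1:t}})<\infty$ at those times, which holds because at the completion of an $f(m)$-upcrossing $X_t(v)=1+f(m)>0$ while $P(\Gamma_{v_{1:t}})>0$ on $Z\subseteq A$, so $Q(\Gamma_{v_{1:t}})=X_t(v)\,P(\Gamma_{v_{1:t}})>0$. Restricting to $A$ costs nothing since $P(A)=1$, and choosing $f$ strictly positive is precisely what keeps $X_t(v)$ bounded away from $1$ at the crossing times, so the tie-breaking convention in the $\argmin$ is irrelevant.
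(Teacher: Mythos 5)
Your proof is correct and follows essentially the same route as the paper: realize the oscillating martingale of \autoref{thm:lower-bound} as a ratio $Q/P$ via \autoref{thm:martingale-measure}, take $\mathcal{M}=\{P,Q\}$ with $K(P)=K(Q)=1$, and note that MDL flips between $P$ and $Q$ according to whether $X_t$ is below or above $1$. The extra care you took -- intersecting with the full-measure set $A$ where all cylinder probabilities are positive, and checking that $X_t$ is bounded away from $1$ at the crossing times so tie-breaking is moot -- is a useful tightening of the bookkeeping that the paper's proof leaves implicit, but it is the same argument.
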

\begin{proof}
Fix some positive monotone decreasing summable function $f$
(e.g., the one given in \autoref{cor:lower-bound-concrete}).
Let $(X_t)_{t \in \SetN}$ be the $P$-martingale process
from \autoref{thm:lower-bound}.
By \autoref{thm:martingale-measure} there is a
probability measure $Q$ on $(\Sigma^\cinfty, \Foo)$ such that
\[
X_t(v) = \frac{Q(\Gamma_{v_{1:t}})}{P(\Gamma_{v_{1:t}})}
\]
$P$-almost surely.
Choose $\mathcal{M} := \{ P, Q \}$ with $K(P) := K(Q) := 1$.
From the definition of $\MDL$ and $Q$ it follows that
\begin{align*}
X_t(u) &< 1
\;\Longleftrightarrow\;
Q(\Gamma_u) < P(\Gamma_u)
\;\Longrightarrow\;
\MDL^u = P, \text{ and} \\
X_t(u) &> 1
\;\Longleftrightarrow\;
Q(\Gamma_u) > P(\Gamma_u)
\;\Longrightarrow\;
\MDL^u = Q.
\end{align*}
For $Z := \bigcap_{m=1}^\infty E_{m,m}^{X,f}$ \autoref{thm:lower-bound} yields
\[
     P(Z)
=    P[\forall m.\; U(1 - f(m), 1 + f(m)) \geq m]
\geq 1 - \delta.
\]
For each $v \in Z$, the measure $\MDL^{v_{1:t}}$ alternates between $P$ and $Q$
indefinitely, and thus its limit does not exist.
\end{proof}

Crucial to the proof of \autoref{cor:mdl-inductively-inconsistent} is that
not only does the process $Q/P$ oscillate indefinitely,
it oscillates around the constant $\exp(K(Q) - K(P)) = 1$.
This implies that the MDL estimator may keep changing indefinitely,
and thus it is inductively inconsistent.

\section{Bounds on Mind Changes}
\label{sec:bounds-on-mind-changes}

\newcommand{\secondfigure}{
\centering
\begin{tikzpicture}[scale=\ifshort 0.55\else 1.0 \fi]
\draw[->] (-0.1,0) -- (8.3,0) node[right] {$t$};
\ifshort
\draw[->] (0,-0.1) -- (0,4) node[right] {$X_t$};
\else
\draw[->] (0,-0.1) -- (0,4) node[above] {$X_t$};
\fi

\draw[color=gray,dashed] (8.2,2) -- (-0.1,2)
	node[left,color=black] {$c$};
\draw[color=orange] (4,4) -- (4,-0.1);
\draw[color=orange] (6.5,4) -- (6.5,-0.1);

\draw[color=gray] (8.2,3) -- (-0.1,3)
	node[left,color=black] {$c + \tfrac{\alpha}{2}$};
\draw[color=gray] (8.2,1) -- (-0.1,1)
	node[left,color=black] {$c - \tfrac{\alpha}{2}$};

\draw[color=blue] (0,0.8) -- (2.5,0.8);
\draw[color=blue] (2.5,2.5) -- (3.5,2.5);
\draw[color=blue] (3.5,0.8) -- (5,0.8);
\draw[color=blue] (5,2.2) -- (5.5,2.2);
\draw[color=blue] (5.5,0.2) -- (8.1,0.2);

\draw (0, 2.8) -- (0.5, 3.1) -- (1, 2.4) -- (1.5, 2.8) -- (2, 1.5) -- (2.5, 0.5)
  -- (3, 0.7) -- (3.5, 2.8) -- (4, 3.4) -- (4.5, 1.5) -- (5, 0.2)
  -- (5.5, 2.2) -- (6, 1.8) -- (6.5, 3.1) -- (7, 3.4) -- (7.5, 3.2) -- (8, 3.5);
\draw[fill]
  (0,  2.8) circle (0.05)
  (0.5,3.1) circle (0.05)
  (1,  2.4) circle (0.05)
  (1.5,2.8) circle (0.05)
  (2,  1.5) circle (0.05)
  (2.5,0.5) circle (0.05)
  (3,  0.7) circle (0.05)
  (3.5,2.8) circle (0.05)
  (4,  3.4) circle (0.05)
  (4.5,1.5) circle (0.05)
  (5,  0.2) circle (0.05)
  (5.5,2.2) circle (0.05)
  (6,  1.8) circle (0.05)
  (6.5,3.1) circle (0.05)
  (7,  3.4) circle (0.05)
  (7.5,3.2) circle (0.05)
  (8,  3.5) circle (0.05);
\end{tikzpicture}
\caption{
This example process has two upcrossings
between $c - \alpha/2$ and $c + \alpha/2$
(completed at the time steps of the vertical orange bars) and
four $\alpha$-alternations (completed when crossing the horizontal blue bars).
}
\label{fig:alternations-and-upcrossings}
}

Suppose we are testing a hypothesis $H \subseteq \Sigma^\cinfty$
on a stream of data $v \in \Sigma^\cinfty$.
Let $P(H \mid \Gamma_{v_{1:t}})$ denote our belief in $H$ at time $t \in \SetN$
after seeing the evidence $v_{1:t}$.
By Bayes' rule,
\[
   P(H \mid \Gamma_{v_{1:t}})
=  P(H) \frac{P(\Gamma_{v_{1:t}} \mid H)}{P(\Gamma_{v_{1:t}})}
=: X_t(v).
\]
Since $X_t$ is a constant multiple of $P(\;\cdot \mid H)/P$ and
$P(\;\cdot \mid H)$ is a probability measure on $(\Sigma^\cinfty, \Foo)$
that is absolutely continuous with respect to $P$ on cylinder sets,
the process $(X_t)_{t \in \SetN}$ is a $P$-martingale
with respect to the filtration $(\F_t)_{t \in \SetN}$
by \autoref{thm:measure-martingale}.
By definition, $(X_t)_{t \in \SetN}$ is bounded between $0$ and $1$.

Let $\alpha > 0$.
We are interested in the question
how likely it is to often change one's mind about $H$ by at least $\alpha$,
i.e., what is the probability for $X_t = P(H \mid \Gamma_{v_{1:t}})$
to decrease and subsequently increase $m$ times by at least $\alpha$.
Formally, we define the stopping times $T_{0,\nu}'(v) := 0$,
\begin{align*}
    T_{2k+1,\nu}'(v)
&:= \inf \{ t > T_{2k,\nu}'(v)
      \mid X_t(v) \leq X_{T_{2k,\nu}'(v)}(v) - \nu\alpha \}, \\
    T_{2k+2,\nu}'(v)
&:= \inf \{ t > T_{2k+1,\nu}'(v)
      \mid X_t(v) \geq X_{T_{2k+1,\nu}'(v)}(v) + \nu\alpha \},
\end{align*}
\ifshort
\begin{wrapfigure}{r}{0.5\textwidth}
\vspace{-9mm}
\secondfigure
\vspace{-9mm}
\end{wrapfigure}
\fi
and $T_k' := \min\{ T_{k,\nu}' \mid \nu \in \{ -1, +1 \} \}$.
(In Davis' notation, $X_{T_{0,\nu}'}, X_{T_{1,\nu}'}, \ldots$
is an $\alpha$-alternating W-sequence for $\nu = 1$ and
an $\alpha$-alternating M-sequence for $\nu = -1$~\cite[Def.\ 4]{Davis:13}.)
For any $t \in \SetN$, the random variable
\[
A_t^X(\alpha)(v) := \sup \{ k \geq 0 \mid T_k'(v) \leq t \},
\]
is defined as the number of \emph{$\alpha$-alter\-nations} up to time $t$.
Let $A^X(\alpha) := \sup_{t \in \SetN} A_t^X(\alpha)$
denote the total number of $\alpha$-alternations.

Setting $\alpha = 2\eps$,
the $\alpha$-alter\-nations differ from $\eps$-upcrossings in three ways:
first, for upcrossings, the process decreases below $c - \eps$,
then increases above $c + \eps$, and then repeats.
For alternations, the process may overshoot $c - \eps$ or $c + \eps$ and
thus change the bar for the subsequent alternations,
causing a `drift' in the target bars over time.
Second, for $\alpha$-alternations the initial value of the martingale is relevant.
Third, one upcrossing corresponds to two alternations,
since one upcrossing always involves a preceding downcrossing.
See \autoref{fig:alternations-and-upcrossings}.

\ifshort\else
\begin{figure}[t]
\secondfigure
\end{figure}
\fi

To apply our bounds for upcrossings on $\alpha$-alternations,
we use the following lemma by Davis.
We reinterpret it as stating that
every bounded martingale process $(X_t)_{t \in \SetN}$
can be modified into a martingale $(Y_t)_{t \in \SetN}$
such that the probability for many $\alpha$-alternations is not decreased
and the number of alternations equals
the number of upcrossings plus the number of downcrossings%
\ifshort~\cite{LH:14martoscx}\fi.
\ifshort\else
A sketch of the proof can be found in Appendix \ref{app:davis-lemma}.
\fi

\begin{lemma}[{Upcrossings and alternations~\cite[Lem.\ 9]{Davis:13}}]
\label{lem:alternations-upcrossings}
Let $(X_t)_{t \in \SetN}$ be a martingale
with $0 \leq X_t \leq 1$.
There exists a martingale $(Y_t)_{t \in \SetN}$
with $0 \leq Y_t \leq 1$
and a constant $c \in (\alpha/2, 1 - \alpha/2)$
such that for all $t \in \SetN$ and for all $k \in \SetN$,
\[
     P[A_t^X(\alpha) \geq 2k]
\leq P[A_t^Y(\alpha) \geq 2k]
=    P[U_t^Y(c - \alpha/2, c + \alpha/2) \geq k].
\]
\end{lemma}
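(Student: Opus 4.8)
The plan is to transform the given bounded martingale $(X_t)_{t \in \SetN}$ into a new martingale $(Y_t)_{t \in \SetN}$ by ``freezing'' it at carefully chosen times so that every overshoot past a target bar is erased. Concretely, I would track the current target levels: initially the sequence of $\alpha$-alternation bars drifts because when $X_t$ overshoots $X_{T'_{k}} \pm \alpha$ the next bar is measured from the overshoot. To kill the drift, I would define $(Y_t)$ to agree with $(X_t)$ until the first alternation bar is reached, then at that moment stop following $X$ and instead jump (in expectation-preserving fashion) to exactly the bar value, and continue. More precisely, whenever $X$ reaches or crosses a target level $\ell$, I define a stopping time and replace the value by $\ell$ exactly; between such events $Y$ copies the increments of $X$. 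The key point is that at a crossing time $t$ where $X_{t-1}$ is on one side of $\ell$ and $X_t$ is on the other, one can split the conditional distribution of $X_t$ given $\F_{t-1}$ into the part landing on each side and rescale so the martingale property is preserved while the value is pinned to $\ell$; this is a standard ``stopping at the boundary'' construction and is exactly the content of Davis' Lemma~9.

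The steps I would carry out, in order: (1) Fix the constant $c$; since $0 \leq X_0 \leq 1$ and we want $c \in (\alpha/2, 1-\alpha/2)$, take $c$ to be (a truncation of) $X_0$, or more robustly the value $X_{T'_1}$ of the first completed half-step — the construction should guarantee that once the drift is removed, all alternation bars coincide with $c \pm \alpha/2$. (2) Define inductively the sequence of stopping times $\sigma_0 = 0$, $\sigma_{j+1} = \inf\{ t > \sigma_j \mid X_t \text{ has crossed the current target} \}$, where the target alternates between $c - \alpha/2$ and $c + \alpha/2$ once $c$ is fixed. (3) Define $Y_t$ by copying increments of $X_t$ on the open intervals $(\sigma_j, \sigma_{j+1})$ and at each $\sigma_{j+1}$ setting $Y_{\sigma_{j+1}}$ to the exact target value via the split/rescale trick, so that $\E[Y_{\sigma_{j+1}} \mid \F_{\sigma_{j+1}-1}] = Y_{\sigma_{j+1}-1}$; verify $0 \leq Y_t \leq 1$ throughout (this uses $c \pm \alpha/2 \in [0,1]$). (4) Show $Y$ is a genuine $P$-martingale with respect to $(\F_t)$ — measurability is clear since all the stopping times are $\F_t$-adapted, and the conditional-expectation identity holds step by step. (5) Establish the probability comparison: every $\alpha$-alternation of $X$ forces $X$ to cross an alternation bar, and because $Y$ tracks $X$ between crossings and is pinned to the bars at crossings, an $\alpha$-alternation of $X$ by time $t$ induces a crossing of $Y$ past the corresponding $c \pm \alpha/2$ level by time $t$; hence $P[A_t^X(\alpha) \geq 2k] \leq P[A_t^Y(\alpha) \geq 2k]$. (6) Finally, since $Y$ by construction only ever sits between the two bars $c \pm \alpha/2$ and is pinned exactly to them, an $\alpha$-alternation of $Y$ (two half-steps) is precisely one upcrossing-plus-downcrossing of the interval $(c-\alpha/2, c+\alpha/2)$, and counting them in pairs gives $P[A_t^Y(\alpha) \geq 2k] = P[U_t^Y(c-\alpha/2, c+\alpha/2) \geq k]$.

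The main obstacle I expect is step (3)–(4): making the pinning operation a bona fide martingale transform rather than merely a stopped process. Simply stopping $X$ at the boundary would leave overshoots intact (the process lands strictly past $\ell$), so one genuinely needs to redistribute probability mass at the crossing step, and one must check that this can be done $\F_{t-1}$-measurably and while staying in $[0,1]$ — the latter is where the hypothesis $0 \leq X_t \leq 1$ combined with $c \in (\alpha/2, 1-\alpha/2)$ is used. A subtle secondary point is the correct choice of $c$: it must be the same constant for all alternations, which is only possible after the drift has been removed, so there is a mild chicken-and-egg issue resolved by defining $c$ from the first crossing and then arguing inductively that subsequent targets are forced to be $c \pm \alpha/2$. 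Since the statement attributes this to Davis~\cite[Lem.\ 9]{Davis:13}, I would cite that construction for the detailed verification and present here only the reinterpretation that makes the equality with upcrossings transparent, as the excerpt's surrounding text indicates is the intended level of detail.
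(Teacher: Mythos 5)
Your broad strategy -- modify $X$ into a martingale $Y$ by absorbing the overshoots at crossing times and recentering, then observe that $Y$'s alternations coincide with upcrossings of a fixed interval -- is the same route the paper takes (following Davis). The construction via an additive offset process and a pinning/redistribution step is conceptually identical to the paper's $Y_t = X_t + Z_t$ with $Z$ adjusted to keep the martingale property. Step (6), the equality with $U_t^Y(c - \alpha/2, c + \alpha/2)$, is argued as in the paper.

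However, there are two genuine gaps. First, your step (5) treats the inequality $P[A_t^X(\alpha) \geq 2k] \leq P[A_t^Y(\alpha) \geq 2k]$ as a pathwise consequence of the construction (``$Y$ tracks $X$ between crossings''). It is not: the split/rescale at a crossing step must \emph{change} $Y$ on the non-crossing branches to preserve $\E[Y_{t+1} \mid \F_t] = Y_t$, so $Y$ does not pathwise dominate $X$'s progress toward the next bar, and overshooting by $X$ can move $X$ into a region where completing the next alternation is actually \emph{more} likely. Controlling this is precisely where the choice $c = 1/2$ enters: centering at $1/2$ is optimal for alternations of a $[0,1]$-martingale (Davis' Lemma~7), so drift can only hurt $X$ relative to the optimally centered $Y$. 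Your second gap is tied to the same error: you propose to take $c$ from $X_0$ (or $X_{T'_1}$), which exactly undoes this optimality -- with $c \approx X_0$ far from $1/2$, one cannot rule out that the drift you removed from $X$ was helpful. A third, minor omission is that the paper flags as genuinely delicate the requirement $0 \leq Y_t \leq 1$: the offset adjustment on non-crossing branches can push $Y$ out of $[0,1]$, and the paper has to patch this with a stopping/random-walk device; you state ``verify $0 \leq Y_t \leq 1$'' as if it were routine.
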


\begin{theorem}[Upper bound on alternations]
\label{thm:davis-dubins}
For every martingale process $(X_t)_{t \in \SetN}$
with $0 \leq X_t \leq 1$,
\[
     P[A(\alpha) \geq 2k]
\leq \left( \frac{1 - \alpha}{1 + \alpha} \right)^k.
\]
\end{theorem}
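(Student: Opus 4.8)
The plan is to reduce the claim about $\alpha$-alternations of $X$ to a claim about upcrossings of an auxiliary bounded martingale and then apply \hyperref[thm:Dubins-inequality]{Dubins' Inequality} twice. First I would invoke \autoref{lem:alternations-upcrossings} to obtain a martingale $(Y_t)_{t \in \SetN}$ with $0 \le Y_t \le 1$ and a constant $c \in (\alpha/2, 1 - \alpha/2)$ such that for all $t, k \in \SetN$,
\[
P[A_t^X(\alpha) \ge 2k] \le P[A_t^Y(\alpha) \ge 2k] = P[U_t^Y(c - \alpha/2, c + \alpha/2) \ge k].
\]
Since $A_t^X(\alpha)$ and $U_t^Y(c - \alpha/2, c + \alpha/2)$ are nondecreasing in $t$, the events above grow with $t$, so continuity from below of $P$ gives $P[A^X(\alpha) \ge 2k] \le P[U^Y(c - \alpha/2, c + \alpha/2) \ge k]$ after letting $t \to \infty$.

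Next I would apply \hyperref[thm:Dubins-inequality]{Dubins' Inequality} to the nonnegative martingale $(Y_t)_{t \in \SetN}$, which is bounded by $1$, with center $c$ and magnitude $\eps = \alpha/2$ (legitimate since $c - \alpha/2 > 0$), using $\min\{\,\cdot\,, 1\} \le 1$ to drop the expectation factor:
\[
P[U^Y(c - \alpha/2, c + \alpha/2) \ge k] \le \Big( \tfrac{c - \alpha/2}{c + \alpha/2} \Big)^k .
\]
This alone is too weak: for $c$ near $1 - \alpha/2$ the ratio is only about $1 - \alpha$, whereas the theorem claims $\tfrac{1-\alpha}{1+\alpha}$. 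The remedy is to use the other half of the alternation count. By the construction in \autoref{lem:alternations-upcrossings} the $2k$ alternations of $Y$ are interleaved down- and upcrossings of $[c - \alpha/2, c + \alpha/2]$, so $A^Y(\alpha) \ge 2k$ also forces at least $k$ downcrossings of that band. A downcrossing of $[c - \alpha/2, c + \alpha/2]$ by $Y$ is an upcrossing of $[(1-c) - \alpha/2, (1-c) + \alpha/2]$ by the nonnegative martingale $1 - Y$ (also bounded by $1$), so the same application of \hyperref[thm:Dubins-inequality]{Dubins' Inequality} with center $1 - c$ and magnitude $\alpha/2$ (legitimate since $1 - c > \alpha/2$) yields, after passing to the limit in $t$ as before, $P[A^X(\alpha) \ge 2k] \le \Big( \tfrac{(1-c) - \alpha/2}{(1-c) + \alpha/2} \Big)^k$ as well.

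To conclude I would take the better of the two bounds. The map $x \mapsto \tfrac{x - \alpha/2}{x + \alpha/2}$ is strictly increasing on $(\alpha/2, \infty)$, and at least one of $c$ and $1 - c$ lies in $(\alpha/2, 1/2]$; for that argument the ratio is at most $\tfrac{1/2 - \alpha/2}{1/2 + \alpha/2} = \tfrac{1 - \alpha}{1 + \alpha}$. Hence $P[A(\alpha) \ge 2k] \le \Big( \tfrac{1-\alpha}{1+\alpha} \Big)^k$. The main obstacle is recognizing that one needs both crossing directions: the upcrossing bound from the lemma is not symmetric in $c$, and it is the downcrossing count, rephrased through $1 - Y$, that supplies the complementary estimate and lets the case $c > 1/2$ be handled by the symmetry between $c$ and $1-c$. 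The limiting arguments in $t$ and the Dubins computations themselves are routine.
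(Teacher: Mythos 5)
Your plan follows the paper's broad strategy---two applications of Dubins' Inequality, one per crossing direction, then a min over $c$ vs.\ $1-c$---but the way you obtain the ``other direction'' has a genuine off-by-one gap. You apply Dubins to $1-Y$, arguing that $A^Y(\alpha)\geq 2k$ forces $k$ downcrossings of $Y$, each being an upcrossing of $1-Y$. However, in the construction behind \autoref{lem:alternations-upcrossings} the process $Y$ starts at the \emph{top} of its band, $Y_0 = c+\alpha/2$, so $(1-Y)_0 = (1-c)-\alpha/2$ sits exactly at the lower crossing level of the band $[(1-c)-\alpha/2,(1-c)+\alpha/2]$. With the definition of $U_t$ in \autoref{sec:martingale-upcrossings} (where $T_0=0$ and $T_1=\inf\{t>0 : \cdot \leq (1-c)-\alpha/2\}$), the first upcrossing of $1-Y$ is only registered after $1-Y$ \emph{revisits} its lower level at some $t>0$. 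Comparing stopping times, one has $T^Y_{j-1}\leq T^{1-Y}_j \leq T^Y_{j+1}$, so $U^Y\geq k$ yields only $U^{1-Y}\geq k-1$, not $\geq k$ (a trajectory that oscillates exactly $k$ times between $c\pm\alpha/2$ and then stays at $c+\alpha/2$ has $U^Y=k$ but $U^{1-Y}=k-1$). Dubins applied to $1-Y$ therefore gives only $\bigl(\tfrac{(1-c)-\alpha/2}{(1-c)+\alpha/2}\bigr)^{k-1}$, and for $c$ slightly above $1/2$ the minimum of this with $\bigl(\tfrac{c-\alpha/2}{c+\alpha/2}\bigr)^k$ exceeds $\bigl(\tfrac{1-\alpha}{1+\alpha}\bigr)^k$.

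The remedy is the one the paper uses: apply \autoref{lem:alternations-upcrossings} to $(1-X_t)$ directly, obtaining a \emph{fresh} process $(Z_t)$ with its own constant $c_-$ and with $Z_0$ at its \emph{upper} crossing level $c_-+\alpha/2$, so the lemma yields $P[A_t^{1-X}(\alpha)\geq 2k]\leq P[U_t^Z\geq k]$ with no off-by-one. Then one uses $A_t^{1-X}(\alpha)=A_t^X(\alpha)$ and the symmetry $c_+=1-c_-$ of the construction, and concludes as you do via minimizing over the pair $\{c_+,1-c_+\}$. Your final comparison of $g(c)$ and $g(1-c)$ and the limiting argument in $t$ are fine; it is only the step replacing a second invocation of the lemma by $1-Y$ that breaks.
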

\begin{proof}
We apply \autoref{lem:alternations-upcrossings}
to $(X_t)_{t \in \SetN}$ and $(1 - X_t)_{t \in \SetN}$ to get
the processes $(Y_t)_{t \in \SetN}$ and $(Z_t)_{t \in \SetN}$.
\hyperref[thm:Dubins-inequality]{Dubins' Inequality} yields
\begin{align*}
      P[A_t^X(\alpha) \geq 2k]
&\leq P[U_t^Y(c_+ - \tfrac{\alpha}{2}, c_+ - \tfrac{\alpha}{2}) \geq k]
\leq  \left( \frac{c_+ - \frac{\alpha}{2}}{c_+ + \frac{\alpha}{2}} \right)^k
=:    g(c_+) \text{ and} \\
      P[A_t^{1-X}(\alpha) \geq 2k]
&\leq P[U_t^Z(c_- - \tfrac{\alpha}{2}, c_- - \tfrac{\alpha}{2}) \geq k]
\leq  \left( \frac{c_- - \frac{\alpha}{2}}{c_- + \frac{\alpha}{2}} \right)^k
=     g(c_-)
\end{align*}
for some $c_+, c_- \in (\alpha/2, 1 - \alpha/2)$.
Because \autoref{lem:alternations-upcrossings} is symmetric for
$(X_t)_{t \in \SetN}$ and $(1 - X_t)_{t \in \SetN}$,
we have $c_+ = 1 - c_-$.
Since $P[A_t^X(\alpha) \geq 2k] = P[A_t^{1 - X}(\alpha) \geq 2k]$
by the definition of $A_t^X(\alpha)$,
we have that both are less than $\min\{g(c_+), g(c_-)\} = \min\{g(c_+), g(1 - c_+)\}$.
This is maximized for $c_+ = c_- = 1/2$
because $g$ is strictly monotone increasing for $c > \alpha / 2$.
Therefore
\[
     P[A_t^X(\alpha) \geq 2k]
\leq \left( \frac{\frac{1}{2} - \frac{\alpha}{2}}{\frac{1}{2} + \frac{\alpha}{2}} \right)^k
=    \left( \frac{1 - \alpha}{1 + \alpha} \right)^k.
\]
Since this bound is independent of $t$,
it also holds for $P[A^X(\alpha) \geq 2k]$.
\end{proof}

The bound of \autoref{thm:davis-dubins} is the square root of the bound
derived by Davis~\cite[Thm.\ 10 \& Thm.\ 11]{Davis:13}.
\begin{equation}\label{eq:Davis-bound}
     P[A(\alpha) \geq 2k]
\leq \left( \frac{1 - \alpha}{1 + \alpha} \right)^{2k}
\end{equation}
This bound is tight~\cite[Cor.\ 13]{Davis:13}.
A similar bound for upcrossings was proved by Dubins~\cite[Cor.\ 1]{Dubins:72}.

Because $0 \leq X_t \leq 1$,
the process $(1 - X_t)_{t \in \SetN}$ is also a nonnegative martingale,
hence the same upper bounds apply to it.
This explains why the result in \autoref{thm:davis-dubins} is worse than Davis' bound~\eqref{eq:Davis-bound}:
Dubins' bound applies to all nonnegative martingales,
while Davis' bound uses the fact that the process is bounded from below \emph{and} above.
For unbounded nonnegative martingales,
downcrossings are `free' in the sense that
one can make a downcrossing almost surely successful
(as done in the proof of \autoref{thm:lower-bound}).
If we apply Dubins' bound to the process $(1 - X_t)_{t \in \SetN}$,
we get the same probability bound for the downcrossings of $(X_t)_{t \in \SetN}$
(which are upcrossings of $(1 - X_t)_{t \in \SetN}$).
Multiplying both bounds yields Davis' bound~\eqref{eq:Davis-bound};
however, we still require a formal argument
why the upcrossing and downcrossing bounds are independent.

The following corollary to \autoref{thm:davis-dubins}
derives an upper bound on the \emph{expected} number of $\alpha$-alternations.

\begin{theorem}[Upper bound on expected alternations]
\label{thm:expected-alternations}
For every martingale $(X_t)_{t \in \SetN}$
with $0 \leq X_t \leq 1$,
the expectation
$
     \E[A(\alpha)]
\leq \tfrac{1}{\alpha}.
$
\end{theorem}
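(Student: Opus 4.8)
The plan is to deduce the expectation bound from the geometric tail bound in \autoref{thm:davis-dubins} via the layer-cake (tail-sum) formula. Since $A(\alpha)$ is an $(\SetN \cup \{0\})$- or possibly $\cinfty$-valued random variable, I would start from
\[
\E[A(\alpha)] = \sum_{j=1}^\infty P[A(\alpha) \geq j].
\]
The one wrinkle is that \autoref{thm:davis-dubins} only controls the \emph{even} tails $P[A(\alpha) \geq 2k]$, so I first bound the odd terms by monotonicity of $j \mapsto P[A(\alpha) \geq j]$: for every $k \geq 1$,
\[
P[A(\alpha) \geq 2k-1] \leq P[A(\alpha) \geq 2(k-1)] \leq \left( \frac{1 - \alpha}{1 + \alpha} \right)^{k-1},
\]
where the $k=1$ case uses $P[A(\alpha) \geq 0] = 1$ and the cases $k \geq 2$ use \autoref{thm:davis-dubins} with $k-1$ in place of $k$.

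Next I would group consecutive pairs of tail probabilities. Writing $r := \frac{1-\alpha}{1+\alpha} \in [0,1)$ and combining the odd-term bound above with the even-term bound $P[A(\alpha) \geq 2k] \leq r^k$ from \autoref{thm:davis-dubins}, this gives
\[
\E[A(\alpha)] = \sum_{k=1}^\infty \Big( P[A(\alpha) \geq 2k-1] + P[A(\alpha) \geq 2k] \Big) \leq \sum_{k=1}^\infty \big( r^{k-1} + r^k \big) = \frac{1+r}{1-r}.
\]
A short computation finishes it: $1 + r = \frac{2}{1+\alpha}$ and $1 - r = \frac{2\alpha}{1+\alpha}$, hence $\frac{1+r}{1-r} = \frac{1}{\alpha}$, which is exactly the claimed bound.

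I do not expect a genuine obstacle here; the argument is essentially bookkeeping on a convergent geometric series. The only point deserving a word of care is the justification of the tail-sum identity when $A(\alpha)$ may a priori be infinite-valued — it still holds in $[0,\infty]$, and the right-hand side turns out finite. If one prefers to sidestep this, one can note that $A(\alpha) = \sup_{t} A_t^X(\alpha) = \lim_{t \to \infty} A_t^X(\alpha)$ is a monotone limit, so $\E[A(\alpha)] = \lim_{t} \E[A_t^X(\alpha)]$ by monotone convergence, and then apply the same estimate to each $A_t^X(\alpha)$, using that the bound in \autoref{thm:davis-dubins} is uniform in $t$.
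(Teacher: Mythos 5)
Your proof is correct and follows essentially the same route as the paper: apply the tail-sum identity $\E[A(\alpha)] = \sum_j P[A(\alpha)\geq j]$, pair up consecutive tail probabilities, bound the odd terms by the even ones via monotonicity, invoke \autoref{thm:davis-dubins}, and sum the resulting geometric series. The only difference is cosmetic — you group terms as $(2k-1,2k)$ whereas the paper writes $P[A(\alpha)\geq 1] + \sum_k(P[A(\alpha)\geq 2k]+P[A(\alpha)\geq 2k+1])$ — and both yield $\tfrac{1+r}{1-r}=\tfrac{1}{\alpha}$ with $r=\tfrac{1-\alpha}{1+\alpha}$.
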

\begin{proof}
By \autoref{thm:davis-dubins} we have
$P[A(\alpha) \geq 2k] \leq \left( \frac{1 - \alpha}{1 + \alpha} \right)^k$,
and thus
\begin{align*}
      \E[A(\alpha)]
&=    \sum_{k=1}^\infty P[A(\alpha) \geq k] \\
&=    P[A(\alpha) \geq 1] + \sum_{k=1}^\infty \big(
        P[A(\alpha) \geq 2k] + P[A(\alpha) \geq 2k + 1]
      \big) \\
&\leq 1 + \sum_{k = 1}^\infty 2 P[A(\alpha) \geq 2k]
 \leq 1 + 2\sum_{k = 1}^\infty \left( \frac{1 - \alpha}{1 + \alpha} \right)^k
= \frac{1}{\alpha}.
\qedhere
\end{align*}
\end{proof}

We now apply the technical results of this section
to the martingale process $X_t = P(\;\cdot \mid H) / P$,
our belief in the hypothesis $H$ as we observe data.
The probability of changing our mind $k$ times by at least $\alpha$
decreases exponentially with $k$ (\autoref{thm:davis-dubins}).
Furthermore, the expected number of times we change our mind by at least $\alpha$
is bounded by $1/\alpha$ (\autoref{thm:expected-alternations}).
In other words, having to change one's mind a lot often is unlikely.

Because in this section we consider martingales that are bounded between $0$ and $1$,
the lower bounds from \autoref{sec:indefinitely-oscillating-martingales} do not apply here.
While for the martingales constructed in \autoref{thm:lower-bound},
the number of $2\alpha$-alternations and the number of $\alpha$-up- and downcrossings coincide,
these processes are not bounded.
However, we can give a similar construction
that is bounded between $0$ and $1$ and makes Davis' bound asymptotically tight.

\section{Conclusion}
\label{sec:conclusion}

We constructed an indefinitely oscillating martingale process
from a summable function $f$.
\autoref{thm:lower-bound} and \autoref{cor:lower-bound}
give uniform lower bounds on the probability and expectation of
the number of upcrossings of decreasing magnitude.
In \autoref{thm:upper-bound} we proved the corresponding upper bound
if the function $f$ is not summable.
In comparison,
\hyperref[thm:upcrossing-inequality]{Doob's Upcrossing Inequality}
and \hyperref[thm:Dubins-inequality]{Dubins' Inequality}
give upper bounds that are not uniform.
In \autoref{sec:upper-bounds} we showed that
for a certain summable function $f$,
our martingales make these bounds asymptotically tight as well.

Our investigation of indefinitely oscillating martingales
was motivated by two applications.
First, in \autoref{cor:mdl-inductively-inconsistent} we showed that
the minimum description length operator may not exist in the limit:
for any probability measure $P$
we can construct a probability measure $Q$ such that $Q/P$ oscillates forever
around the specific constant that causes $\lim_{t \to \infty} \MDL^{v_{1:t}}$ to not converge.

Second, we derived bounds for the probability of changing one's mind about a hypothesis $H$
when observing a stream of data $v \in \Sigma^\cinfty$.
The probability $P(H \mid \Gamma_{v_{1:t}})$ is a martingale and
in \autoref{thm:davis-dubins} we proved that
the probability of changing the belief in $H$ often by at least $\alpha$ decreases exponentially.

A question that remains open is
whether there is a \emph{uniform} upper bound on the \emph{expected} number of upcrossings
tighter than \hyperref[thm:upcrossing-inequality]{Doob's Upcrossing Inequality}.


\bibliographystyle{alpha}
\bibliography{references}

\ifshort\else
\newpage\appendix
\section{Appendix}
\label{sec:appendix}

\addtocounter{dummy}{2} 

\subsection{Notation}
\label{app:notation}

\begin{itemize}
\item $:=$ denotes a definition.
\item $A^c := \Sigma^\cinfty \setminus A$ denotes the complement of a measurable set $A \subseteq \Sigma^\cinfty$.
\item $\uplus$ denotes disjoint union.
\item For a set $X$, the power set of $X$ is denoted by $2^X$.
\item $\mathbbm{1}_X$ is the characteristic function for a set $X$, i.e.,
	$\mathbbm{1}_X(x) = 1$ if $x \in X$ and $0$ otherwise.
\item $\omega$ is the smallest infinite ordinal.
\item $\SetN$ is the set of natural numbers.
\item $\mathbb{R}$ is the set of real numbers.
\item For $a, b \in \mathbb{R}$,
	$[a, b]$ denotes the closed interval with end points $a$ and $b$;
	$(a, b]$ and $[a, b)$ denote half-open intervals and
	$(a, b)$ denotes an open interval.
\item The set $\Sigma$ denotes a finite alphabet.
	The set of all finite strings of length $n$ is denoted $\Sigma^n$,
	the set of all finite strings is denoted $\Sigma^*$, and
	the set of all infinite strings is denoted $\Sigma^\omega$.
\item For a string $u \in \Sigma^*$, $|u|$ denotes the length of $u$.
\item For $v \in \Sigma^\cinfty$, $v_{1:k}$ denotes the first $k$ characters of $v$.
\item $f \in \Omega(g)$ denotes $g \in O(f)$, i.e.,
	$\exists k > 0\; \exists x_0\; \forall x \geq x_0.\; g(x) \cdot k \leq f(x)$.
\item $f \in o(g)$ denotes $\lim_{x \to \infty} \frac{f(x)}{g(x)} = 0$.
\end{itemize}

\subsection{Measures and Martingales}
\label{app:measures-martingales}

In this section we prove
\autoref{thm:measure-martingale} and \autoref{thm:martingale-measure},
establishing the connecting between measures on infinite strings and martingales.

\begin{proof}[Proof of \autoref{thm:measure-martingale}]
$X_t$ is only undefined if $P(\Gamma_{v_{1:t}}) = 0$.
The set
\[
\{ v \in \Sigma^\cinfty \mid \exists t.\; P(\Gamma_{v_{1:t}}) = 0 \}
\]
has $P$-measure $0$ and hence
$(X_t)_{t \in \SetN}$ is well-defined almost everywhere.

$X_t$ is constant on $\Gamma_u$ for all $u \in \Sigma^t$, and
$\F_t$ is generated by a collection of finitely many disjoint sets:
\[
\Sigma^\cinfty = \biguplus_{u \in \Sigma^t} \Gamma_u.
\]
\begin{enumerate}[(a)]
\item
Therefore $X_t$ is $\F_t$-measurable.

\item
$\Gamma_u = \biguplus_{a \in \Sigma} \Gamma_{ua}$ for all $u \in \Sigma^t$ and $v \in \Gamma_u$,
and therefore
\begin{align*}
\E[X_{t+1} \mid \F_t](v)
&= \frac{1}{P(\Gamma_u)} \sum_{a \in \Sigma} X_{t+1}(ua) P(\Gamma_{ua})
 = \frac{1}{P(\Gamma_u)} \sum_{a \in \Sigma} \frac{Q(\Gamma_{ua})}{P(\Gamma_{ua})} P(\Gamma_{ua}) \\
&\stackrel{(\ast)}{=} \frac{1}{P(\Gamma_u)} \sum_{a \in \Sigma} Q(\Gamma_{ua})
 = \frac{Q(\Gamma_u)}{P(\Gamma_u)}
 = X_t(v).
\end{align*}
At $(\ast)$ we used the fact that
$Q$ is absolutely continuous with respect to $P$ on cylinder sets.
(If $Q$ were not absolutely continuous with respect to $P$ on cylinder sets
there are cases where
$P(\Gamma_u) > 0$, $P(\Gamma_{ua}) = 0$, and $Q(\Gamma_{ua}) \neq 0$.
Therefore $X_{t+1}(ua)$ does not contribute to the expectation and thus
$X_{t+1}(ua) P(\Gamma_{ua}) = 0 \neq Q(\Gamma_{ua})$.)
\end{enumerate}
$P \geq 0$ and $Q \geq 0$ by definition, thus $X_t \geq 0$.
Since $P(\Gamma_\epstr) = Q(\Gamma_\epstr) = 1$,
we have $\E[X_0] = 1$.
\end{proof}

The following lemma gives a convenient condition for
the existence of a measure on $(\Sigma^\omega, \Foo)$.
It is a special case of
the Daniell-Kolmogorov Extension Theorem~\cite[Thm.\ 26.1]{RW:1994}.

\begin{lemma}[Extending measures]
\label{lem:semimeasure}
Let $q: \Sigma^* \to [0, 1]$ be a function such that
$q(\epstr) = 1$ and
$\sum_{a \in \Sigma} q(ua) = q(u)$ for all $u \in \Sigma^*$.
Then there exists a unique probability measure $Q$
on $(\Sigma^\cinfty, \Foo)$ such that
$q(u) = Q(\Gamma_u)$ for all $u \in \Sigma^*$.
\end{lemma}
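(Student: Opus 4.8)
The plan is to prove this directly via Carathéodory's extension theorem, which in this setting is cleaner than unpacking the cited Daniell--Kolmogorov statement. First I would form the algebra $\mathcal{A} := \bigcup_{t \in \SetN} \F_t$ on $\Sigma^\cinfty$. Since each $\F_t$ is generated by the finite partition $\{ \Gamma_u \mid u \in \Sigma^t \}$ and $\F_t \subseteq \F_{t+1}$, the collection $\mathcal{A}$ is an algebra, and every $A \in \mathcal{A}$ is a finite disjoint union $A = \biguplus_i \Gamma_{u_i}$ with all $u_i$ of some common length $t$. I would then define $Q_0(A) := \sum_i q(u_i)$. The hypothesis $\sum_{a \in \Sigma} q(ua) = q(u)$, applied repeatedly, shows that refining the representation of $A$ to a longer common block length leaves the value unchanged, so $Q_0$ is well defined; it is then manifestly nonnegative, finitely additive, and satisfies $Q_0(\Sigma^\cinfty) = q(\epstr) = 1$.

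Next I would verify that $Q_0$ is in fact countably additive on $\mathcal{A}$, which is the premeasure condition Carathéodory requires. By finite additivity this reduces to continuity at $\emptyset$: if $(A_n)_{n \in \SetN}$ is a decreasing sequence in $\mathcal{A}$ with $\bigcap_n A_n = \emptyset$, then $Q_0(A_n) \to 0$. Here I would use that $\Sigma^\cinfty$ is compact in the product topology (as $\Sigma$ is finite), together with the fact that every cylinder set, hence every element of $\mathcal{A}$, is closed. A decreasing sequence of nonempty closed (therefore compact) subsets of a compact space has nonempty intersection, so $\bigcap_n A_n = \emptyset$ forces $A_N = \emptyset$ for some $N$, whence $Q_0(A_n) = 0$ for all $n \geq N$. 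I expect this compactness step to be the crux of the argument; the rest is bookkeeping.

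With countable additivity established, Carathéodory's extension theorem yields a measure $Q$ on $\sigma(\mathcal{A}) = \Foo$ with $Q|_{\mathcal{A}} = Q_0$; in particular $Q(\Gamma_u) = q(u)$ for all $u \in \Sigma^*$ and $Q(\Sigma^\cinfty) = 1$, so $Q$ is a probability measure of the required form. For uniqueness, if $Q'$ is another probability measure with $Q'(\Gamma_u) = q(u)$ for all $u \in \Sigma^*$, then $Q$ and $Q'$ agree on the $\pi$-system $\mathcal{A}$, which generates $\Foo$, and both are finite; Dynkin's $\pi$-$\lambda$ theorem (uniqueness of extension) then gives $Q = Q'$ on $\Foo$.
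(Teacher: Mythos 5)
Your proof takes essentially the same route as the paper's: define the set function on a simple family of sets (the paper uses the semiring of cylinder sets, you use the algebra they generate), verify the hypotheses of Carathéodory's extension theorem, and establish the crucial countable additivity via the compactness of $\Sigma^\cinfty$ as a product of finite discrete spaces. The only differences are cosmetic --- the paper verifies $\sigma$-subadditivity via a finite-subcover argument while you check continuity at $\emptyset$ via the dual finite-intersection property, and the paper folds uniqueness into its stated Carathéodory theorem while you cite Dynkin's $\pi$-$\lambda$ theorem separately.
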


To prove this lemma, we need the following two ingredients.

\begin{definition}[Semiring]
\label{def:semiring}
A set $\mathcal{R} \subseteq 2^\Omega$ is called \emph{semiring over $\Omega$} iff
\begin{enumerate}[(a)]
\item $\emptyset \in \mathcal{R}$,
\item for all $A, B \in \mathcal{R}$, the set $A \cap B \in \mathcal{R}$, and
\item for all $A, B \in \mathcal{R}$,
	there are pairwise disjoint sets $C_1, \ldots, C_n \in \mathcal{R}$
	such that $A \setminus B = \biguplus_{i=1}^n C_i$.
\end{enumerate}
\end{definition}

\begin{theorem}[{Carathéodory's Extension Theorem~\cite[Thm.\ A.1.1]{Durrett:10}}]
\label{thm:caratheodory}
Let $\mathcal{R}$ be a semiring over $\Omega$ and
let $\mu: \mathcal{R} \to [0,1]$ be a function such that
\begin{enumerate}[(a)]
\item $\mu(\Omega) = 1$
	\emph{(normalization)},
\item $\mu(\biguplus_{i=1}^n A_i) = \sum_{i=1}^n \mu(A_i)$
	for pairwise disjoint sets $A_1, \ldots, A_n \in \mathcal{R}$
	such that $\biguplus_{i=1}^n A_i \in \mathcal{R}$
	\emph{(finite additivity)}, and
\item $\mu(\bigcup_{i \geq 0} A_i) \leq \sum_{i \geq 0} \mu(A_i)$
	for any collection $(A_i)_{i \geq 0}$ such that
	each $A_i \in \mathcal{R}$ and $\bigcup_{i \geq 0} A_i \in \mathcal{R}$
	\emph{($\sigma$-subadditivity)}.
\end{enumerate}
Then there is a unique extension $\overline{\mu}$ of $\mu$
that is a probability measure on $(\Omega, \sigma(\mathcal{R}))$ such that
$\overline{\mu}(A) = \mu(A)$ for all $A \in \mathcal{R}$.
\end{theorem}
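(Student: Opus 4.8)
The plan is to prove this via the classical Carathéodory outer-measure construction and then recover uniqueness by a Dynkin $\pi$--$\lambda$ argument. First I would define, for an arbitrary subset $E \subseteq \Omega$,
\[
\mu^*(E) := \inf \Bigl\{ \sum_{i \geq 0} \mu(A_i) \;:\; A_i \in \mathcal{R} \text{ for all } i,\ E \subseteq \bigcup_{i \geq 0} A_i \Bigr\}.
\]
The infimum is over a nonempty family, since $\Omega \in \mathcal{R}$ (the normalization axiom presupposes this), so $\{\Omega\}$ is always an admissible cover and $\mu^*$ is well defined with values in $[0,1]$. A routine verification — gluing together covers of the pieces and using a $2^{-i}\eps$ slack to get within $\eps$ of the defining infima — shows that $\mu^*$ is an \emph{outer measure}: $\mu^*(\emptyset) = 0$, $\mu^*$ is monotone, and $\mu^*$ is countably subadditive.

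Next I would invoke Carathéodory's splitting criterion: call a set $A \subseteq \Omega$ \emph{$\mu^*$-measurable} if $\mu^*(E) = \mu^*(E \cap A) + \mu^*(E \cap A^c)$ for every $E \subseteq \Omega$, and let $\mathcal{M}$ denote the collection of all such sets. The central — and, I expect, most delicate — step is to prove that $\mathcal{M}$ is a $\sigma$-algebra on which $\mu^*$ is countably additive. Closure of $\mathcal{M}$ under complements is immediate from the symmetry of the criterion; closure under finite intersections follows by applying the splitting identity twice and invoking subadditivity; promoting this to countable unions is done by showing that, for a disjoint sequence $(A_n)$ in $\mathcal{M}$, the finite unions peel off additively inside every test set $E$, i.e.\ $\mu^*\bigl(E \cap \bigcup_{n \le N} A_n\bigr) = \sum_{n \le N} \mu^*(E \cap A_n)$, and then letting $N \to \infty$ with subadditivity. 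Taking $E = \Omega$ in this peeling argument also yields countable additivity of $\mu^*$ on $\mathcal{M}$.

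It then remains to link $\mathcal{M}, \mu^*$ back to $\mathcal{R}, \mu$. I would first show $\mathcal{R} \subseteq \mathcal{M}$: given $A \in \mathcal{R}$, a test set $E$, and a cover $\{B_i\} \subseteq \mathcal{R}$ of $E$ with $\sum_i \mu(B_i) \le \mu^*(E) + \eps$, property (b) of \autoref{def:semiring} gives $B_i \cap A \in \mathcal{R}$, and property (c) writes $B_i \setminus A = \biguplus_j C_{ij}$ with $C_{ij} \in \mathcal{R}$, so finite additivity applied to $B_i = (B_i \cap A) \uplus \biguplus_j C_{ij}$ yields $\mu(B_i) = \mu(B_i \cap A) + \sum_j \mu(C_{ij})$; since $\{B_i \cap A\}$ covers $E \cap A$ and $\{C_{ij}\}$ covers $E \cap A^c$, summing over $i$ and sending $\eps \to 0$ gives $\mu^*(E) \ge \mu^*(E \cap A) + \mu^*(E \cap A^c)$, and subadditivity supplies the reverse inequality. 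Second, $\mu^* = \mu$ on $\mathcal{R}$: the cover $\{A\}$ gives $\mu^*(A) \le \mu(A)$, while $\sigma$-subadditivity applied to an arbitrary cover of $A$ gives $\mu(A) \le \sum_i \mu(A_i)$, hence $\mu(A) \le \mu^*(A)$. Since $\mathcal{M}$ is a $\sigma$-algebra containing $\mathcal{R}$, it contains $\sigma(\mathcal{R})$, and $\overline{\mu}$, defined as the restriction of $\mu^*$ to $\sigma(\mathcal{R})$, is a probability measure (as $\overline{\mu}(\Omega) = \mu(\Omega) = 1$) that extends $\mu$.

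Finally, for uniqueness I would use a Dynkin argument: by property (b) of \autoref{def:semiring}, $\mathcal{R}$ is a $\pi$-system generating $\sigma(\mathcal{R})$, and since any probability measure is finite, the standard $\pi$--$\lambda$ theorem implies that two probability measures agreeing on $\mathcal{R}$ agree on all of $\sigma(\mathcal{R})$ (the sets on which they agree form a $\lambda$-system containing $\mathcal{R}$); hence the extension $\overline{\mu}$ is unique. The main obstacle in the whole argument is the $\sigma$-algebra-plus-countable-additivity lemma for $\mathcal{M}$; everything else is bookkeeping with covers, the semiring axioms, finite additivity, and $\sigma$-subadditivity.
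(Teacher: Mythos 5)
The paper does not prove this statement at all: it is imported verbatim from Durrett as Theorem~A.1.1 and used as a black box in the proof of \autoref{lem:semimeasure}, so there is no in-paper argument to compare against. Your outline is the standard Carath\'eodory construction (outer measure from countable $\mathcal{R}$-covers, the splitting criterion, $\mathcal{M}$ a $\sigma$-algebra carrying $\mu^*$ as a measure, $\mathcal{R}\subseteq\mathcal{M}$ via the semiring axioms, and $\pi$--$\lambda$ for uniqueness), and it is correct. One step deserves tightening: to get $\mu(A)\leq\mu^*(A)$ you cannot apply hypothesis (c) directly to ``an arbitrary cover of $A$,'' because (c) only bounds $\mu(\bigcup_i A_i)$ when that union itself lies in $\mathcal{R}$, whereas a cover of $A$ need not have union equal to $A$ or even in $\mathcal{R}$. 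The fix is routine with tools you already use elsewhere: replace each $A_i$ by $A_i\cap A\in\mathcal{R}$ (property (b) of \autoref{def:semiring}), so that the union becomes exactly $A\in\mathcal{R}$ and (c) applies, then invoke monotonicity of $\mu$ on $\mathcal{R}$ --- itself a consequence of finite additivity together with property (c) of \autoref{def:semiring} applied to $A_i\setminus A$ --- to conclude $\mu(A_i\cap A)\leq\mu(A_i)$. With that one-line repair, and the equally routine check that $\mu(\emptyset)=0$ follows from finite additivity, the argument is complete.
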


\begin{proof}[Proof of \autoref{lem:semimeasure}]
We show the existence of $Q$ using
\hyperref[thm:caratheodory]{Carathéodory's Extension Theorem}.
Define $\mathcal{R} := \{ \Gamma_u \mid u \in \Sigma^* \} \cup \{ \emptyset \}$.
\begin{enumerate}[(a)]
\item $\emptyset \in \mathcal{R}$.

\item For any $\Gamma_u, \Gamma_v \in \mathcal{R}$, either
\begin{itemize}
\item $u$ is a prefix of $v$ and $\Gamma_u \cap \Gamma_v = \Gamma_v \in \mathcal{R}$, or
\item $v$ is a prefix of $u$ and $\Gamma_u \cap \Gamma_v = \Gamma_u \in \mathcal{R}$, or
\item $\Gamma_u \cap \Gamma_v = \emptyset \in \mathcal{R}$.
\end{itemize}

\item For any $\Gamma_u, \Gamma_v \in \mathcal{R}$,
\begin{itemize}
\item $\Gamma_u \setminus \Gamma_v = \biguplus_{w \in \Sigma^{|v| - |u|} \setminus \{ x \}} \Gamma_{uw}$
	if $v = ux$, i.e., $u$ is a prefix of $v$, and
\item $\Gamma_u \setminus \Gamma_v = \emptyset$ otherwise.
\end{itemize}
\end{enumerate}
Therefore $\mathcal{R}$ is a semiring.
By definition of $\mathcal{R}$, we have $\sigma(\mathcal{R}) = \Foo$.

The function $q: \Sigma^* \to [0,1]$ naturally gives rise to a function
$\mu: \mathcal{R} \to [0,1]$ with $\mu(\emptyset) := 0$ and
$\mu(\Gamma_u) := q(u)$ for all $u \in \Sigma^*$.
We will now check the prerequisites of
\hyperref[thm:caratheodory]{Carathéodory's Extension Theorem}.
\begin{enumerate}[(a)]
\item (Normalization.)
$\mu(\Sigma^\cinfty) = \mu(\Gamma_\epstr) = q(\epstr) = 1$.

\item (Finite additivity.)
Let $\Gamma_{u_1}, \ldots, \Gamma_{u_k} \in \mathcal{R}$ be pairwise disjoint sets such that
$\Gamma_w := \biguplus_{i=1}^k \Gamma_{u_i} \in \mathcal{R}$.
Let $\ell := \max \{ |u_i| \mid 1 \leq i \leq k \}$, then
$\Gamma_w = \biguplus_{v \in \Sigma^\ell} \Gamma_{wv}$.
By assumption, $\sum_{a \in \Sigma} q(ua) = q(u)$,
thus $\sum_{a \in \Sigma} \mu(\Gamma_{ua}) = \mu(\Gamma_u)$ and inductively
we have
\begin{equation}\label{eq-semimeasures-1}
\mu(\Gamma_{u_i}) = \sum_{s \in \Sigma^{\ell - |u_i|}} \mu(\Gamma_{u_i s}),
\end{equation}
and
\begin{equation}\label{eq-semimeasures-2}
\mu(\Gamma_w) = \sum_{v \in \Sigma^\ell} \mu(\Gamma_{wv}).
\end{equation}
For every string $v \in \Sigma^\ell$,
the concatenation $wv \in \Gamma_w = \biguplus_{i=1}^k \Gamma_{u_i}$,
so there is a unique $i$ such that $wv \in \Gamma_{u_i}$.
Hence there is a unique string $s \in \Sigma^{\ell - |u_i|}$ such that $wv = u_i s$.
Together with \eqref{eq-semimeasures-1} and \eqref{eq-semimeasures-2} this yields
\[
  \mu \left( \biguplus_{i=1}^k \Gamma_{u_i} \right)
= \mu(\Gamma_w)
= \sum_{v \in \Sigma^\ell} \mu(\Gamma_{wv})
= \sum_{i=1}^k \sum_{s \in \Sigma^{\ell - |u_i|}} \mu(\Gamma_{u_i s})
= \sum_{i=1}^k \mu(\Gamma_{u_i}).
\]

\item ($\sigma$-subadditivity.)
We will show that each $\Gamma_u$ is compact
with respect to the topology $\mathcal{O}$ generated by $\mathcal{R}$.
$\sigma$-subadditivity then follows from (b)
because every countable union is in fact a finite union.

We will show that the topology $\mathcal{O}$ is the product topology
of the discrete topology on $\Sigma$.
(This establishes that $(\Sigma^\omega, \mathcal{O})$ is a Cantor Space.)
Every projection $\pi_k: \Sigma^\cinfty \to \Sigma$ selecting the $k$-th symbol is continuous,
since $\pi_k^{-1}(a) = \bigcup_{u \in \Sigma^{k-1}} \Gamma_{ua}$ for every $a \in \Sigma$.
Moreover, $\mathcal{O}$ is the coarsest topology with this property, since we can generate
every open set $\Gamma_u \in \mathcal{R}$ in the base of the topology by
\[
\Gamma_u = \bigcap_{i=1}^{|u|} \pi_i^{-1}(\{ u_i \}).
\]

The set $\Sigma$ is finite and thus compact.
By Tychonoff's Theorem, $\Sigma^\cinfty$ is also compact.
Therefore $\Gamma_u$ is compact since it is homeomorphic to $\Sigma^\cinfty$
via the canonical map $\beta_u: \Sigma^\cinfty \to \Gamma_u$, $v \mapsto uv$.
\end{enumerate}
From (a), (b), and (c) \hyperref[thm:caratheodory]{Carathéodory's Extension Theorem} yields
a unique probability measure $Q$ on $(\Sigma^\cinfty, \Foo)$ such that
$Q(\Gamma_u) = \mu(\Gamma_u) = q(u)$ for all $u \in \Sigma^*$.
\end{proof}

Using \autoref{lem:semimeasure},
the proof of \autoref{thm:martingale-measure} is now straightforward.

\begin{proof}[Proof of \autoref{thm:martingale-measure}]
We define a function $q: \Sigma^* \to \mathbb{R}$, with
\[
q(u) := X_{|u|}(v) P(\Gamma_u)
\]
for any $v \in \Gamma_u$.
The choice of $v$ is irrelevant because $X_{|u|}$ is constant on $\Gamma_u$
since it is $\F_t$-measurable.
In the following,
we also write $X_t(u)$ if $|u| = t$ to simplify notation.

The function $q$ is non-negative because $X_t$ and $P$ are both non-negative.
Moreover, for any $u \in \Sigma^t$,
\[
     1
=    \E[X_t]
=    \int_{\Sigma^\cinfty} X_t dP
\geq \int_{\Gamma_u} X_t dP
=    P(\Gamma_u) X_t(u)
=    q(u).
\]
Hence the range of $q$ is a subset of $[0, 1]$.

We have $q(\epstr) = X_0(\epstr) P(\Gamma_\epstr) = \E[X_0] = 1$
since $P$ is a probability measure and
$\F_0 = \{ \emptyset, \Sigma^\cinfty \}$ is the trivial $\sigma$-algebra.
Let $u \in \Sigma^t$.
\begin{align*}
   \sum_{a \in \Sigma} q(ua)
&= \sum_{a \in \Sigma} X_{t+1}(ua) P(\Gamma_{ua})
 = \int_{\Gamma_u} X_{t+1} dP \\
&= \int_{\Gamma_u} \E[X_{t+1} \mid \F_t] dP
 = \int_{\Gamma_u} X_t dP
 = P(\Gamma_u) X_t(u)
 = q(u).
\end{align*}
By \autoref{lem:semimeasure},
there is a probability measure $Q$ on $(\Sigma^\cinfty, \Foo)$
such that $q(u) = Q(\Gamma_u)$ of all $u \in \Sigma^*$.
Therefore, for all $v \in \Sigma^\cinfty$ and
for all $t \in \SetN$ with $P(\Gamma_{v_{1:t}}) > 0$,
\[
  X_t(v)
= \frac{q(v_{1:t})}{P(\Gamma_{v_{1:t}})}
= \frac{Q(\Gamma_{v_{1:t}})}{P(\Gamma_{v_{1:t}})}.
\]
Moreover,
$Q$ is absolutely continuous with respect to $P$ on cylinder sets since
$P(\Gamma_u) = 0$ implies
\[
Q(\Gamma_u) = q(u) = X_{|u|}(u) P(\Gamma_u) = 0.
\qedhere
\]
\end{proof}

\subsection{Different Upcrossing inequalities and their tightness}
\label{app:tightness}

There are different versions of the upcrossing inequality in circulation.
Let $a < b$ and let $(X_t)_{t \in \SetN}$ be a martingale process.
Doob~\cite[VII§3 Thm.\ 3.3]{Doob:53} states
\begin{equation}\label{eq:upcrossing-inequality-Doob}
     \E[U_t(a, b)]
\leq \tfrac{1}{b - a} \E[\max\{ X_t - a, 0 \}].
\end{equation}
Durrett~\cite[Thm.\ 5.2.7]{Durrett:10} gives a slightly stronger version:
\begin{equation}\label{eq:upcrossing-inequality-Durrett}
     \E[U_t(a, b)]
\leq \tfrac{1}{b - a} \Big(
       \E[\max\{ X_t - a, 0 \}] - \E[\max\{ X_0 - a, 0 \}]
     \Big).
\end{equation}
We will prove tight
the version stated in \autoref{thm:upcrossing-inequality}~\cite[Thm.\ 1.1]{Xu12}:
\begin{equation}\label{eq:upcrossing-inequality-Xu}
     \E[U_t(a, b)]
\leq \tfrac{1}{b - a} \E[\max\{ a - X_t, 0 \}].
\end{equation}
For nonnegative martingales
we can estimate $\E[\max\{ a - X_t, 0 \}] \leq a$
to get a bound independent of $t$ from
the upcrossing inequality \eqref{eq:upcrossing-inequality-Xu}.
To get a bound independent of $t$ from
\eqref{eq:upcrossing-inequality-Doob} or
\eqref{eq:upcrossing-inequality-Durrett},
we look at the upcrossings of the martingale process $(-X_t)_{t \in \SetN}$,
which are the downcrossings of $(X_t)_{t \in \SetN}$.
The number of downcrossings differs from the number of upcrossings by at most $1$,
so we can conclude from \eqref{eq:upcrossing-inequality-Durrett},
\begin{align*}
      \E[U_t^X(a, b)]
&\leq \E[U_t^{-X}(-b, -a)] + 1 \\
&\leq \tfrac{1}{b - a} \Big(
        \E[\max\{ a - X_t, 0 \}] - \E[\max\{ a - X_0, 0 \}]
      \Big) + 1.
\end{align*}

The origin of the diversity in upcrossing inequalities stems from
the details of their proofs.
When we start betting
every time the process $(X_t)_{t \in \SetN}$ falls below $a$ and stop
every time it rises above $b$,
our gain at time $t$ is at least $(b - a) U_t(a, b)$
plus some amount $R$ that we gained or lost since we started betting last time
in case the last upcrossing has not yet completed.
Because we are betting on a martingale,
our expected gain is zero, hence $(b - a) \E[U_t(a, b)] = \E[-R]$.
The right hand sides of the equations \eqref{eq:upcrossing-inequality-Doob},
\eqref{eq:upcrossing-inequality-Durrett}, and
\eqref{eq:upcrossing-inequality-Xu}
arise from the way we estimate $R$ from below.
The inequality \eqref{eq:upcrossing-inequality-Xu} estimates $R$ by taking
into account any possible losses ignoring gains
since we last started betting at $a$.
Contrarily, \eqref{eq:upcrossing-inequality-Doob} estimates $R$ by taking
into account any possible gains ignoring losses
since we started betting at $a$.
In \eqref{eq:upcrossing-inequality-Durrett} we additionally suppose that we
are betting starting at time $0$ and take into account any losses before
$X_t$ falls below $a$ for the first time.

\begin{lemma}[Tightness Criterion for \eqref{eq:upcrossing-inequality-Xu}]
\label{lem:tightness-criterion}
Let $a < b$ and
let $(X_t)_{t \in \SetN}$ be a martingale such that
\begin{enumerate}[(a)]
\item $X_t$ does not assume any values between $a$ and $b$, and
\item all upcrossings are completed at $b$ and
all downcrossings are completed at $a$:
\[
X_{T_{2k}} = b
\quad\text{and}\quad
X_{T_{2k+1}} = a
\quad \forall k \in \SetN.
\]
\end{enumerate}
Then the inequality \eqref{eq:upcrossing-inequality-Xu} is tight, i.e.,
\[
  \E[U_t(a, b)]
= \tfrac{1}{b - a} \E[\max\{ a - X_t, 0 \}].
\]
\end{lemma}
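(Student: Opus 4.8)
The plan is to run the classical \emph{betting-strategy} (martingale transform) proof of the upcrossing inequality and to verify that, under hypotheses (a) and (b), the slack that is normally thrown away in that argument vanishes identically, so that the inequality \eqref{eq:upcrossing-inequality-Xu} becomes an equality.

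First I would introduce the predictable $\{0,1\}$-valued process $H_s := \mathbbm{1}[\,\exists\, k \geq 1.\ T_{2k-1} < s \leq T_{2k}\,]$: the gambler ``turns on'' one step after the process completes a downcrossing to $a$ and ``turns off'' the moment it completes the ensuing upcrossing to $b$. Since $\{T_{2k-1} < s\} = \{T_{2k-1} \leq s-1\}$ and $\{T_{2k} < s\}$ both lie in $\F_{s-1}$ (the $T_j$ are stopping times), we have $\{H_s = 1\} \in \F_{s-1}$, so $H$ is predictable and bounded. Hence the discrete stochastic integral $Y_t := (H \cdot X)_t = \sum_{s=1}^t H_s (X_s - X_{s-1})$ is again a $P$-martingale with $Y_0 = 0$, so $\E[Y_t] = 0$ for every $t$.

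The heart of the matter is a pointwise identity for $Y_t$, obtained by telescoping over the betting intervals $(T_{2k-1}, T_{2k}]$. Each upcrossing completed by time $t$ contributes $X_{T_{2k}} - X_{T_{2k-1}} = b - a$, by hypothesis (b); there are exactly $U_t(a,b)$ such intervals. If a bet is still open at time $t$, i.e.\ $T_{2j-1} \leq t < T_{2j}$ with $j = U_t(a,b)+1$, that interval contributes $X_t - X_{T_{2j-1}} = X_t - a$; since the bet is open the process has not reached $b$, so $X_t < b$, and hypothesis (a) then forces $X_t \leq a$, whence $X_t - a = -(a - X_t)^+$. If no bet is open at time $t$, then $t$ precedes $T_1$ or lies in some $[T_{2k}, T_{2k+1})$, and the defining infima give $X_t \geq a$ (the case $t = 0$ being covered by hypothesis (b), which pins down $X_0 = X_{T_0} = b$), so $(a - X_t)^+ = 0$ and there is no open-bet term. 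Combining the three cases,
\[
Y_t = (b-a)\,U_t(a,b) - (a - X_t)^+ .
\]
Taking expectations and using $\E[Y_t] = 0$ yields $(b-a)\,\E[U_t(a,b)] = \E[(a - X_t)^+]$, i.e.\ the asserted equality.

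The martingale-transform bookkeeping is routine; the only delicate point is the case analysis in the displayed identity — specifically, confirming that while a bet is open the process stays at or below $a$ (this is precisely where both (a) and the ``bet still open'' status are used) and that outside every bet it stays at or above $a$, so that the term removed is exactly $(a - X_t)^+$ with no inequality incurred along the way. A sentence of attention is also needed for the boundary indices $T_0, T_1$ and the value $X_0$, all of which are controlled by hypothesis (b).
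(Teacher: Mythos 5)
Your proof is correct and follows essentially the same strategy as the paper's: both run the upcrossing betting-strategy argument and observe that hypotheses (a) and (b) eliminate the slack, yielding the pointwise identity $Y_t = (b-a)\,U_t(a,b) + \min\{X_t - a, 0\}$ and hence equality after taking expectations (the paper writes the same martingale $D_t$ directly as the telescoped sum $\sum_k (X_{\min\{t,T_{2k}\}} - X_{\min\{t,T_{2k-1}\}})$ rather than as the transform $(H \cdot X)_t$, but these coincide term by term). The only difference is cosmetic bookkeeping.
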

\begin{proof}
This proof essentially follows the proof of Doob's Upcrossing Inequality
given in \cite{Xu12}.
Define the process
\[
   D_t(v)
:= \sum_{k=1}^\infty \big(
     X_{\min\{t, T_{2k}\}}(v) - X_{\min\{t, T_{2k-1}\}}(v)
   \big).
\]
Since all but finitely many terms in the infinite sum are zero,
$D_t$ is well-defined.

The process $(D_t)_{t \in \SetN}$ is martingale:
\[
  \E[D_{t+1} \mid \F_t]
= \sum_{k=1}^\infty \big(
    \E[ X_{\min\{t+1, T_{2k}\}} \mid \F_t]
    - \E[ X_{\min\{t+1, T_{2k-1}\}} \mid \F_t]
  \big).
\]
Fix some $i \in \SetN$.
Conditioning on $\F_t$, we know whether $T_i > t$ or $T_i \leq t$
since $T_i$ is a stopping time.
In case $T_i > t$ we have
$t + 1 \leq T_i$, implying
$X_{\min\{t+1, T_i\}} = X_{t+1}$ and thus
$\E[ X_{\min\{t+1, T_i\}} \mid \F_t] = \E[ X_{t+1} \mid \F_t] = X_t = X_{\min\{t, T_i\}}$
because $(X_t)_{t \in \SetN}$ is martingale.
In case $T_i \leq t$ we have
$X_{\min\{t+1, T_i\}} = X_{T_i}$ and hence
$\E[ X_{\min\{t+1, T_i\}} \mid \F_t] = \E[ X_{T_i} \mid \F_t] = X_{T_i} = X_{\min\{t, T_i\}}$.
In both cases we get
$\E[X_{\min\{t+1, T_i\}} \mid \F_t] = X_{\min\{t, T_i\}}$,
therefore $\E[D_{t+1} \mid \F_t] = D_t$.

Let $t \in \SetN$ be some time step, and fix $v \in \Sigma^\omega$.
Let $U_t := U_t(a, b)$ denote
the number of upcrossings that have been completed up to time $t$.
We distinguish the following two cases.
\begin{enumerate}[(i)]
\item There is an incomplete upcrossing,
	$T_{2U_t+1} \leq t < T_{2U_t+2}$.
\item There is no incomplete upcrossing,
	$T_{2U_t} \leq t < T_{2U_t+1}$.
\end{enumerate}

In case (i) we have $X_t < b$ and therefore $X_t \leq a$ by assumption (a).
With assumption (b) we get
\begin{equation}\label{eq:D1}
\begin{aligned}
   D_t
&= \sum_{k=1}^{U_t} (X_{T_{2k}} - X_{T_{2k-1}}) + X_t - X_{T_{2U_t+1}} \\
&= \sum_{k=1}^{U_t} (b - a) + X_t - a
 = (b - a)U_t + X_t - a.
\end{aligned}
\end{equation}

In case (ii) we have $X_t > a$.
With assumption (b) we get
\begin{equation}\label{eq:D2}
  D_t
= \sum_{k=1}^{U_t} (X_{T_{2k}} - X_{T_{2k-1}})
= \sum_{k=1}^{U_t} (b - a)
= (b - a)U_t.
\end{equation}
From \eqref{eq:D1} and \eqref{eq:D2} follows that
\[
D_t = (b - a) U_t + \min\{ X_t - a, 0 \}.
\]
We have $D_0 = 0$ and since $(D_t)_{t \in \SetN}$ is martingale
it follows that $\mathbb{E}[D_t] = 0$.
Hence
\[
  (b - a) \mathbb{E}[U_t]
= \mathbb{E}[-\min\{ X_t - a, 0 \}]
= \mathbb{E}[\max\{ a - X_t, 0 \}].
\qedhere
\]
\end{proof}

\begin{theorem}[Tightness of Doob's Upcrossing Inequality]
\label{thm:tightness-Doob}
Let $P$ be a probability measure with perpetual entropy.
For all $b > a > 0$
there is a nonnegative martingale $(X_t)_{t \in \SetN}$ with $X_0 = a$
that makes \hyperref[thm:upcrossing-inequality]{Doob's Upcrossing Inequality} tight
for all $t > 0$:
\[
  0
< \E[U_t^X(a, b)]
= \tfrac{1}{b - a} \E[\max\{ a - X_t, 0 \}].
\]
\end{theorem}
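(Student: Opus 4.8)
The plan is to derive the theorem from the tightness criterion of \autoref{lem:tightness-criterion}: by that lemma it suffices to construct a nonnegative $P$-martingale $(X_t)_{t\in\SetN}$ with $X_0 = a$ such that (a) $X_t$ never lands strictly between $a$ and $b$, and (b) every completed upcrossing ends at exactly $b$ and every completed downcrossing ends at exactly $a$. \autoref{lem:tightness-criterion} then turns \hyperref[thm:upcrossing-inequality]{Doob's Upcrossing Inequality} (in the form \eqref{eq:upcrossing-inequality-Xu}) into an equality, and the remaining strict inequality $0 < \E[U_t^X(a,b)]$ becomes, through that same equality, the statement $P[X_t < a] > 0$, which the construction will also supply.

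For the construction I would follow the proof of \autoref{thm:lower-bound} almost verbatim, replacing the shrinking band $[1-f(M_t),1+f(M_t)]$ by the fixed band $[a,b]$ and taking $X_0 := a$. First reduce to $\Sigma=\{0,1\}$ by grouping symbols and pick for each $u$ a symbol $a_u$ with $p_u := P(\Gamma_{ua_u}\mid\Gamma_u)\leq\tfrac12$, arranging (via the argument of \autoref{claim:perpetual-entropy}, which uses \autoref{def:perpetual-entropy}) that $p_u>\eps$ infinitely often $P$-almost surely. Then set $X_{t+1}:=X_t$ whenever $p_u=0$, and otherwise split into three regimes mirroring those of \autoref{thm:lower-bound}: (i) while $X_t\geq b$, reading $a_u$ raises $X_t$ to $X_t+\tfrac{1-p_u}{p_u}(X_t-a)$ and reading $\neq a_u$ drops it to exactly $a$, completing a downcrossing; (ii) while $X_t\leq a$ and $X_t\geq\gamma_t:=\tfrac{p_u}{1-p_u}(b-X_t)$, reading $a_u$ jumps $X_t$ to exactly $b$, completing an upcrossing, and reading $\neq a_u$ sends it to $X_t-\gamma_t\leq a$; (iii) while $X_t\leq a$ and $X_t<\gamma_t$ (so $X_t$ is too close to $0$ to reach $b$ in one step without going negative), take a hedged step $d_t:=\min\{\tfrac{p_u}{1-p_u}X_t,\,a-X_t\}$, rising to $X_t+d_t\leq a$ on the $\neq a_u$ branch and falling to $X_t-\tfrac{1-p_u}{p_u}d_t\geq 0$ on the $a_u$ branch. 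An $M_t$-type counter records completed upcrossings exactly as in \autoref{thm:lower-bound}.

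The verification then runs parallel to the claims in the proof of \autoref{thm:lower-bound}: the three conditional-expectation identities show that $(X_t)_{t\in\SetN}$ is a martingale; the $\min$ defining $d_t$ together with the case split give $0\leq X_t$ and $X_{t+1}\notin(a,b)$, so hypotheses (a) and the exact-landing part of (b) hold; and, as in \autoref{claim:X-does-not-converge}, perpetual entropy keeps the process from converging, so that from any state it reaches $b$ (respectively returns to $a$) with positive probability, whence $P[X_t<a]>0$ and $\E[U_t^X(a,b)]>0$. \autoref{lem:tightness-criterion} then yields the equality. The step I expect to be the main obstacle is exactly the one flagged in \autoref{thm:lower-bound}: keeping the process simultaneously nonnegative and out of the open band $(a,b)$. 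The hedged step (iii) is the delicate point — one has to verify $d_t\geq0$ and $X_t+d_t\leq a$ at once, so the process neither goes negative nor overshoots $a$ from below — and one has to rule out the process being permanently trapped below $a$, which is precisely where \autoref{def:perpetual-entropy} is needed, mirroring its role in \autoref{thm:lower-bound}. A minor additional nuisance is the stopping-time bookkeeping when $X_0=a$ sits exactly on the lower edge of the band, so that the first upcrossing is counted correctly.
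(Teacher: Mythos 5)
Your proposal takes essentially the same route as the paper: it derives the theorem from \autoref{lem:tightness-criterion} by running the construction of \autoref{thm:lower-bound} with a constant (non-summable) band width and verifying the criterion's hypotheses via the same claims. The only difference is cosmetic --- the paper sets $f\equiv(b-a)/(b+a)$, pins $X_0=X_1=a/c$ with $c=(a+b)/2$, and then scales to $Y_t=cX_t$, whereas you unroll that scaling and write the same three-case recursion directly in terms of $a$ and $b$ --- and you correctly anticipate the two delicate points (the hedged step~(iii) and the start-on-the-boundary bookkeeping) that the paper also addresses.
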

We added the requirement $\E[U_t^X(a, b)] > 0$,
because otherwise the constant process $X_t = a$
would trivially make the inequality tight.

\begin{proof}
Fix $c := (a + b) / 2$ and
set $f(t) := (b - a)/(b + a) = (b - a) / (2c)$;
then $f(t) < 1$ because $a > 0$.
Define $X_0 := X_1 := a / c$.
The function $f$ is not summable,
but we nonetheless apply the same construction as in \autoref{thm:lower-bound}:
for $t > 1$ let $X_t$ be defined as in the proof of \autoref{thm:lower-bound}.
We prove that
the scaled process $Y_t := c \cdot X_t$
makes the inequality \eqref{eq:upcrossing-inequality-Xu} tight.
Since $(X_t)_{t \in \SetN}$ does upcrossings between
$1 - f(M_t) = a / c$ and $1 + f(M_t) = b / c$,
the scaled process $(Y_t)_{t \in \SetN}$ does upcrossings between $a$ and $b$.

By \autoref{claim:is-martingale} $(X_t)_{t \in \SetN}$ is a martingale process,
and by \autoref{claim:nonnegative-and-expectation} $X_t \geq 0$,
hence this also applies to the scaled process $(Y_t)_{t \in \SetN}$.
We check the criterion given in \autoref{lem:tightness-criterion}.
\begin{enumerate}[(a)]
\item
	This holds for $X_t$ for $t = 0$ and $t = 1$
	according to the definition of $(X_t)_{t \in \SetN}$.
	For $t > 1$ this follows from \autoref{claim:X_t-jumps}
	since $T_1 = 1$ because $X_1 = a / c$.

\item From \autoref{claim:bounds-on-X_t+1},
	this is fulfilled in cases (i) and (ii).
	In case (iii) we have $X_{t+1} \leq 1 - f(M_t)$,
	so the process cannot do an (up-)crossing.
\end{enumerate}
It remains to show that $\E[U_t^Y(a, b)] > 0$.
By \autoref{claim:gamma-is-small} $X_0 > \gamma_0$,
so for all $v \in \Sigma^\omega$ with $v_0 = a_\epstr$
we have $X_1 = 1 + f(M_t)$,
therefore $U^X_1(a / c, b / c)(v) \geq 1$.
Since $P(\Gamma_{a_\epstr}) > 0$ by assumption,
this yields $\E[U^X_1(a / c, b / c)] > 0$
and hence $\E[U^Y_t(a, b)] > 0$ for all $t > 1$.
\end{proof}

The process from \autoref{thm:tightness-Doob} also gives a tightness result
as $t \to \infty$.
A weaker lower bound
$\E[U^X(a, b)] \geq \tfrac{a + b}{8(b - a)} - \tfrac{1}{2}$
can be derived directly from \autoref{cor:lower-bound}
using $\delta := 1/2$ and
\[
f(i) :=
\begin{cases}
\tfrac{b - a}{b + a}, &\text{if } i \leq \tfrac{b + a}{4(b - a)}, \\
0,                    &\text{otherwise},
\end{cases}
\]
and scaling the process with $(b+a)/2$.

\begin{corollary}[Asymptotic tightness of Doob's Upcrossing Inequality]
\label{cor:asymptotic-tightness-Doob}
Let $P$ be a probability measure with perpetual entropy.
For all $b > a > 0$
there is a nonnegative martingale $(X_t)_{t \in \SetN}$ with $X_0 = a$
such that
\[
  \E[U^X(a, b)]
= \tfrac{a}{b - a}.
\]
\end{corollary}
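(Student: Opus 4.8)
The plan is to reuse the martingale $(X_t)_{t \in \SetN}$ constructed in the proof of \autoref{thm:tightness-Doob} and let $t \to \infty$ in the exact identity it satisfies. Recall that this process is a nonnegative martingale with $X_0 = a$ which does all of its upcrossings from $a$ to $b$, never assumes a value strictly between $a$ and $b$, and fulfils, for every finite $t > 0$ (by \autoref{thm:tightness-Doob}, equivalently by \autoref{lem:tightness-criterion}),
\[
\E[U_t^X(a,b)] = \tfrac{1}{b - a}\,\E[\max\{a - X_t, 0\}].
\]
I would now pass to the limit $t \to \infty$ on both sides of this equality.

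On the left-hand side, $U_t^X(a,b)$ is nondecreasing in $t$ with $U_t^X(a,b) \uparrow U^X(a,b)$, so the monotone convergence theorem gives $\E[U_t^X(a,b)] \to \E[U^X(a,b)]$. On the right-hand side it suffices to show that $X_t \to 0$ almost surely: since $X_t \ge 0$ we have $0 \le \max\{a - X_t, 0\} \le a$, so once $X_t \to 0$ a.s. the bounded convergence theorem gives $\E[\max\{a - X_t, 0\}] \to a$. Combining the two limits then yields $\E[U^X(a,b)] = \tfrac{a}{b-a}$, which is the assertion.

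So the only substantial point is the almost-sure convergence $X_t \to 0$. As a nonnegative martingale, $(X_t)$ converges almost surely to some $X_\omega \ge 0$ by the Martingale Convergence Theorem. Since the process never takes a value in $(a,b)$ (criterion (a) of \autoref{lem:tightness-criterion}, which is verified in the proof of \autoref{thm:tightness-Doob}), we get $X_\omega \le a$ or $X_\omega \ge b$ almost surely. It remains to rule out the event $\{X_\omega > 0\}$, and this is precisely the content of the argument in \autoref{claim:X-does-not-converge}: using only that $P$ has perpetual entropy, one shows that the underlying unscaled process cannot converge to any value outside $\{0,1\}$, so the scaled process converges to a value in $\{0, c\}$ with $c = (a+b)/2$; since the scaled process never enters $(a,b) \ni c$, its limit is $0$ almost surely. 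I expect this identification of the limit as $0$ to be the main obstacle, the subtlety being that \autoref{thm:tightness-Doob} uses the non-summable constant function $f \equiv (b-a)/(b+a)$, so one must check that the argument of \autoref{claim:X-does-not-converge} carries over with its numerical constants re-tuned to the bound $f < 1$; once the limit is pinned down, the remaining interchange of limit and expectation is routine.
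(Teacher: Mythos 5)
Your proof is correct and follows essentially the same route as the paper's: pass to the limit in the exact identity $\E[U_t(a,b)]=\tfrac{1}{b-a}\E[\max\{a-X_t,0\}]$ from \autoref{thm:tightness-Doob}, and pin down the almost-sure limit via \autoref{claim:X-does-not-converge} and \autoref{claim:X_t-jumps}. The one place you genuinely diverge is a (correct) simplification on the left-hand side: the paper establishes $\E[U_t^Y(a,b)]\to\E[U^Y(a,b)]$ by first proving uniform integrability of $U_t^Y(a,b)$ through a tail estimate from \hyperref[thm:Dubins-inequality]{Dubins' Inequality} and then invoking a Vitali-type convergence theorem, whereas you observe that $U_t$ is pointwise nondecreasing and apply the monotone convergence theorem directly. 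On the right-hand side your bounded-convergence step is exactly the paper's uniform-integrability-from-boundedness argument. Finally, the caveat you raise is real: the proof of \autoref{claim:X-does-not-converge} as written uses the numerical bound $f(M_t)<1/4$, which came from $\sum f(i)\le\delta/2<1/4$ and does not automatically hold for the constant $f\equiv(b-a)/(b+a)$ used in \autoref{thm:tightness-Doob}; the paper glosses over this as well, and the argument does survive under the weaker hypothesis $f<1$ once the constants $\tfrac58$, $\tfrac18$ in case (iii) are re-tuned to depend on $1-f$.
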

\begin{proof}
Consider the process $(Y_t)_{t \in \SetN}$
from the proof of \autoref{thm:tightness-Doob}.
Since $(X_t)_{t \in \SetN}$ is a nonnegative martingale,
the Martingale Convergence Theorem~\cite[Thm.\ 5.2.8]{Durrett:10} implies that
$(X_t)_{t \in \SetN}$ converges almost surely to a limit $X_\omega \geq 0$.
This limit can only be $0$ or $1$ by \autoref{claim:X-does-not-converge}.
Since $f(t) = (b - a)/(2c) > 0$ for all $t$,
$(X_t)_{t \in \SetN}$ does not converge to $1$ by \autoref{claim:X_t-jumps}
($T_1 = 1$ by construction).
Thus $X_\omega = 0$ almost surely,
but this generally does \emph{not} imply
$\lim_{t \to \infty} \E[X_t] = 0$~\cite[Ex.\ 5.2.3]{Durrett:10}.
However, $\max\{a - Y_t, 0\} = \max\{a - c X_t, 0\}$
is bounded, therefore uniformly integrable.
By \cite[Thm.\ 5.5.2]{Durrett:10}
(a generalization of the dominated convergence theorem),
\begin{equation}\label{eq:limit-is-a}
  \lim_{t \to \infty} \E[\max\{a - Y_t, 0\}]
= \E[\max\{a - c X_\omega, 0 \}]
= a.
\end{equation}
By \hyperref[thm:Dubins-inequality]{Dubins' Inequality},
\begin{align*}
      \E[U_t^Y(a,b) \cdot \mathbbm{1}_{U_t^Y(a,b) \geq k}]
&=    \sum_{i=k}^\infty P[U_t^Y(a,b) \geq i]
 \leq \sum_{i=k}^\infty a^i b^{-i} \\
&=    a^k b^{-k} \tfrac{b}{b - a}
\to  0 \text{ as } k \to \infty,
\end{align*}
hence $U_t^Y(a,b)$ is also uniformly integrable and
by the same theorem~\cite[Thm.\ 5.5.2]{Durrett:10} and \eqref{eq:limit-is-a},
\[
  (b - a) \E[U^Y(a,b)]
= \lim_{t \to \infty} (b - a) \E[U_t^Y(a,b)]
= \lim_{t \to \infty} \E[\max\{ a - Y_t, 0 \}]
= a.
\qedhere
\]
\end{proof}

The same process can also be used to show that
\hyperref[thm:Dubins-inequality]{Dubins' Inequality} is tight.
For a specific underlying probability measure
a proof of this is sketched by Dubins~\cite[Thm.\ 12.1]{Dubins:62}.
We prove a version that is agnostic with respect to the probability measure $P$.

\begin{corollary}[Tightness of Dubins' Inequality]
\label{cor:tightness-Dubins}
Let $P$ be a probability measure with perpetual entropy.
For all $b > a > 0$
there is a nonnegative martingale $(X_t)_{t \in \SetN}$ with $X_0 = a$
that makes \hyperref[thm:Dubins-inequality]{Dubins' Inequality} tight:
\[
  P[U^X(a, b) \geq k]
= \frac{a^k}{b^k}
\]
\end{corollary}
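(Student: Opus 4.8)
The plan is to reuse the martingale already constructed in the proof of \autoref{thm:tightness-Doob}: take $c := (a+b)/2$, the constant function $f \equiv (b-a)/(b+a)$ (which is $<1$ since $a>0$), set $X_0 := X_1 := a/c$, run the dynamics of \autoref{thm:lower-bound} for $t > 1$, and scale to $Y_t := c\,X_t$. Then $(Y_t)_{t \in \SetN}$ is a nonnegative martingale with $Y_0 = a$, and since $c(1-f) = a$ and $c(1+f) = b$ we have $\frac{1-f}{1+f} = \frac{a}{b}$ and $U^Y(a,b) = U^X(1-f,1+f)$. So it suffices to show $P[U^X(1-f,1+f) \geq k] = (a/b)^k$, which I would obtain by squeezing between matching upper and lower bounds.

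For the upper bound I would simply apply \hyperref[thm:Dubins-inequality]{Dubins' Inequality} to $(Y_t)_{t \in \SetN}$ with the substitutions $c \mapsto (a+b)/2$ and $\eps \mapsto (b-a)/2$, so that $c - \eps = a$ and $c + \eps = b$:
\[
     P[U^Y(a,b) \geq k]
\leq \left( \tfrac{a}{b} \right)^k \E\left[ \min\left\{ \tfrac{Y_0}{a}, 1 \right\} \right]
=    \left( \tfrac{a}{b} \right)^k ,
\]
using $Y_0 = a$.

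For the lower bound I would revisit the proof of \autoref{claim:single-upcrossing-bound} inside \autoref{thm:lower-bound}. That argument builds a bounded nonnegative stopped martingale whose terminal value lies in $\{0, 1+f\}$ almost surely (this uses \autoref{claim:X-does-not-converge} and \autoref{claim:X_t-jumps}), and \hyperref[thm:optional-stopping]{Optional Stopping} applied to the complementary process pins down $\E[X_T \mid \F_{t_0}] \geq 1 - f$ at a round's starting time $t_0$ with $X_{t_0} = 1-f$, forcing the conditional probability of then reaching $1+f$ to be at least $\frac{1-f}{1+f}$ (the published bound $1 - 2f(m)$ is only a weakening of $\frac{1-f(m)}{1+f(m)}$). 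Writing $E_{m,m} := \{ U^X(1-f,1+f) \geq m \}$, one has $E_{m,m} \subseteq E_{m,m-1} = E_{m-1,m-1}$ (the equality being immediate for constant $f$, cf.\ \autoref{claim:E-increasing}), and the induction hypothesis keeps $P(E_{m-1,m-1}) \geq (a/b)^{m-1} > 0$, so \autoref{claim:single-upcrossing-bound} applies and is non-vacuous. Therefore, by induction from $P(E_{0,0}) = 1$,
\[
     P(E_{m,m})
=    P(E_{m,m} \mid E_{m,m-1})\, P(E_{m-1,m-1})
\geq \tfrac{1-f}{1+f}\, P(E_{m-1,m-1})
\geq \left( \tfrac{a}{b} \right)^m .
\]
Since $P[U^Y(a,b) \geq k] = P(E_{k,k})$, the two bounds together give the claimed equality.

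I expect the main obstacle to be not a computation but checking that the $\{0, 1+f\}$ dichotomy for the stopped process (i.e.\ \autoref{claim:X-does-not-converge} and \autoref{claim:X_t-jumps}) genuinely survives the switch to a constant, non-summable $f$; this is exactly where side conditions on the magnitude of the oscillations enter, and it is already part of the verification carried out in \autoref{thm:tightness-Doob}. As a self-contained alternative that sidesteps \hyperref[thm:Dubins-inequality]{Dubins' Inequality} entirely, I note that the stopped martingale in the lower-bound argument is bounded and therefore uniformly integrable, so Optional Stopping is an \emph{equality} and the per-round upcrossing probability is exactly $\frac{1-f}{1+f}$, which delivers $P[U^Y(a,b) \geq k] = (a/b)^k$ directly.
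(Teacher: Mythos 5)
Your proposal is correct, and it proves the corollary by a genuinely different route than the paper. The paper takes an indirect squeeze: it invokes \autoref{cor:asymptotic-tightness-Doob} to get the exact value $\E[U^X(a,b)] = a/(b-a)$, expands $\E[U^X(a,b)] = \sum_{k\geq 1} P[U^X(a,b) \geq k]$, bounds each summand above by $(a/b)^k$ via \hyperref[thm:Dubins-inequality]{Dubins' Inequality}, and observes that the resulting geometric series sums back to $a/(b-a)$, so every inequality in the chain is forced to be an equality. You instead match Dubins' upper bound with an explicit term-by-term lower bound, obtained by reading off the sharper per-round conditional probability $\tfrac{1-f(m)}{1+f(m)}$ that \autoref{claim:single-upcrossing-bound}'s proof actually establishes before the paper weakens it to $1-2f(m)$, and iterating via \autoref{claim:E-increasing}. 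What the paper's route buys is brevity given that \autoref{cor:asymptotic-tightness-Doob} is already in hand; what yours buys is independence from the uniform-integrability and dominated-convergence machinery used in \autoref{cor:asymptotic-tightness-Doob}, and, via your ``self-contained alternative,'' a proof that dispenses with \hyperref[thm:Dubins-inequality]{Dubins' Inequality} altogether by noting that Optional Stopping on the bounded stopped process gives an equality, so the per-round probability is \emph{exactly} $a/b$. You correctly flag the one delicate point shared by both routes: the $\{0, 1+f\}$ dichotomy for the stopped martingale, which rests on \autoref{claim:X_t-jumps} and \autoref{claim:X-does-not-converge} and must be re-examined for constant non-summable $f$; this re-verification is exactly what the paper carries out inside \autoref{thm:tightness-Doob}, so your proposal and the paper stand or fall on the same foundation there.
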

\begin{proof}
We use \hyperref[thm:Dubins-inequality]{Dubins' Inequality}
on the process $(X_t)_{t \in \SetN}$
from \autoref{cor:asymptotic-tightness-Doob};
\[
     \E[U^X(a, b)]
=    \sum_{k=1}^\infty P[U^X(a, b) \geq k]
\leq \sum_{k=1}^\infty \frac{a^k}{b^k}
=    \frac{a}{b - a}
=    \E[U^X(a, b)],
\]
so the involved inequalities must in fact be equalities.
\end{proof}

\subsection{Davis' Lemma}
\label{app:davis-lemma}

We do not reproduce Davis' proof in detail.
It needs to be adapted to the martingale setting,
which is quite cumbersome to do.
Below we give an outline of the proof.

\begin{proof}[Proof sketch for \autoref{lem:alternations-upcrossings}]
\label{prf:alternations-upcrossings}
This proof relies on the observation that
the probability that $X_t$ rises (falls) by at least $\alpha$
does not decrease as $\alpha$ decreases.
Formally,
we argued in the proof of \autoref{thm:upper-bound}
that $P(X_T \geq x + \alpha \mid X_t = x) \leq \frac{x}{x + \alpha}$
using the \hyperref[thm:optional-stopping]{Optional Stopping Theorem}.
Since $(X_t)_{t \in \SetN}$ is bounded by $1$ from above,
the same argument can be carried out for the process $(1 - X_t)_{t \in \SetN}$,
giving an analogous bound
$P(X_T \leq x - \alpha \mid X_t = x) \leq \frac{1 - x}{1 - x + \alpha}$
when $(X_t)_{t \in \SetN}$ decreases.
These bounds are tight.

The idea of the proof is to define a martingale process $(Z_t)_{t \in \SetN}$;
the process defined by $Y_t := X_t + Z_t$ is then a martingale.
We need to show that $(Y_t)_{t \in \SetN}$ has the desired properties:
$0 \leq Y_t \leq 1$ and
the probability of having at least $2k$ $2\alpha$-alternations of $(X_t)_{t \in \SetN}$
does not exceed the probability of having at least $k$ $\alpha$-upcrossings of $(Y_t)_{t \in \SetN}$.

There are two sources of misalignment between
$2\eps$-alternations and $\eps$-upcrossings;
we consider them in turn.

First, drift:
if the martingale $(X_t)_{t \in \SetN}$ overshoots the target
and becomes larger than $X_{T'_{2k+1}} + \alpha$
or smaller than $X_{T'_{2k}} - \alpha$,
it changes the target in subsequent alternations.
Without loss of generality, consider the first case.
Suppose we are in time step $t$, have observed $u \in \Sigma^t$ and
$2k + 1$ alternations have been completed,
i.e., $T'_{2k+1} \leq t < T'_{2k+2}$.
By observing a symbol $a \in \Sigma$,
we would have $X_{t+1}(ua) \geq X_{T'_{2k+1}}(u) + \alpha$
with a possible overshoot $\gamma := X_{t+1}(ua) - (X_{T'_{2k+1}} + \alpha)$.
To compensate, we set $Z_{t+1}(ua) = Z_{t}(u) - \gamma$ and
$Z_{t+1}(ub) \geq Z_t$ appropriately for $b \in \Sigma \setminus \{ a \}$ such that
$Z_t$ fulfills the martingale condition (b) of \autoref{def:martingale}.
Removing the overshoots from the martingale
makes upcrossings and alternations coincide,
i.e. $A_t^Y(\alpha) = 2 U_t^Y(c - \alpha/2, c + \alpha/2)$
for a suitable constant $c$, which we will discuss below.
According to the aforementioned observation,
the new martingale is at least as likely to complete the alternation as the old one,
since we have reduced the distance needed to be traveled.

Second, the initial value $Y_0$.
Let $c \in (\alpha/2, 1 - \alpha/2)$ and
define $Z_0 := c + \alpha/2 - X_0$.
The constant $c$ denotes the center of the alternations,
i.e., $Y_n$ alternates between $c - \alpha/2$ and $c + \alpha/2$,
since $Y_0 = X_0 + Z_0 = c + \alpha/2$.
What value should we assign to $c$?
Since we only care about cases where the number of alternations is even,
$c = 1/2$ maximizes the probability of successful upcrossings~\cite[Lem.\ 7]{Davis:13}.
This intuitively makes sense:
there is an equal number of up- and downcrossings and
the probability of each of them being successful depends on the process'
distance from $0$ or $1$ respectively.

At this point we have a martingale process $(Y_t)_{t \in \SetN}$
that is bounded between $0$ and $1$, and
upcrossings and alternations coincide:
$A_t^Y(\alpha) = 2 U_t^Y(\frac{1 - \alpha}{2}, \frac{1 + \alpha}{2})$.
It remains to show that the probability of at least $k$ alternations
has not decreased compared to the process $(X_t)_{t \in \SetN}$.
By construction, this is already to case for single down- and upcrossings.
However, there could be cases where
the drift that we removed from the process would cause us
to move to a region where successful alternations are more likely.
But since we centered the process optimally,
this is not possible.

There is one other technical problem that we glossed over:
we have to make sure that
the process $(Y_t)_{t \in \SetN}$ exceeds neither $0$ nor $1$;
We have to stop the process at these points.
Moreover, if the process $Y_t$ reaches $1 + Z_t$ or
$Z_t$ but its value is in $(0, 1)$ instead of stopping it,
we switch to a random walk until we `get back on track'.
%
\renewcommand{\qedsymbol}{$\Diamond$}
\end{proof}
\fi

\end{document}